\newcommand*\dotp{\mathpalette\dotp@{.5}}
\newcommand*\dotp@[2]{\mathbin{\vcenter{\hbox{\scalebox{#2}{$\m@th#1\bullet$}}}}}
\newtheorem{definition}{Definition}[section]
\newtheorem*{definition*}{Definition}
\newtheorem*{fact*}{Fact}
\newtheorem{lemma}[definition]{Lemma}
\newtheorem{assum}{Assumption}
\newtheorem*{theorem*}{Theorem}
\newtheorem*{lemma*}{Lemma}
\newtheorem*{proposition*}{Proposition}
\newtheorem{assumption*}{Assumption}
\newtheorem{condition*}{Condition}
\newtheorem{exercise*}{Exercise}
\newtheorem*{example*}{Example}
\newtheorem*{remark}{Remark}
\newcommand{\red}[1]{\color{red} #1 \color{black}}
\icmltitlerunning{Submission and Formatting Instructions for ICML 2022}
\begin{document}
\onecolumn
\title{Sample and Communication-Efficient Decentralized Actor-Critic Algorithms}
\icmlsetsymbol{equal}{*}

\author[1]{\textit{Ziyi Chen}}
\author[1]{\textit{Yi Zhou}}
\author[1]{\textit{Rongrong Chen}}
\author[2]{\textit{Shaofeng Zou}}

\affil[1]{Department of Electrical and Computer Engineering, University of Utah, Salt Lake City, UT, US}
\affil[2]{Department of Electrical Engineering, University at Buffalo, Buffalo, NY, US}

\affil[1]{\small {Email: \{u1276972,yi.zhou\}@utah.edu}, rchen@eng.utah.edu}
\affil[2]{\small {Email: szou3@buffalo.edu}}
\maketitle


\doparttoc 
\faketableofcontents 

\begin{abstract}
Actor-critic (AC) algorithms have been widely used in decentralized multi-agent systems to learn the optimal joint control policy. However, existing decentralized AC algorithms either need to share agents' sensitive information or lack communication-efficiency. In this work, we develop decentralized AC and natural AC (NAC) algorithms that avoid sharing agents' local information and 
are sample and communication-efficient. In both algorithms, agents share only noisy {rewards} and use mini-batch {local policy gradient} updates to improve sample and communication efficiency. Particularly for decentralized NAC, we develop a decentralized Markovian SGD algorithm with an adaptive mini-batch size to efficiently compute the natural policy gradient. Under Markovian sampling and linear function approximation, we prove that the proposed decentralized AC and NAC algorithms achieve the state-of-the-art sample complexities $\mathcal{O}(\epsilon^{-2}\ln\epsilon^{-1})$ and $\mathcal{O}(\epsilon^{-3}\ln\epsilon^{-1})$, respectively, and achieve an improved communication complexity $\mathcal{O}(\epsilon^{-1}\ln\epsilon^{-1})$. Numerical experiments demonstrate that the proposed algorithms achieve lower sample and communication complexities than the existing decentralized AC algorithms.
\end{abstract}

\section{Introduction}


Multi-agent reinforcement learning (MARL) has achieved great success in various application domains, including control \cite{Yanmaz2017,chalaki2020hysteretic,venturini2021distributed}, robotics \cite{Yan2013}, wireless sensor networks \cite{Krishnamurthy2008,yuan2020towards}, intelligent systems \cite{zhang2021intelligent}, etc. In MARL, a set of fully decentralized agents interact with a dynamic environment following their own policies and collect local rewards, and their goal is to collaboratively learn the optimal joint policy that achieves the maximum expected accumulated reward. 

Classical policy optimization algorithms have been well developed and studied, e.g., policy gradient (PG) \cite{Sutton-PG}, actor-critic (AC) \cite{konda2000actor} and natural actor-critic (NAC) \cite{peters2008natural,bhatnagar2009natural}. In particular, AC-type algorithms are more computationally tractable and efficient as they
take advantages of both policy gradient and value-based updates. However, in the multi-agent setting, decentralized AC is more challenging to design compared with the centralized AC, as the algorithm updates involve sensitive agent information, e.g., local actions, rewards and policies, which must {be} kept locally in the decentralized learning process.
In the existing designs of decentralized AC, the agents need to share either their local actions \cite{zhang2018fully,zhang2018networked,bono2018cooperative,perolat2018actor,zhang2019distributed,lin2019communication,heredia2019distributed,lin2019asynchronous,chen2020delay} or local rewards \cite{foerster2018counterfactual,ma2021modeling,lyu2021contrasting} with their neighbors, and hence are not desired. This issue is addressed by Algorithm 2 of \cite{zhang2018fully} at the cost of learning a parameterized model to estimate the averaged reward, yet this approach requires extra learning effort and the reward estimation can be inaccurate. 
Moreover, existing decentralized AC algorithms are not sample and communication-efficient, and do not have finite-time convergence guarantee, especially under the 
practical Markovian sampling setting. 
Therefore, we aim to address the following important question.
\begin{itemize}[leftmargin=*,topsep=0pt]
	\item {\em Q1: Can we develop a decentralized AC algorithm that is convergent, sample and communication-efficient, {and avoids sharing agents' local actions and policies}?}
\end{itemize}
On the other hand, as an important variant of the decentralized AC, decentralized NAC algorithm has not been formally developed and rigorously analyzed in the existing literature. In particular, a major challenge is that we need to develop a fully decentralized and computationally tractable scheme to compute the inverse of the high dimensional Fisher information matrix, and this scheme must be both sample and communication efficient. Hence, we want to ask:
\begin{itemize}[leftmargin=*,topsep=0pt]
	\item {\em Q2: Can we develop a computationally tractable and {communication-efficient} decentralized NAC algorithm that has a low sample and communication complexity?}
\end{itemize}



In this study, we answer these questions by developing fully decentralized AC and NAC algorithms that are sample and communication-efficient, and {do not reveal agents' local actions and policies.} Our contributions are summarized as follows. 

\begin{table*}[bth]
	\captionsetup{width=.85\textwidth}
	\caption{List of complexities of the existing AC and NAC algorithms for achieving $\mathbb{E}[\|\nabla J(\omega)\|^2]\le \epsilon$ and $\mathbb{E}[J(\omega^*)-J(\omega))]\le \epsilon$, respectively.} \label{table: 1}
	\vspace{-4mm}
	\begin{center}
		\begin{footnotesize}
				\begin{tabular}{cccccc}
					\toprule
					\multirow{2}{*}{Algorithm} & \multirow{2}{*}{Papers} & {Share local} & Sampling & Sample & Communication \\ 
					& & {action/policy} & scheme & complexity & complexity \\ 
					\midrule
					\multirow{4}{*}{Centralized AC}		
					& \cite{qiu2019finite} & {--} & {i.i.d.} & {$\mathcal{\widetilde{O}}(\epsilon^{-4})$} & {--}\\ 
					\cline{2-6}
					& \cite{kumar2019sample} & {--} & {i.i.d.} & {$\mathcal{O}(\epsilon^{-2.5})$} & {--}\\
					\cline{2-6}
					& \cite{xu2020non} & -- & Markovian & $\mathcal{O}(\epsilon^{-2.5}\ln^3 {\epsilon}^{-1})$ & --\\
					\cline{2-6}
					& \cite{wu2020finite} & {--} & {Markovian} & {$\widetilde{\mathcal{O}}(\epsilon^{-2.5})$} & {--}\\
					\cline{2-6}
					 & \cite{xu2020improving} & -- & Markovian &  {$\mathcal{O}(\epsilon^{-2}\ln\epsilon^{-1})$} & --\\
					\midrule
					\multirow{4}{*}{Decentralized AC} & \cite{zhang2018fully,zhang2018networked,foerster2018counterfactual}\\	&\cite{zhang2019distributed,lin2019communication,lin2019asynchronous} & $\times$ & Markovian & -- & -- \\
					\cline{2-6}
					 & \cite{zhang2018fully,suttle2019multi,ma2021modeling}  & \checkmark & Markovian & -- & -- \\
					\cline{2-6}
					& \red{This work} & \red{\checkmark} & {Markovian} & \red{$\mathcal{O}(\epsilon^{-2}\ln\epsilon^{-1})$} & \red{$\mathcal{O}(\epsilon^{-1}\ln\epsilon^{-1})$}\\
					\midrule \midrule
                    \multirow{2}{*}{Centralized NAC} 
                    & \cite{xu2020non} & -- & Markovian & $\mathcal{O}(\epsilon^{-4}\ln^2\epsilon^{-1})$ & --\\
					\cline{2-6}
					& \cite{xu2020improving} & -- & Markovian &  {$\mathcal{O}(\epsilon^{-3}\ln\epsilon^{-1})$} & --\\
					\midrule
					Decentralized NAC & \red{This work} & \red{\checkmark} & {Markovian} & \red{$\mathcal{O}(\epsilon^{-3}\ln\epsilon^{-1})$} & \red{$\mathcal{O}(\epsilon^{-1}\ln \epsilon^{-1})$}\\
					\bottomrule
				\end{tabular}
		\end{footnotesize}
	\end{center}
	\vskip -0.1in
\end{table*}

\subsection{Our Contributions}
We develop fully decentralized AC and NAC algorithms and analyze their finite-time sample and communication complexities under Markovian sampling. Our results and comparisons to existing works are summarized in \Cref{table: 1}. 
In particular, our decentralized AC {and NAC algorithms adopt} the following novel designs to accurately estimate the policy gradient in an {efficient} way.
\begin{itemize}[leftmargin=*,topsep=0pt]
    \item {\em {Noisy} Local Rewards:} In a decentralized setting, local policy gradients (estimated by the agents) involve the average of all agents' local rewards. To help agents estimate this averaged reward {without revealing the raw local rewards}, we let them share Gaussian-corrupted local rewards with their neighbor, and the variance of the Gaussian noise can be adjusted by each agent.
    \item {\em Mini-batch Updates:} We apply mini-batch Markovian sampling to both the decentralized actor and critic updates. This approach i) helps the agents obtain accurate estimations of the corrupted averaged reward; ii) significantly reduces the variance of policy gradient caused by Markovian sampling; and iii) significantly reduces the communication frequency and complexity.
\end{itemize}
For our decentralized NAC algorithm, we additionally adopt the following design to compute the inverse of the Fisher information matrix in an efficient and decentralized way.
\begin{itemize}[leftmargin=*,topsep=0pt]
    \item {\em Decentralized Natural Policy Gradient:} 
    By reformulating the natural policy gradient as the solution of a quadratic program, we develop a decentralized Markovian SGD that allows the agents to estimate the corresponding local natural gradients by communicating only scalar variables with their neighbors. In particular, we use an increasing batch size to optimize the sample complexity of the decentralized Markovian SGD. 
\end{itemize}



Theoretically, we provide finite-time convergence analysis of both algorithms under Markovian sampling. Specifically, we prove that our decentralized AC and NAC algorithms achieve the sample complexities $\mathcal{O}(\epsilon^{-2}\ln\epsilon^{-1})$ and $\mathcal{O}(\epsilon^{-3}\ln\epsilon^{-1})$, respectively, both of which match the state-of-the-art complexities of their centralized versions \cite{xu2020improving}. Moreover, both algorithms achieve a significantly reduced communication complexity $\mathcal{O}(\epsilon^{-1}\ln\epsilon^{-1})$. 
In particular, our analysis involves new technical developments. First, we need to characterize the bias and variance of (natural) policy gradient and stochastic gradient caused by the noisy rewards and the inexact local averaging steps, and control them with proper choices of batch sizes and number of local averaging steps. Second, when using decentralized Markovian SGD to compute the inverse Fisher information matrix, we need to use an exponentially increasing batch size to achieve an optimized sample complexity bound. Such a Markovian SGD with adaptive batch size has not been studied before and can be of independent interest.



\subsection{Related Work}


\textbf{Convergence analysis of AC and NAC.} 
In the centralized setting, the AC algorithm was firstly proposed by \cite{konda2000actor} and later developed into the natural actor-critic (NAC) algorithm \cite{peters2008natural,bhatnagar2009natural}. {Then, \cite{konda2002actor, bhatnagar2010actor} and \cite{kakade2001natural,bhatnagar2007incremental,bhatnagar2009natural} establish the asymptotic convergence rate of centralized AC and NAC, respectively.}
Furthermore, \cite{wang2019neural,kumar2019sample,qiu2019finite,xu2020non,wu2020finite} and \cite{wang2019neural} establish the finite-time convergence rate of centralized AC and NAC, respectively. Moreover, \cite{xu2020improving} improve the finite-time sample complexities of the above works to the state-of-the-art {result for both centralized AC and NAC by leveraging mini batch sampling, and our sample complexities match these state-of-the-art results}. 

In the decentralized setting, a few works have established the almost sure convergence result of AC \cite{foerster2018counterfactual,lin2019communication,suttle2019multi,ma2021modeling}, {but they do not characterize the finite-time convergence rate and the sample complexity.}
To the best of our knowledge, there is no formally developed decentralized NAC algorithm.

\textbf{Decentralized TD-type algorithms.} 
The finite-time convergence of decentralized TD(0) has been obtained using i.i.d samples \cite{Wai2018,Doan19a,wang2020decentralized,liu2021distributed} and Markovian samples \cite{sun2020finite,wang2020decentralized}, respectively, {without revealing the agents' local actions, policies and rewards}. Decentralized off-policy TD-type algorithms have been studied in \cite{macua2014distributed,stankovic2016multi,cassano2020multi,DTDC}. 

\textbf{Decentralized AC in other MARL settings.} Some works apply decentralized AC to other MARL settings. For example, 
\cite{srinivasan2018actor,perolat2018actor,hennes2020neural,chen2020delay,xiao2021shaping} studied adversarial game. \cite{lowe2017multi} studied a mixed cooperative-competitive environment where each agent maximizes its own Q function \cite{lowe2017multi}. {\cite{chen2020delay} proposed Delay-Aware Markov Game which considers delay in Markov game}. \cite{zhang2016data,luo2019natural} studied linear control system and linear quadratic regulators instead of an MDP. \cite{wang2019achieving} studied sequential prisoner’s dilemmas. 

{\textbf{Policy gradient algorithms.} Policy gradient (PG) and natural policy gradient (NPG) are popular policy optimization algorithms. \cite{agarwal2019theory} characterizes the iteration complexity of centralized PG and NPG algorithms by assuming access to exact policy gradient. They also established a sample complexity result $\mathcal{O}(\epsilon^{-6})$ in the i.i.d. setting for NPG, which is worse than the state-of-the-art result $\mathcal{O}(\epsilon^{-3}\ln\epsilon^{-1})$ of both centralized NAC \cite{xu2020improving} and our decentralized NAC with Markovian samples. \cite{bai2021joint} proposes decentralized PG in a simple cooperative MARL setting, where all the agents share one action and the same policy, and they establish a iteration complexity in the order of $\mathcal{O}(\epsilon^{-4})$. \cite{daskalakis2021independent,zhao2021provably} apply decentralized PG to Markov games. \cite{alfano2021dimension} applies decentralized NPG to a different cooperative MARL setting where each agent observes its own state, takes its own action and has access to these information of its neighbors.} 


\section{Review of Multi-Agent RL}
In this section, we first introduce some standard settings of RL. 
Consider an agent that starts from an initial state $s_0\sim \xi$ and collects a trajectory of Markovian samples $\{s_t, a_t, R_t\}_t\subset \mathcal{S}\times\mathcal{A}\times\mathbb{R}$ by interacting with an underlying environment (with transition kernel $\mathcal{P}$) following a parameterized policy $\pi_{\omega}$ with induced stationary state distribution $\mu_{\omega}$. The agent aims to learn an optimal policy that maximizes the expected accumulated reward $J(\omega) = (1-\gamma)\mathbb{E}\big[\sum_{t=0}^{\infty}\gamma^t {R}_t\big]$, where $\gamma\in (0,1)$ is a discount factor. The marginal state distribution is denoted as $\mathbb{P}_{\omega}(s_t)$ and the visitation measure is defined as $\nu_{\omega}(s) := (1-\gamma)\sum_{t=0}^{\infty} \gamma^t\mathbb{P}_{\omega} (s_t=s)$, both of which depend on the policy parameter $\omega\in\Omega$ and the transition kernel $\mathcal{P}$. We also define the mixed transition kernel $\mathcal{P}_{\xi}(\cdot|s,a):= \gamma\mathcal{P}(\cdot|s,a)+(1-\gamma)\xi(\cdot)$, whose stationary state distribution is known to be $\nu_{\omega}$.

In the multi-agent RL (MARL) setting, $M$ agents are connected via a fully decentralized network and interact with a shared environment. The network topology is specified by a doubly stochastic communication matrix $W\in \mathbb{R}^{M\times M}$.
At any time $t$, all the agents share a common state $s_t$. Then, every agent $m$ takes an action $a_t^{(m)}$ following its own current policy $\pi_t^{(m)}(\cdot|s_t)$ parameterized by $\omega_t^{(m)}$. 
After all the actions $a_t := \{a_t^{(m)}\}_{m=1}^M$ are taken, the global state $s_t$ transfers to a new state $s_{t+1}$ and every agent $m$ receives a local reward $R_t^{(m)}$. In this MARL setting, each agent $m$ can only access the global state $\{s_t\}_t$, its own actions $\{a_t^{(m)}\}_t$ and rewards $\{R_t^{(m)}\}_t$ and policy $\pi_t^{(m)}$. Next, define the joint policy $\pi_t(a_t|s_t):=\prod_{m=1}^M \pi_t^{(m)}(a_t^{(m)}|s_t)$ parameterized by $\omega_t=[\omega_t^{(1)};\ldots;\omega_t^{(M)}]$, and define the average reward $\overline{R}_t:=\frac{1}{M}\sum_{m=1}^M R_t^{(m)}$. The goal of the agents is to collaboratively learn the optimal joint policy that maximizes the expected accumulated average reward $J(\omega):=(1-\gamma)\mathbb{E}\big[\sum_{t=0}^{\infty}\gamma^t \overline{R}_t\Big|s_0\sim \xi\big].$
{Throughout, we consider the setting that the agents interact with the environment and observe a trajectory of MDP transition samples, which are used to learn the optimal joint policy.}

\section{Sample and Communication-Efficient Decentralized AC}
In this section, we propose a decentralized actor-critic (AC) algorithm that is sample and communication-efficient and {avoids revealing agents' actions, policies and raw rewards}.


We first consider a direct extension of the centralized AC to the decentralized case. As each agent $m$ has its own policy $\pi^{(m)}$, it aims to update the policy parameter $\omega^{(m)}$ using the local policy gradient $\nabla_{\omega^{(m)}} J(\omega)$. Under linear approximation of the value function $V_{\theta}(s) \approx \phi(s)^{\top}\theta$ where $\phi(s)$ is the feature vector, 
the local policy gradient has the following stochastic approximation.
\begin{align}
    &\nabla_{\omega^{(m)}} J(\omega_t) {\approx} \Big[\overline{R}_t \!+\! \gamma \phi(s_{t+1}')^{\top}\theta_t^{(\!m\!)}-\phi(s_t)^{\top}\theta_t^{(\!m\!)}\Big] \psi_{t}^{(\!m\!)}(a_{t}^{(\!m\!)}|s_{t}),\label{eq: pg1}\\
    &\text{where~} a_t^{(m)}\sim \pi_{t}^{(m)}(\cdot|s_t), s_{t+1}\sim \mathcal{P}_{\xi}(\cdot|s_t,a_t), s_{t+1}'\sim \mathcal{P}(\cdot|s_t,a_t).\label{eq: sampling}
\end{align}
Here, $\theta_t^{(m)}$ is agent $m$'s critic parameter and $\psi_{t}^{(m)}(a_{t}^{(m)}|s_{t})=\nabla_{\omega^{(m)}} \ln \pi_t^{(m)}(a_{t}^{(m)}|s_{t})$ is the local score function. 
It is clear that both $\theta_t^{(m)}$ and $\psi_{t}^{(m)}(a_{t}^{(m)}|s_{t})$ can be obtained/computed by agent $m$ using the local information. However, the average reward $\overline{R}_t$ requires agent $m$ aggregating the local rewards 
from all the other agents, which raises concerns. In the existing literature on decentralized AC, this issue is avoided by either 1) sharing the agents' actions with each other instead \cite{zhang2018fully,zhang2018networked,bono2018cooperative,perolat2018actor,zhang2019distributed,lin2019communication,heredia2019distributed,lin2019asynchronous,chen2020delay}, yet the action information is also highly sensitive; or 2) learning a parameterized model to estimate the average reward \cite{zhang2018fully}, which requires extra learning effort and does not provide an accurate estimation. Hence, we are motivated to develop a simpler approach that provides accurate estimation of the average reward while {avoids sharing raw local rewards}. 

{\bf 1. Efficient Policy Gradient Estimation.} 
We propose a decentralized policy gradient estimation scheme that improves the sample and communication efficiency and {avoids revealing the agents' local actions, policies and raw rewards}. First, in order for each agent to estimate the average reward $\overline{R}_t$ in \cref{eq: pg1}, we let each agent $m$ generate a noisy local reward $\widetilde{R}_t^{(m)}=R_t^{(m)}(1+e_t^{(m)})$ and share with other agents, where $e_t^{(m)}\sim \mathcal{N}(0,\sigma_m^2)$ 
The noise variance is determined by the agent based on its desired level. Specifically, every agent $m$ first initializes its local estimation of the averaged reward $\overline{R}_t^{(m)}$ using its own noisy reward, i.e., ${\overline{R}}_{t,0}^{(m)}=\widetilde{R}_t^{(m)}$. Then, each agent $m$ performs decentralized local averaging with its neighbors $\mathcal{N}_m$ for $T'$ iterations, i.e., 
\begin{align}
	\overline{R}_{t,\ell+1}^{(m)}\!=\! \textstyle\sum_{m'\in\mathcal{N}_m}\!\! W_{m,m'} \overline{R}_{t,\ell}^{(m)},~ \ell\!=\!0,1,\!\ldots\!,T'-1. \label{eq:Ravg}
\end{align}
After that, agent $m$ obtains the final estimate ${\overline{R}}_t^{(m)}:= {\overline{R}}_{t,T'}^{(m)}$. It can be shown that ${\overline{R}}_t^{(m)}$ converges to the averaged noisy reward $\frac{1}{M}\sum_{m=1}^M \widetilde{R}_t^{(m)}$ exponentially fast. 
Ideally, by averaging these noisy local rewards over the $M$ agents, the variance of the noise in the final estimation will be scaled by a factor of $\frac{1}{M}$. Therefore, to obtain an accurate estimation, the network needs to have a sufficiently large number of agents, {which does not always hold in practice.} 
To address this issue, we let each agent $m$ collect a mini-batch of $N$ Markovian samples in each iteration $t$ to estimate the local policy gradient, which then takes the following form.
\begin{align}
\widehat{\nabla}_{\omega^{(m)}} J(\omega_t) =& \frac{1}{N}\sum_{i=tN}^{(t+1)N-1}\Big[{\overline{R}}_i^{(m)}+\gamma \phi(s_{i+1}')^{\top}\theta_t^{(m)} -\phi(s_i)^{\top}\theta_t^{(m)}\Big] \psi_{t}^{(m)}(a_{i}^{(m)}|s_{i}), \label{eq:spg}
\end{align}
where ${\overline{R}}_i^{(m)}$ is an estimation of $\overline{R}_i$ obtained by agent $m$ following the process described in \cref{eq:Ravg}. Intuitively, each ${\overline{R}}_i^{(m)}$ is corrupted by a zero-mean noise with variance $\mathcal{O}(\frac{1}{M})$ due to averaging over the agents. Then, the mini-batch samples further help scale the noise variance by a factor of $\frac{1}{N}$. Consequently, with a sufficiently large batch size $N$, we can obtain an accurate estimation of the averaged reward and hence the policy gradient. To summarize,  {\bf our decentralized policy gradient estimation scheme has the following advantages.}
\begin{itemize}[leftmargin=*,topsep=0pt]
    \item {\em {Avoid sharing raw rewards:}} The agents share only noisy rewards {$\widetilde{R}_t^{(m)}$} with their neighbors, and the noise variance can be adjusted based on the desired level {such that $R_t^{(m)}$ is unknown to the other agents}. This is in contrast to other decentralized AC algorithms where the agents need to either share local actions, rewards or collaboratively learn an additional parameterized reward model. 
    \item {\em Sample-efficient:} 
    {The mini-batch updates help greatly suppress the noise variance of the local policy gradient in \eqref{eq:spg} and improve its estimation accuracy. }On the other hand, mini-batch policy gradient also helps reduce the optimization variance caused by Markovian sampling and leads to a good finite-time sample complexity as we prove later. {We note that there is no trade-off between noise variance and sample efficiency here, because for highly noisy local rewards we can choose a large batch size to suppress the overall estimation error to the desired level.}
    \item {\em Communication-efficient:} The mini-batch updates also significantly reduce the communication frequency as well as the complexity as we prove later. In comparison, the existing decentralized AC requires to perform one communication round per Markovian sample.
\end{itemize}

{
\begin{remark}
The mini-batch policy gradient in \cref{eq:spg} can be computed in an accumulative way by the agent when observing the mini-batch of transition samples on the fly. There is no need to store these samples and perform a large batch computation.
\end{remark}
}

{\bf 2. Fully Decentralized Critic Update.} 
The critic parameters of the agents are updated following the standard decentralized TD-type algorithm. Specifically, consider the $t$-th local critic update of each agent $m$. It first collects a mini-batch of $N_c$ Markovian samples. Then, starting from a fixed initialization $\theta_{t,0}^{(m)} = {\theta_{-1}}$, agent $m$ performs $T_c$ iterations of decentralized TD updates as follows, where $\{s_t\}_{t\in \mathbb{N}}$ follows the transition kernel $\mathcal{P}$ and $a_t^{(m)}\sim \pi_{t}^{(m)}(\cdot|s_t)$: for $t' = 0,1,...,T_c-1$,
\begin{align}
{\theta}_{t, t'+1}^{(m)} =&\!\!\! \sum_{m'\in \mathcal{N}_m}\!\! W_{m,m'} ~{\theta}_{t, t'}^{(m')}+ \frac{\beta}{N_c}\!\sum_{i=tN_c}^{(t+1)N_c-1}\!\!\Big[R_i^{(m)} +\gamma \phi(s_{i+1})^{\top}\theta_{t,t'}^{(m)}-\phi(s_i)^{\top}\theta_{t,t'}^{(m)}\Big]{\phi(s_i)}.   \label{eq: DTD}
\end{align}
Then, the updated critic parameter is set to be ${\theta}_{t}^{(m)} := {\theta}_{t, T_c}^{(m)}$. To further reduce the consensus error, we perform additional $T_c'$ steps of local model averaging, as also adopted in \cite{DTDC}. The pseudo code of the entire decentralized AC algorithm is summarized in Algorithms \ref{alg: 1} and \ref{alg: 2} below. 



\begin{minipage}{.48\textwidth}
\begin{algorithm}[H]
	\caption{Decentralized Actor-Critic}
	\label{alg: 1} 
	{\bf Initialize:} Actor-critic parameters $\omega_0, \theta_{-1}$.
	
	\For{actor iterations $t=0, 1, \ldots, T-1$}
	{	
		$\blacktriangleright$ \textbf{Critic update on $\theta_t$:} by \Cref{alg: 2}. 
		
		$\blacktriangleright$ Collect $N$ Markovian samples {by \cref{eq: sampling}.}
		
		\For{agents $m=1,...,M$ in parallel}{
			$\blacktriangleright$ Send noisy local rewards and perform $T'$ local average steps following \cref{eq:Ravg}.
			\\
			$\blacktriangleright$ Compute the estimated local policy gradient $\widehat{\nabla}_{\omega^{(m)}} J(\omega_t)$ following \cref{eq:spg}.
		
		$\blacktriangleright$ \textbf{Actor update on $\omega_t$:}
		
			$\omega_{t+1}^{(m)}=\omega_t^{(m)}+\alpha \widehat{\nabla}_{\omega^{(m)}} J(\omega_t)$.
		}
	}	
	\textbf{Output:} $\omega_{\widetilde{T}}$ with $\widetilde{T}\overset{\text{uniform}}{\sim}\{1,2,\ldots,T\}$.
\end{algorithm}
\end{minipage}
\hfill
\begin{minipage}{.48\textwidth}
\vspace{-2mm}
\begin{algorithm}[H]
	\caption{Decentralized TD (critic update)}
	\label{alg: 2}
	
	{\bf Initialize:} Critic parameter $\theta_{t,0} = \theta_{-1}$. 
	
	\For{critic iterations $t'=0, 1, \ldots, T_c-1$}
	{	
		$\blacktriangleright$ Collect $N_c$ Markovian samples following policy $\pi_{t}$ and transition kernel $\mathcal{P}$. 
		
		\For{agents $m=1,...,M$ in parallel}{
		$\blacktriangleright$ Send local critic parameters.
		
		$\blacktriangleright$ Decentralized TD update in \cref{eq: DTD}.
		}
	}	
	
	\For{iterations $t'=T_c, ..., T_c+T_c'-1$}
	{	
		\For{agents $m=1,...,M$ in parallel}{
		$\blacktriangleright$	${\theta}_{t, t'+1}^{(m)}=\sum_{m'\in \mathcal{N}_m} W_{m,m'} ~{\theta}_{t, t'}^{(m')}.$
		}
	}
	
	\textbf{Output:} ${\theta}_{t}={\theta}_{t, T_c+T_c'}$.
\end{algorithm}
\end{minipage}

\section{Finite-Time Analysis of Decentralized AC}\label{sec_analysis}
In this section, we analyze the finite-time convergence of Algorithm \ref{alg: 1} and characterize the sample and communication complexities. 
All the notations and universal constants are summarized in Appendices \ref{supp:notation} \& \ref{supp:constants} respectively. We first introduce the following standard assumptions that have been widely adopted in the existing literature. 
\begin{assum}\label{assum_TV_exp}
	Regarding the transition kernels $\mathcal{P}, \mathcal{P}_\xi$, denote $\mu_\omega, \nu_\omega$ respectively as their stationary state distributions under policy $\pi_\omega$ and denote $\mathbb{P}, \mathbb{P}_\xi$ respectively as their marginal state distributions. Then, there exist constants $\kappa>0$ and $\rho\in(0,1)$ such that for all $t \geq 0$,
	\begin{align}
	&\sup_{s \in \mathcal{S}} d_{TV}\big(\mathbb{P}\left(s_{t} \mid s_{0}=s\right), \mu_{\omega}\big) \le \kappa \rho^{t}, \sup_{s \in \mathcal{S}} d_{TV}\big(\mathbb{P}_{\xi}\left(s_{t} \mid s_{0}=s\right), \nu_{\omega}\big) \le \kappa \rho^{t}
	\end{align}
	where $d_{TV}(P, Q)$ denotes the total-variation distance between probability measures $P$ and $Q$. 
\end{assum}

\begin{assum}\label{assum_policy}
	There exist constants $C_{\psi}, L_{\psi}, L_{\pi}>0$ such that for all $\omega,\widetilde{\omega}\in \Omega$, $s\in\mathcal{S}$ and $a\in\mathcal{A}$, $\|\psi_{\omega}(a|s)\|\le C_{\psi}$, $\|\psi_{\widetilde{\omega}} (a|s)-\psi_{\omega}(a|s)\|\le L_{\psi}\|\widetilde{\omega}-\omega\|$ and $d_{\text{TV}}\big(\pi_{\widetilde{\omega}}(\cdot|s),\pi_{\omega}(\cdot|s)\big)\le L_{\pi}\|\widetilde{\omega}-\omega\|$.
\end{assum}

\begin{assum}\label{assum_R}
	There exists $R_{\max}>0$ such that for any agent $m$ and any Markovian sample $(s,a,s')$, we have $0\le R^{(m)}(s,a,s')\le R_{\max}$.
\end{assum}

\begin{assum}\label{assum_phi}
	The feature vectors satisfy $\|\phi(s)\|\le 1$ for all $s\in\mathcal{S}$. There exists a constant $\lambda_{\phi}>0$ such that $\lambda_{\min}\big(\mathbb{E}_{s\sim\mu_{\omega}}[\phi(s)\phi(s)^{\top}]\big)\ge \lambda_{\phi}$ for all  $\omega$.
\end{assum}

\begin{assum}\label{assum_doubly}
	The communication matrix $W\in\mathbb{R}^{M\times M}$ of the decentralized network is doubly stochastic, and its second largest singular value satisfies $\sigma_W \in[0,1)$. 
\end{assum}

Assumption \ref{assum_TV_exp} has been widely considered in the existing literature \cite{bhandari2018finite,qiu2019finite,xu2019two,xu2020sample,ma20a,xu2020improving,DTDC} and it holds for any time-homogeneous Markov chains with finite-state space and any uniformly ergodic Markov chains. Assumption \ref{assum_policy} introduces boundedness and Lipschitzness to the policy and its associated score function \cite{yang2020sample,xu2020improving}, and holds for many parameterized policies such as Gaussian policy \cite{kumar2019sample} and Boltzman policy \cite{ghosh2020model}. 
Assumption \ref{assum_phi} can always hold by normalizing the feature vector $\phi(s)$ 
Assumption \ref{assum_doubly} is widely used in decentralized optimization \cite{singh2020squarm,saha2020decentralized} and multi-agent reinforcement learning \cite{sun2020finite,wang2020decentralized,DTDC}, which ensures that all the decentralized agents can reach a global consensus.


With the above assumptions, we obtain the following finite-time convergence result of the decentralized AC algorithm. Throughout, we follow
\cite{xu2020improving,wu2020finite}  and define the critic approximation error as { $\zeta_{\text{approx}}^{\text{critic}} :=\sup_{\omega} \mathbb{E}_{s\sim \nu_{\omega}} (V_{\omega}(s)-\phi(s)^{\top} \theta_{\omega}^*)^2$ where $\theta_{\omega}^*$ is the optimal critic parameter (see its definition right before Lemma \ref{lemma:critic} in Appendix \ref{supp:lemma})}. {We also define sample complexity as the total number of Markovian samples required for achieving $\mathbb{E}[\|\nabla J(\omega)\|^2]\le \epsilon$.} All the universal constants are listed in Appendix \ref{supp:constants}.
 
\begin{restatable}{thm}{thmAC}\label{thm:AC}
	Let Assumptions \ref{assum_TV_exp}--\ref{assum_doubly} hold and adopt the hyperparameters of the decentralized TD in \Cref{alg: 2} following \Cref{prop:TD}. Choose $\alpha\le\frac{1}{4L_J}$, $T'\ge \frac{\ln M}{2\ln\sigma_W^{-1}}$. Then, the output of the decentralized AC in Algorithm \ref{alg: 1} has the following convergence rate.
	\begin{align}
		\mathbb{E} \Big[\!\big\|\nabla J(\omega_{\widetilde{T}})\big\|^2\!&\Big] \le\! \frac{4R_{\max}}{T\alpha} \!+\! 4(c_4\sigma_W^{2T'} \!+\! c_5\beta^2\sigma_W^{2T_c'}) \!+4c_6\Big(\! 1\!-\!\frac{\lambda_{B}}{8} \beta \!\Big)^{T_c} \!\!+\! \frac{4c_7}{N} \!+\! \frac{4c_8}{N_c} \!+ \!64C_{\psi}^2\zeta_{\text{approx}}^{\text{critic}}. \nonumber 
	\end{align}
	Moreover, to achieve $\mathbb{E} \big[\big\|\nabla J(\omega_{\widetilde{T}})\big\|^2\big] \le \epsilon$ for any $\epsilon\ge 128C_{\psi}^2\zeta_{\text{approx}}^{\text{critic}}$, we can choose $T,N,N_c=\mathcal{O}(\epsilon^{-1})$ and $T_c,T_c',T'=\mathcal{O}(\ln\epsilon^{-1})$. Consequently, the overall sample complexity is $T(T_cN_c+N)=\mathcal{O}(\epsilon^{-2}\ln\epsilon^{-1})$, and the communication complexities for synchronizing linear model parameters and rewards are $T(T_c+T_c')=\mathcal{O}(\epsilon^{-1}\ln\epsilon^{-1})$ and $TT'=\mathcal{O}(\epsilon^{-1}\ln\epsilon^{-1})$, respectively.
\end{restatable}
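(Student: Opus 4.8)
The plan is to run a smoothness-based descent analysis on the non-concave objective $J$, treating the stacked actor update $\omega_{t+1}=\omega_t+\alpha\widehat{\nabla}J(\omega_t)$ as inexact gradient ascent, and then to control the gradient-estimation error by decomposing it into its physical sources. First I would invoke $L_J$-smoothness of $J$ (which follows from the boundedness and Lipschitzness of the score function in Assumption \ref{assum_policy}) to write, for each $t$,
$$J(\omega_{t+1}) \ge J(\omega_t) + \alpha\langle\nabla J(\omega_t), \widehat{\nabla}J(\omega_t)\rangle - \tfrac{L_J\alpha^2}{2}\|\widehat{\nabla}J(\omega_t)\|^2.$$
Setting the estimation error $\Delta_t := \widehat{\nabla}J(\omega_t)-\nabla J(\omega_t)$ and splitting the inner product into $\|\nabla J(\omega_t)\|^2+\langle\nabla J(\omega_t),\Delta_t\rangle$, a Young's inequality together with the step-size choice $\alpha\le\frac{1}{4L_J}$ should isolate a term proportional to $\|\nabla J(\omega_t)\|^2$ on one side while leaving a residual governed by $\mathbb{E}\|\Delta_t\|^2$.

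The crux is to bound $\mathbb{E}\|\Delta_t\|^2$, which I would split into the five contributions matching the five non-trivial terms of the bound. The critic estimation error, supplying $c_6(1-\frac{\lambda_B}{8}\beta)^{T_c}$, $c_5\beta^2\sigma_W^{2T_c'}$, and $\frac{c_8}{N_c}$, is imported directly from the decentralized-TD analysis of Proposition \ref{prop:TD} (geometric decay of the TD iterate, consensus error from the extra $T_c'$ averaging steps, and mini-batch variance). The reward-consensus error from running only $T'$ averaging steps in \cref{eq:Ravg} contracts geometrically as $\sigma_W^{2T'}$ by Assumption \ref{assum_doubly}, giving $c_4\sigma_W^{2T'}$. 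The Gaussian reward-corruption noise, being zero-mean, averages to variance $\mathcal{O}(1/M)$ across agents and is then divided by the mini-batch size to yield $\frac{c_7}{N}$. The Markovian-sampling bias is controlled through the geometric mixing of Assumption \ref{assum_TV_exp}, again shrinking with $N$. Finally, the irreducible function-approximation gap contributes the constant $64C_{\psi}^2\zeta_{\text{approx}}^{\text{critic}}$.

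With $\mathbb{E}\|\Delta_t\|^2$ bounded by the sum of these pieces, I would telescope the descent inequality over $t=0,\ldots,T-1$, use $0\le J\le R_{\max}$ to bound the telescoped difference by $R_{\max}$, and divide by $T\alpha$; since $\omega_{\widetilde{T}}$ is drawn uniformly, the left side becomes exactly $\frac{1}{T}\sum_t\mathbb{E}\|\nabla J(\omega_t)\|^2=\mathbb{E}\|\nabla J(\omega_{\widetilde{T}})\|^2$, which produces the stated bound. For the complexity claim I would reserve half of $\epsilon$ for the approximation term (hence the hypothesis $\epsilon\ge 128C_{\psi}^2\zeta_{\text{approx}}^{\text{critic}}$) and force each remaining term below $\mathcal{O}(\epsilon)$: the $1/T$, $1/N$, $1/N_c$ terms require $T,N,N_c=\mathcal{O}(\epsilon^{-1})$, whereas the three geometrically-decaying terms require only $T',T_c',T_c=\mathcal{O}(\ln\epsilon^{-1})$. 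Substituting into the per-iteration sample count $T_cN_c+N$ and the communication counts $T_c+T_c'$ and $T'$, then multiplying by $T$, yields $\mathcal{O}(\epsilon^{-2}\ln\epsilon^{-1})$ samples and $\mathcal{O}(\epsilon^{-1}\ln\epsilon^{-1})$ communications.

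I expect the main obstacle to be the tight characterization of $\mathbb{E}\|\Delta_t\|^2$ under Markovian sampling: because the mini-batch samples are correlated with one another and with the slowly drifting policy $\omega_t$, the naive i.i.d.\ variance bound fails. Instead I would combine the geometric mixing of Assumption \ref{assum_TV_exp} with the Lipschitz-in-$\omega$ control of Assumption \ref{assum_policy} to separate a fast-decaying bias from a $1/N$ variance, all while the critic error itself depends on the current policy through $\theta_t$, so the critic and actor error bounds must be assembled in a compatible, uniform-in-$\omega$ fashion.
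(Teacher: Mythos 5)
Your proposal follows essentially the same route as the paper's proof: the same $L_J$-smoothness descent inequality with Young's inequality and $\alpha\le\frac{1}{4L_J}$, the same decomposition of $\mathbb{E}\|\widehat{\nabla}J(\omega_t)-\nabla J(\omega_t)\|^2$ into reward-consensus error, noise variance, critic error (imported from the decentralized-TD lemma), Markovian-sampling variance, and the irreducible approximation gap (this is exactly the paper's Lemma \ref{lemma:Rgerr} combined with Lemma \ref{prop:TD}), followed by the same telescoping over $t$ and the same hyperparameter choices $T,N,N_c=\mathcal{O}(\epsilon^{-1})$, $T_c,T_c',T'=\mathcal{O}(\ln\epsilon^{-1})$. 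The obstacle you flag at the end—correlated Markovian mini-batches requiring mixing plus Lipschitz control rather than i.i.d.\ variance bounds—is precisely what the paper resolves via its Lemma \ref{lemma:var}, so your plan is sound and matches the published argument.
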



To the best of our knowledge, Theorem \ref{thm:AC} provides the first finite-time analysis of decentralized AC under Markovian sampling. To elaborate, under any pre-specified variance $\sigma_m^2$ of the reward noise, our result shows that the gradient norm asymptotically converges to the order $\mathcal{O}(N^{-1}+N_c^{-1}+\zeta_{\text{approx}}^{\text{critic}})$, which can be made arbitrarily close to the linear model approximation error $\zeta_{\text{approx}}^{\text{critic}}$ by choosing sufficiently large batch sizes $N, N_c$.
In particular, exact gradient convergence can be achieved when there is no model approximation error. 
The overall sample complexity of our decentralized AC is  $\mathcal{O}(\epsilon^{-2}\ln\epsilon^{-1})$, matching the 
state-of-the-art complexity result for centralized AC \cite{xu2020improving}. Moreover, with proper choices of the batch sizes $N, N_c=\mathcal{O}(\epsilon^{-1})$, the overall communication complexity is significantly reduced to $\mathcal{O}(\epsilon^{-1}\ln \epsilon^{-1})$.

{The proof of \Cref{thm:AC} relies on developing several new algorithmic and technical developments to reduce the communication complexity of both the decentralized actor and critic updates while establishing tight convergence error bounds for both components. We further elaborate on these novel technical developments below. 

\begin{itemize}[leftmargin=*,topsep=0pt]
    \item To achieve an overall reduced communication complexity, we adopt mini-batch updates in both the actor and critic steps to reduce the communication frequency, as opposed to the single sample-based update adopted in the existing work on decentralized TD learning \cite{sun2020finite}. Specifically, in the analysis of the decentralized TD described in \Cref{alg: 2} (see \Cref{prop:TD}), the mini-batch updates with batch size $\mathcal{O}(\epsilon^{-1})$ substantially improve the communication complexity from $\mathcal{O}(\epsilon^{-1}\ln \epsilon^{-1})$ to $\mathcal{O}(\ln \epsilon^{-1})$ and help achieve the state-of-the-art sample complexity. 
    Eventually, this together with the mini-batch updates in the decentralized actor steps help achieve the desired overall low communication complexity.
    \item To achieve the state-of-the-art overall sample complexity, we require a fast convergence of the decentralized TD learning. Although the standard $T_c$ decentralized mini-batch TD updates can yield a small convergence error for the global critic model (i.e., the average of all local critic models), it still suffers from a relatively large consensus error. To resolve this issue, we introduce an additional $T_c'$ global consensus steps in \Cref{alg: 2} to reduce the consensus error. It is proved that a small number $\mathcal{O}(\ln \epsilon^{-1})$ of such steps suffices to yield a desired TD error.
    \item We inject random noises into the local raw rewards $R_t^{(m)}$ to protect the information. These noises introduce additional Markovian bias and variance to the local policy gradients in \eqref{eq:spg}. Fortunately, as proved in \Cref{lemma:Rgerr}, by applying mini-batch policy gradient updates, we are able to control the bias and variance induced by the noisy rewards to an acceptable level that does not affect the overall sample and communication complexities.
\end{itemize}
}



\section{Decentralized Natural AC}
Natural actor-critic (NAC) is a popular variant of the AC algorithm. It utilizes a Fisher information matrix to perform a natural policy gradient update, which helps attain the globally optimal solution in terms of the function value convergence. In this section, we develop a fully decentralized version of the NAC algorithm that is sample and communication-efficient.

 \vspace{-2mm}
\begin{algorithm}[H]
	\caption{Decentralized Natural Actor-Critic}
	\label{alg: 3} 
	{\bf Initialize:} Actor-critic parameters $\omega_0, \theta_{-1}$, natural policy gradient $h_{-1}$.
	
	\For{actor iterations $t=0, 1, \ldots, T-1$}
	{	
		$\blacktriangleright$ \textbf{Critic update on $\theta_t$:} by \Cref{alg: 2}. 
		
		\For{agents $m=1,...,M$ in parallel}{
		
		    \For{iterations $k=0, 1, \ldots, K-1$}
	        {   
	            $\blacktriangleright$ Collect $N_k$ Markovian samples following {\cref{eq: sampling}}.
	            \\
	            $\blacktriangleright$ Send $\widetilde{R}_i^{(m)}$ and $z_{i,\ell}^{(m)}$ and perform $T'$ and $T_z$ local average steps, respectively.
	            
    			$\blacktriangleright$ Estimate local gradient $\widehat{\nabla}_{\omega^{(m)}} f_{\omega_t}(h_{t,k})$ following eqs. \eqref{eq: partial_f} and \eqref{eq:spg}.
    			\\
    			$\blacktriangleright$ Perform SGD update in \cref{eq: SGD}.
		    }
		$\blacktriangleright$ \textbf{Actor update on $\omega_t$:} $\omega_{t+1}^{(m)}=\omega_t^{(m)}+\alpha {h}^{(m)}_t$.
		}
	}	
	\textbf{Output:} $\omega_{\widetilde{T}}$ with $\widetilde{T}\overset{\text{uniform}}{\sim}\{1,2,\ldots,T\}$.
\end{algorithm}
 \vspace{-4mm}

A major challenge of developing fully decentralized NAC algorithm is computing the {inverse Fisher information matrix-vector product} involved in the natural policy gradient update. To explain, first recall the exact natural policy gradient update of the centralized NAC algorithm, i.e., $\omega_{t+1} = \omega_t + \alpha {F}(\omega_t)^{-1} {\nabla}J(\omega_t)$, where $F(\omega_t):=\mathbb{E}_{s_t\sim \nu_{\omega_t}, a_t\sim \pi_{t}(\cdot|s_t)} \big[\psi_{t}(a_t|s_t)\psi_{t}(a_t|s_t)^{\top}\big]$ is the Fisher information matrix. However, in the multi-agent case, it is challenging to perform the natural policy gradient update in a decentralized manner. This is because the Fisher information matrix ${F}(\omega_t)$ is based on the concatenated multi-agent score vector $\psi_{t}(a_t|s_t)=[\psi_t^{(1)}(a_t^{(1)}|s_t);...;\psi_t^{(M)}(a_t^{(M)}|s_t)]$ and the inverse matrix-vector product ${F}(\omega_t)^{-1}{\nabla}J(\omega_t)$ is not separable with regard to each agent's policy parameter dimensions. 
Next, we develop a {\bf fully decentralized} scheme to implement the natural policy gradient update in the multi-agent setting.

First, the natural policy gradient $h(\omega_t):={F}(\omega_t)^{-1} {\nabla}J(\omega_t)$ is the solution of a quadratic program, i.e.,
\begin{align}
    h(\omega_t)\!=\!\mathop{\arg\min}_{h} f_{\omega_t}\!(h)\!:=\!\frac{1}{2}h^{\top}\! F(\omega_t)h\!-\!\nabla J(\omega_t)^{\top}\!h. \label{eq: quadratic}
\end{align}
Therefore, we can apply $K$ steps of SGD with Markovian sampling to solve this problem and obtain an estimated natural policy gradient update. Specifically,
starting from the initialization $h_{t,0} = {h}_{t-1}$ (obtained in the previous iteration), in the $k$-th SGD step, we sample a mini-batch $\mathcal{B}_{t,k}$ \footnote{Specifically, the mini-batch $\mathcal{B}_{t,k}$ contains sample indices $\big\{tN+\sum_{k'=0}^{k-1}N_{k'}, \ldots, tN+\sum_{k'=0}^{k}N_{k'}-1\big\}$.} of $N_k$ Markovian samples to estimate $\nabla f_{\omega_t}(h)$ as $\frac{1}{N_k}\sum_{i\in \mathcal{B}_{t,k}} \psi_{t}(a_i|s_i)\psi_{t}(a_i|s_i)^{\top}h_{t,k} - \widehat{\nabla}J(\omega_t; {\mathcal{B}_{t,k}})$,
where $\widehat{\nabla}J(\omega_t; {\mathcal{B}_{t,k}})$ is estimated in the same decentralized way as \cref{eq:spg} using the mini-batch of samples $\mathcal{B}_{t,k}$. In particular, each agent $m$ needs to compute the corresponding local gradient $\frac{1}{N_k}\sum_{i\in \mathcal{B}_{t,k}} \psi_{t}^{(m)}(a_i^{(m)}|s_i)\big[\psi_{t}(a_i|s_i)^{\top}h_{t,k}\big] - \widehat{\nabla}_{\omega^{(m)}} J(\omega_t; {\mathcal{B}_{t,k}})$,
in which $\psi_{t}^{(m)}(a_i^{(m)}|s_i)$ and $\widehat{\nabla}_{\omega^{(m)}} J(\omega_t; {\mathcal{B}_{t,k}})$ can be computed/estimated by the agent $m$. Then, it suffices to obtain an estimate of the scalar $\psi_{t}(a_i|s_i)^{\top}h_{t,k}$, which can be rewritten as  $\sum_{m=1}^M \psi_{t}^{(m)}(a_i^{(m)}|s_i)^{\top}h_{t,k}^{(m)}$. This summation can be easily estimated by the decentralized agents through local averaging. Specifically, each agent $m$ locally computes $z_{i,0}^{(m)}= \psi_{t}^{(m)}(a_i^{(m)}|s_i)^{\top}h_{t,k}^{(m)}$
and performs $T_z$ steps of local averaging, i.e., 
$z_{i,\ell+1}^{(m)}=\textstyle\sum_{m'\in\mathcal{N}_m} W_{m,m'}~ z_{i,\ell}^{(m')}, \quad \ell=0,1,\ldots,T_z-1.$
After that, the quantity $Mz_{i,T_z}^{(m)}$ can be proven to converge to the desired summation $\sum_{m=1}^M \psi_{t}^{(m)}(a_i^{(m)}|s_i)^{\top}h_{t,k}^{(m)}$ exponentially fast. Finally, the local gradient for agent $m$ is approximated as 
\begin{align}
    \widehat{\nabla}_{\omega^{(m)}} f_{\omega_t}(h_{t,k}) =& \frac{M}{N_k} \sum_{i\in \mathcal{B}_{t,k}} \psi_{t}^{(m)}(a_i^{(m)}|s_i) z_{i,T_z}^{(m)} -\widehat{\nabla}_{\omega^{(m)}} J(\omega_t; {\mathcal{B}_{t,k}}). \label{eq: partial_f}
\end{align}
Then, the agent $m$ performs the following SGD updates to obtain $h_t^{(m)} := h_{t,K}^{(m)}$.
\begin{align}
    h_{t,k+1}^{(m)}\!=\!h_{t,k}^{(m)} -\eta \widehat{\nabla}_{\omega^{(m)}} f_{\omega_t}(h_{t,k}),~k\!=\!0,...,K-1. \label{eq: SGD}
\end{align}
We emphasize that the above mini-batch SGD updates use {\bf Markovian samples}. In particular, as shown in \Cref{sec: nac},
we need to develop an
adaptive batch size scheduling scheme for this SGD in order to reduce its sample complexity. We summarize the decentralized NAC in \Cref{alg: 3}.

\section{Finite-time Analysis of Decentralized NAC}\label{sec: nac}

To analyze the decentralized NAC, we introduce the following additional standard assumptions.  
\begin{assum}\label{assum_fisher}
    There exists a constant $\lambda_F>0$ such that $\lambda_{\min}\big(F(\omega)\big)\ge \lambda_{F}>0, \forall \omega\in\Omega$. 
\end{assum}

\begin{assum}\label{assum_pidev}
	There exists $C_*>0$ such that for $\omega^*=\mathop{\arg\max}_{\omega\in\Omega}J(\omega)$ and any $\omega\in\Omega$, 
	\begin{align}
		\mathbb{E}_{s\sim \nu_{\omega}, a\sim \pi_{\omega}(\cdot|s)} \Big[\Big(\frac{\nu_{\omega^*}(s)\pi_{\omega^*}(a|s)}{\nu_{\omega}(s)\pi_{\omega}(a|s)}\Big)^2\Big] \le C_*^2. \nonumber
	\end{align}
\end{assum}

\Cref{assum_fisher} ensures that the Fisher information matrix $F(\omega)$ is uniformly positive definite, and is also considered in \cite{yang2020sample,Liu2020An,xu2021doubly}. \Cref{assum_pidev} regularizes the discrepancy between the stationary state-action distributions $\nu_{\omega^*}(s)\pi_{\omega^*}(a|s)$ and $\nu_{\omega}(s)\pi_{\omega}(a|s)$ \cite{wang2019neural,xu2020primal}.

We obtain the following finite-time convergence result of the decentralized NAC. Throughout, we follow \cite{wang2019neural,xu2020improving,xu2021doubly}  and define the actor approximation error $\zeta_{\text{approx}}^{\text{actor}} := {\sup_{\omega} \min_{h}} \mathbb{E}_{s\sim\nu_{\omega}, a\sim\pi_{\omega}}\big[\big(\psi_{\omega}(a|s)^{\top}h-A_{\omega}(s,a)\big)^2\big]$. All universal constants are listed in Appendix \ref{supp:constants}.

\begin{restatable}{thm}{thmNAC}\label{thm:NAC}
Let Assumptions \ref{assum_TV_exp}--\ref{assum_pidev} hold and adopt the hyperparameters of the decentralized TD in \Cref{alg: 2} following \Cref{prop:TD}.
Choose hyperparameters $\alpha\le\min\big(1,\frac{\lambda_F^2}{4L_JC_{\psi}^2},\frac{C_{\psi}^2}{2L_J}\big)$, $\beta\le 1$, $T'\ge \frac{\ln M}{2\ln\sigma_W^{-1}}$, $\eta\le \frac{1}{2C_{\psi}^2}$, $T_z\ge \frac{\ln(3D_J C_{\psi}^2)}{\ln\sigma_W^{-1}}$, $K\ge \frac{\ln 3}{\ln (1-\eta\lambda_F/2)^{-1}}$, $N\ge \frac{2304C_{\psi}^4(\kappa+1-\rho)}{\eta\lambda_F^5(1-\rho)(1-\eta\lambda_F/2)^{(K-1)/2}}$ and $N_k\propto (1-\eta\lambda_F/2)^{-k/2}$.  Then, the output of Algorithm \ref{alg: 3} satisfies
	\begin{align}
		J(\omega^*)-\mathbb{E}\big[J(\omega_{\widetilde{T}})\big] \le &\frac{c_{17}}{T\alpha} + c_{18}\Big(1-\frac{\eta\lambda_F}{2}\Big)^{(K-1)/4} + c_{19}\sigma_W^{T_z} + c_{20}\sigma_W^{T'} + c_{21} \beta\sigma_W^{T_c'} + c_{22}\Big(1-\frac{\lambda_{B}}{8} \beta\Big)^{T_c/2} \nonumber\\
		&+ \frac{c_{23}}{\sqrt{N_c}} + C_{\psi}\sqrt{c_{16}\zeta_{\text{approx}}^{\text{critic}}} +c_{24}\zeta_{\text{approx}}^{\text{critic}} + C^*\sqrt{\zeta_{\text{approx}}^{\text{actor}}}. \nonumber
	\end{align}
	Moreover, to achieve $J(\omega^*)-\mathbb{E}\big[J(\omega_{\widehat{T}})\big] \le \epsilon$ for any $\epsilon\ge 2C_{\psi}\sqrt{c_{16}\zeta_{\text{approx}}^{\text{critic}}} +2c_{24}\zeta_{\text{approx}}^{\text{critic}} + 2C^*\sqrt{\zeta_{\text{approx}}^{\text{actor}}}$, we can choose $T=\mathcal{O}(\epsilon^{-1})$, $N,N_c=\mathcal{O}(\epsilon^{-2})$, $T_c,T_c',T',T_z,K=\mathcal{O}(\ln\epsilon^{-1})$. Consequently, the overall sample complexity is $T(T_cN_c+N)=\mathcal{O}(\epsilon^{-3}\ln\epsilon^{-1})$, and the communication complexities for synchronizing linear model parameters and rewards are $T(T_c+T_c')=\mathcal{O}(\epsilon^{-1}\ln\epsilon^{-1})$ and $TT'=\mathcal{O}(\epsilon^{-1}\ln\epsilon^{-1})$, respectively.
\end{restatable}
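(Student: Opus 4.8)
The plan is to follow the standard global-convergence framework for natural policy gradient methods and then carefully propagate every source of estimation error listed in the bound. First I would invoke the performance difference lemma to write the suboptimality gap as $(1-\gamma)\big(J(\omega^*)-J(\omega_t)\big)=\mathbb{E}_{s\sim\nu_{\omega^*},a\sim\pi_{\omega^*}}\big[A_{\omega_t}(s,a)\big]$, and control the right-hand side through the potential $\mathrm{KL}\big(\pi_{\omega^*}(\cdot|s)\,\|\,\pi_{\omega_t}(\cdot|s)\big)$. Using the actor update $\omega_{t+1}=\omega_t+\alpha h_t$ together with the boundedness and Lipschitzness of the score function (\Cref{assum_policy}), a second-order expansion of $\ln\pi_{\omega}$ (whose gradient is $L_{\psi}$-Lipschitz) gives
\begin{align}
\mathrm{KL}\big(\pi_{\omega^*}\|\pi_{\omega_t}\big)-\mathrm{KL}\big(\pi_{\omega^*}\|\pi_{\omega_{t+1}}\big)\ge \alpha\,\mathbb{E}_{a\sim\pi_{\omega^*}}\big[\psi_{\omega_t}(a|s)^{\top}h_t\big]-\tfrac{L_{\psi}}{2}\alpha^2\|h_t\|^2. \nonumber
\end{align}
Summing over $t=0,\dots,T-1$ and telescoping the KL terms, the initial divergence and the step-size term collapse (after choosing $\alpha$ in the prescribed range and dividing by $T$) into the leading $c_{17}/(T\alpha)$ contribution, while the residual $\mathbb{E}_{\nu_{\omega^*},\pi_{\omega^*}}\big[\psi_{\omega_t}^{\top}h_t-A_{\omega_t}\big]$ must be bounded separately.

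Next I would split this residual into an approximation part $\psi_{\omega_t}^{\top}h(\omega_t)-A_{\omega_t}$ and an estimation part $\psi_{\omega_t}^{\top}\big(h_t-h(\omega_t)\big)$, where $h(\omega_t)=F(\omega_t)^{-1}\nabla J(\omega_t)$ is the exact natural gradient, i.e.\ the compatible linear approximation of $A_{\omega_t}$. By Cauchy--Schwarz and a change of measure from $\nu_{\omega^*},\pi_{\omega^*}$ to $\nu_{\omega_t},\pi_{\omega_t}$ via \Cref{assum_pidev} (which supplies the factor $C_*$), the approximation part is controlled by $C^*\sqrt{\zeta_{\text{approx}}^{\text{actor}}}$ and the estimation part by $C_{\psi}\,\mathbb{E}\|h_t-h(\omega_t)\|$ up to the same measure-change constant.

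The crux of the argument is bounding $\mathbb{E}\|h_t-h(\omega_t)\|^2$, i.e.\ analyzing the decentralized Markovian SGD in \eqref{eq: SGD} solving the strongly convex quadratic program \eqref{eq: quadratic}. Since \Cref{assum_fisher} gives $F(\omega_t)\succeq\lambda_F I$, each inner step contracts the optimization error by $(1-\eta\lambda_F/2)$, producing the geometric term $c_{18}(1-\eta\lambda_F/2)^{(K-1)/4}$. I would then decompose the per-step gradient error \eqref{eq: partial_f} into (i) the Markovian bias and variance of the empirical Fisher--vector product, (ii) the consensus error of the inner averaging computing $\psi_{\omega_t}^{\top}h_{t,k}$, contributing $c_{19}\sigma_W^{T_z}$, and (iii) the error carried by $\widehat{\nabla}J(\omega_t;\mathcal{B}_{t,k})$, which via \eqref{eq:spg}, \Cref{lemma:Rgerr} and \Cref{prop:TD} injects $c_{20}\sigma_W^{T'}$ together with the full critic error $c_{21}\beta\sigma_W^{T_c'}+c_{22}(1-\lambda_B\beta/8)^{T_c/2}+c_{23}/\sqrt{N_c}+C_{\psi}\sqrt{c_{16}\zeta_{\text{approx}}^{\text{critic}}}+c_{24}\zeta_{\text{approx}}^{\text{critic}}$. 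The delicate point, and the main obstacle, is that the Markovian bias and variance accumulate across the $K$ inner steps; to keep the sample count optimal I would use the increasing batch size $N_k\propto(1-\eta\lambda_F/2)^{-k/2}$, so that the growing batches exactly offset the geometric decay of the optimization error while the total $\sum_k N_k$ of inner samples stays at the optimal scale. This adaptive-batch Markovian SGD bound requires simultaneously controlling the coupling between the iterate $h_{t,k}$ and the dependent Markovian samples, the inexact decentralized evaluation of $\psi^{\top}h$, and the bias inherited from the estimated policy gradient.

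Finally, I would assemble the pieces: substituting the $h_t$-error bound into the residual, summing the KL telescoping over $t$, dividing by $T$, and using that $\widetilde{T}$ is uniform on $\{1,\dots,T\}$ converts $\tfrac1T\sum_t\big(J(\omega^*)-J(\omega_t)\big)$ into $J(\omega^*)-\mathbb{E}[J(\omega_{\widetilde{T}})]$, giving exactly the stated bound. The complexity claims then follow by choosing $T=\mathcal{O}(\epsilon^{-1})$, $N,N_c=\mathcal{O}(\epsilon^{-2})$ and the logarithmic $T_c,T_c',T',T_z,K=\mathcal{O}(\ln\epsilon^{-1})$ so that every geometric term falls below $\epsilon$, and by counting $T(T_cN_c+N)$ samples and $T(T_c+T_c')$, $TT'$ communication rounds.
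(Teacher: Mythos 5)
Your plan follows the same route as the paper: the KL-potential argument combined with the performance difference lemma for the outer NPG iterations, the split of the residual into a compatible-approximation part (bounded by $C_*\sqrt{\zeta_{\text{approx}}^{\text{actor}}}$ via Assumption \ref{assum_pidev}) and an estimation part $C_{\psi}\,\mathbb{E}\|h_t-h(\omega_t)\|$, and a strongly convex Markovian SGD analysis with geometrically increasing batches for $h_t$. However, there is a genuine gap in how you dispose of the second-order term $\tfrac{L_{\psi}}{2}\alpha^2\|h_t\|^2$: you assert that after telescoping it ``collapses into the leading $c_{17}/(T\alpha)$ contribution,'' but this is not true without a further argument. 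Writing $\|h_t\|^2\le 2\|h_t-h(\omega_t)\|^2+2\|h(\omega_t)\|^2$, the optimization-error part is absorbed by your SGD bound, but the part $\|h(\omega_t)\|^2$ is not: if you bound it by the uniform constant $D_J^2/\lambda_F^2$ (item \ref{item:hnorm} of Lemma \ref{lemma:NAC}), its averaged contribution is $L_{\psi}\alpha D_J^2/\lambda_F^2$, which does \emph{not} vanish as $T\to\infty$ under the constant step size $\alpha=\mathcal{O}(1)$ that the theorem prescribes. Forcing this term below $\epsilon$ would require $\alpha=\mathcal{O}(\epsilon)$, hence $T=\mathcal{O}(\epsilon^{-2})$ to keep $c_{17}/(T\alpha)\le\epsilon$, and the sample complexity would degrade to $T(T_cN_c+N)=\mathcal{O}(\epsilon^{-4}\ln\epsilon^{-1})$ rather than the claimed $\mathcal{O}(\epsilon^{-3}\ln\epsilon^{-1})$.

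The paper closes this hole with a two-stage argument that your plan omits. Before the KL analysis, it runs a descent-lemma analysis (smoothness of $J$, item \ref{item:gradJ_Lip} of Lemma \ref{lemma:policy}) of the same update $\omega_{t+1}=\omega_t+\alpha h_t$ to prove the auxiliary bound
\begin{align}
\frac{1}{T}\sum_{t=0}^{T-1}\mathbb{E}\big\|\nabla J(\omega_t)\big\|^2\le \frac{4C_{\psi}^2R_{\max}}{T\alpha}+4C_{\psi}^4\,\big[\text{$h_t$-error terms}\big],\nonumber
\end{align}
then bounds $\|h(\omega_t)\|^2\le\lambda_F^{-2}\|\nabla J(\omega_t)\|^2$ and substitutes this auxiliary bound, so the dangerous term becomes $\mathcal{O}(1/T)$ plus $\alpha$ times error terms that are already present; this is precisely why $c_{17}$ contains the extra summand $4L_{\psi}C_{\psi}^2R_{\max}/\lambda_F^2$ on top of the initial KL divergence. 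You would need to add this step (or an equivalent control of $\frac1T\sum_t\mathbb{E}\|h_t\|^2$) for your argument to yield the stated bound and complexity. A minor further imprecision: the estimation part $\mathbb{E}_{\omega^*}\big[\psi_t^{\top}(h_t-h(\omega_t))\big]$ needs no change-of-measure constant, since $\|\psi_t\|\le C_{\psi}$ holds uniformly under any sampling distribution; only the compatible-approximation part uses $C_*$.
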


Theorem \ref{thm:NAC} provides the first finite-time analysis of fully decentralized natural AC algorithm.
Our result proves that the function value optimality gap converges to the order $\mathcal{O}\big(N_c^{-1/2}+\sqrt{\zeta_{\text{approx}}^{\text{critic}}} + \sqrt{\zeta_{\text{approx}}^{\text{actor}}}\big)$, which can be made arbitrarily close to the actor and critic approximation error by choosing a sufficiently large batch size $N_c$. In particular, exact global optimum can be achieved when there is no model approximation error. We note that the overall sample complexity of our decentralized NAC is $\mathcal{O}(\epsilon^{-3}\ln\epsilon^{-1})$, 
matching the state-of-the-art complexity result for centralized NAC \cite{xu2020improving}. Moreover, with the mini-batch updates, the overall communication complexity is significantly reduced to $\mathcal{O}(\epsilon^{-1}\ln\epsilon^{-1})$.

Similar to 
that of
\Cref{thm:AC}, our analysis of \Cref{thm:NAC} also leverages the mini-batch decentralized TD updates to reduce the communication complexity and  deal with the bias and variance {of the local policy gradient} introduced by noisy rewards. In addition, decentralized NAC uses mini-batch SGD with Markovian sampling to solve the quadratic problem in \cref{eq: quadratic}. Here, we use a special geometrically increasing batch size scheduling scheme, i.e., $N_k\propto (1-\eta\lambda_F/2)^{-k/2}$, to achieve the best possible convergence rate under the total sample budget that $\sum_{k=1}^{K}N_k=N$ and obtain the desired overall sample complexity result. Such an analysis of SGD with Markovian sampling under adaptive batch size scheduling has not been studied in the  literature and can be of independent interests.



\section{Experiments}\label{experiments}
We test our decentralized algorithms in three experiments: a decentralized ring network, a fully connected network, and a two-agent cliff navigation environment. Due to space limitation, we present only the ring network experiment results. Please refer to Appendix \ref{supp: experiments} for the other results, all of which demonstrate the effectiveness of our algorithms. 

We simulate a fully decentralized ring network with 6 agents. 
We implement four decentralized AC-type algorithms and compare their performance, namely, our Algorithms \ref{alg: 1} and \ref{alg: 3}, 
the existing decentralized AC algorithm (Algorithm 2 of \cite{zhang2018fully}) that uses a linear model to parameterize the agents' averaged reward (we name it DAC-RP1 for decentralized AC with reward parameterization), and a modified version of DAC-RP1 that uses minibatch updates with batch size $N=100$, which we refer to as DAC-RP100. 
For our Algorithm \ref{alg: 1}, we choose $T=500$, $T_c=50$, $T_c'=10$, $N_c=10$, $T'=T_z=5$, $\beta=0.5$, $\{\sigma_m\}_{m=1}^6=0.1$, and consider batch size choices $N=100, 500, 2000$. Algorithm \ref{alg: 3} uses the same hyperparameters as those of Algorithm \ref{alg: 1} except that $T=2000$ in Algorithm \ref{alg: 3}. For DAC-RP1, we set learning rates $\beta_{\theta}=2(t+1)^{-0.9}$, $\beta_{v}=5(t+1)^{-0.8}$ and batch size $N=1$ as mentioned in {\cite{zhang2018fully}}. 
The modified DAC-RP100 adopts the same learning rates as \Cref{alg: 1} with $N=100$.

\begin{figure}
    \centering
       \begin{subfigure}[b]{0.2350\textwidth}   
        \centering 
        \includegraphics[width=\textwidth]{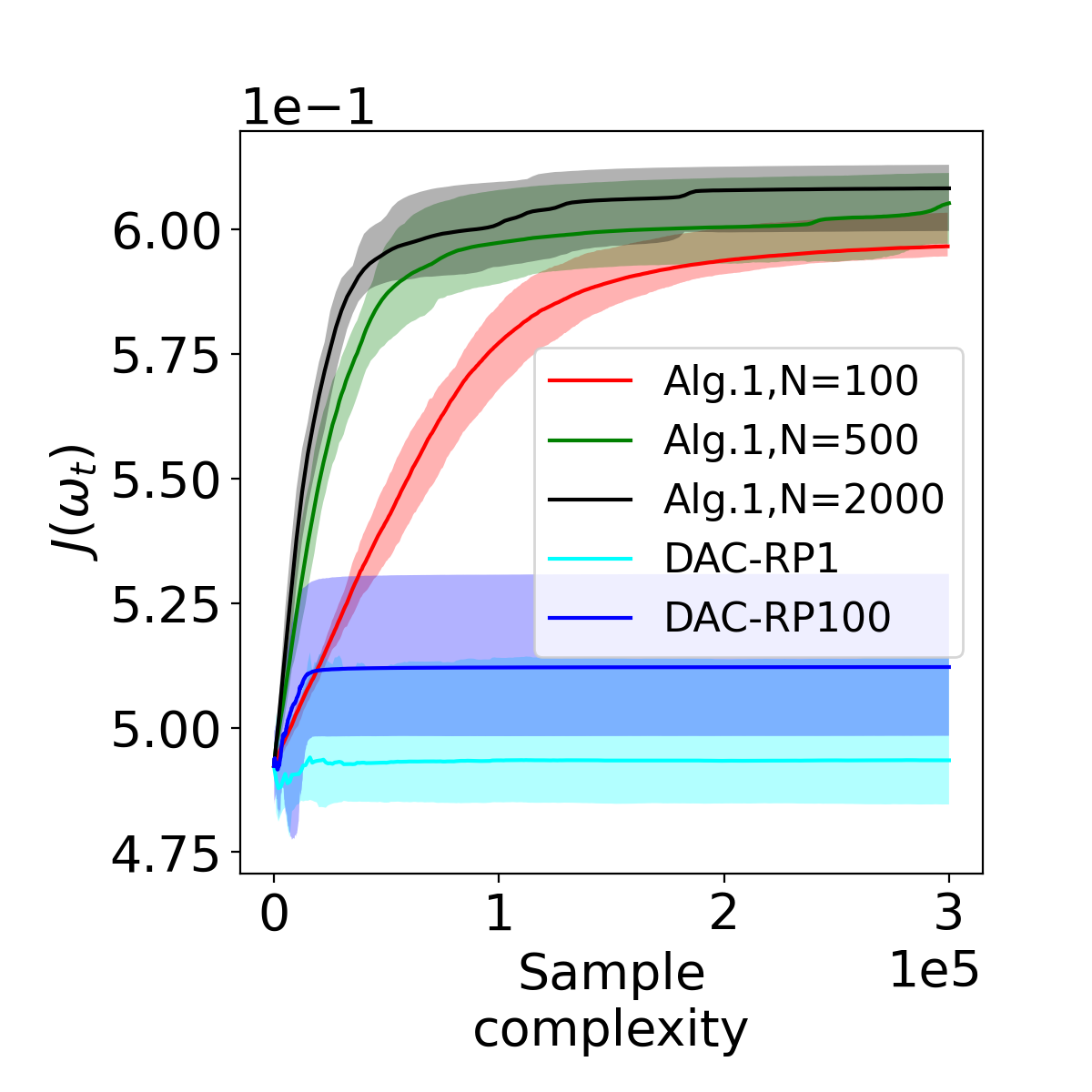}
    \end{subfigure}
    \begin{subfigure}[b]{0.2350\textwidth}   
        \centering 
        \includegraphics[width=\textwidth]{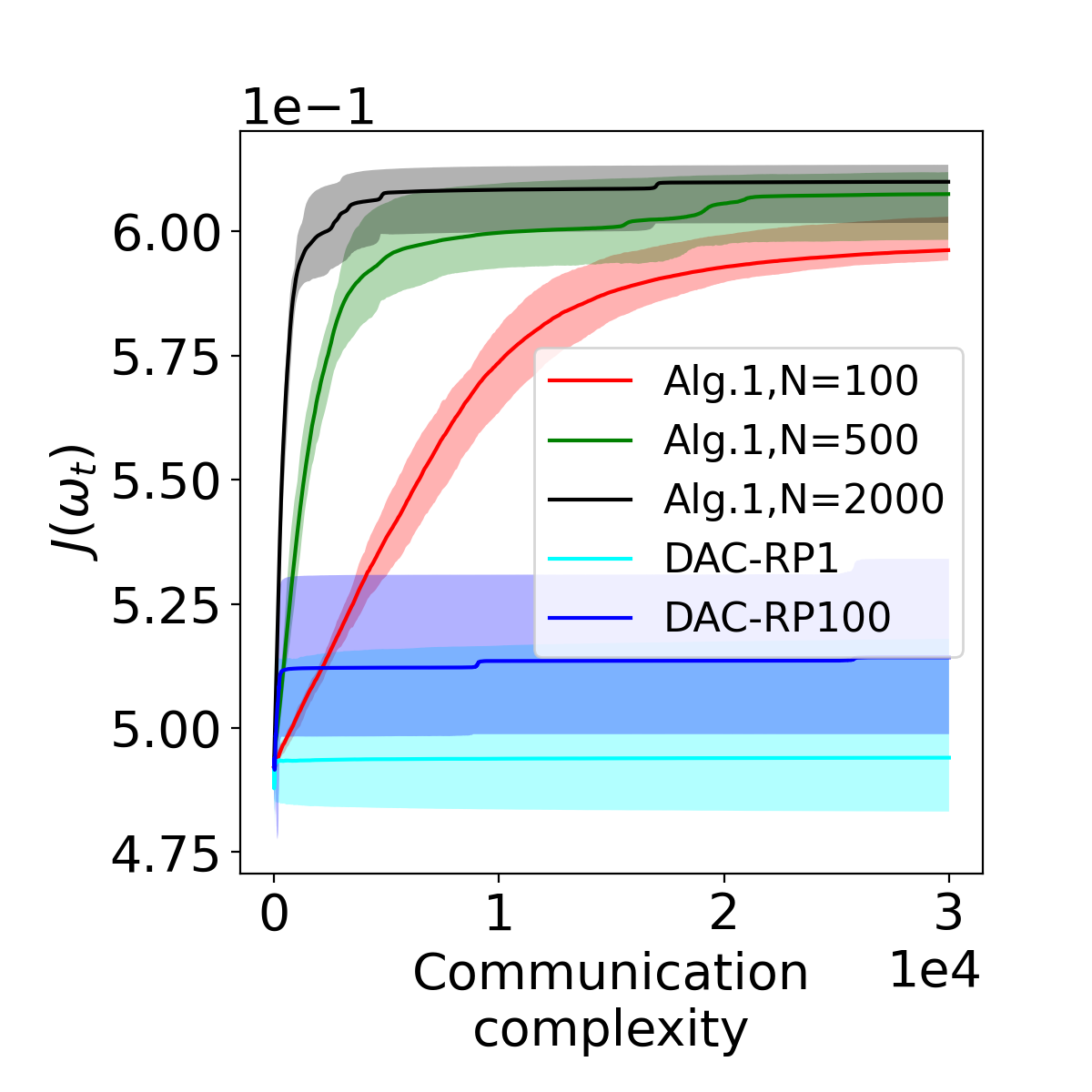}
    \end{subfigure}
    \\
    \begin{subfigure}[b]{0.2350\textwidth}   
        \centering 
        \includegraphics[width=\textwidth]{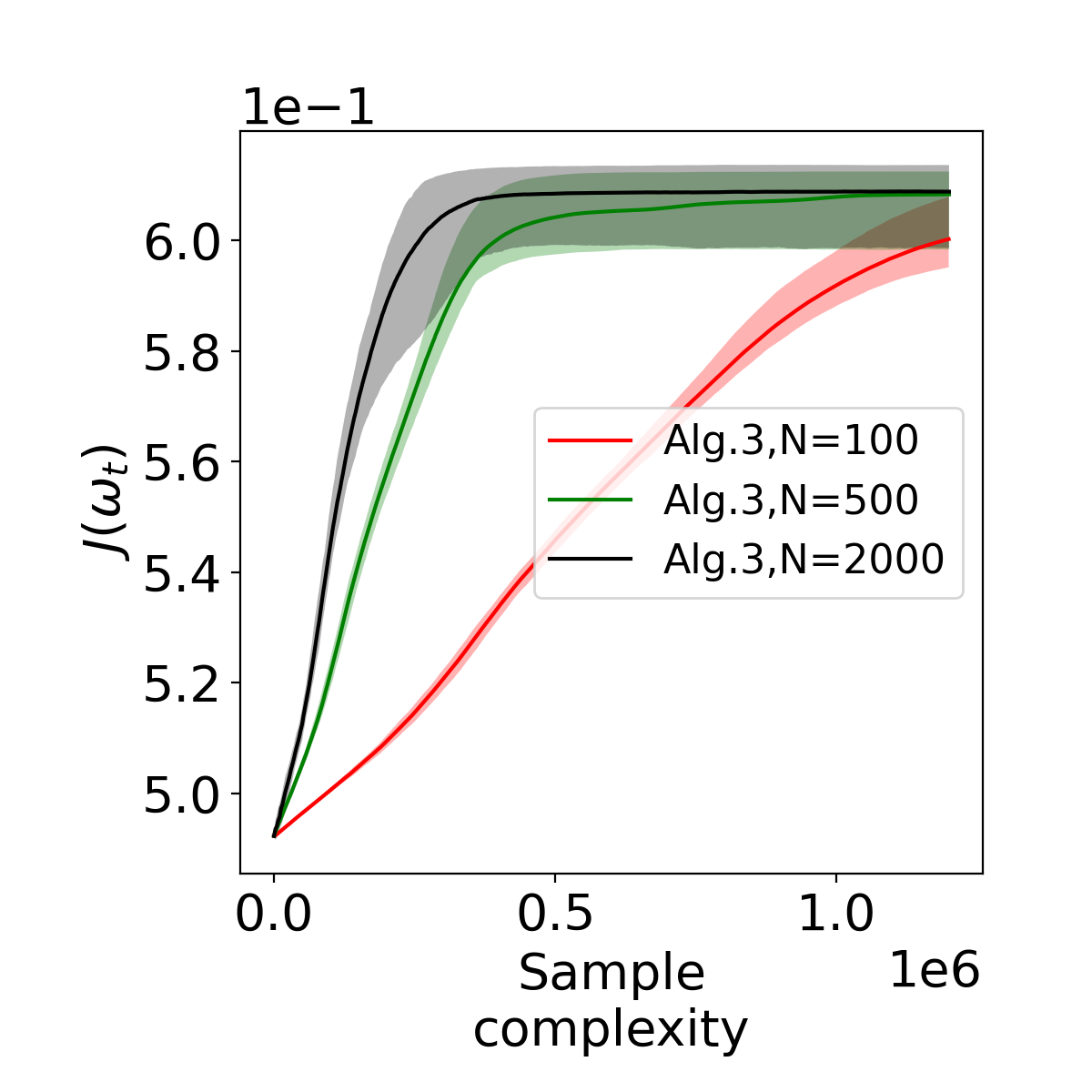}
    \end{subfigure}
    \begin{subfigure}[b]{0.2350\textwidth}   
        \centering 
        \includegraphics[width=\textwidth]{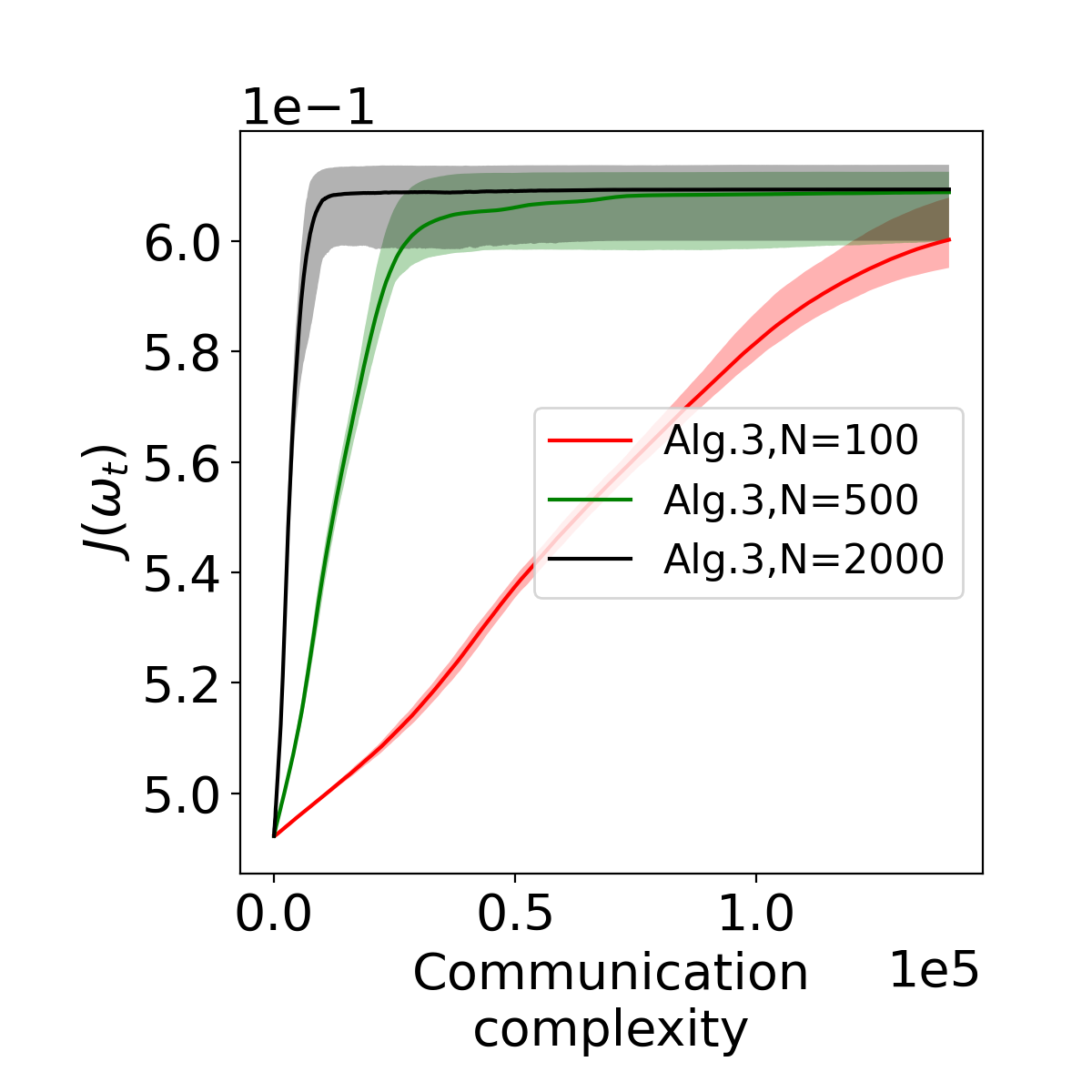}
    \end{subfigure}
    \vspace{0mm}
	\caption{Comparison of accumulated discounted reward $J(\omega_t)$ among decentralized AC and NAC-type algorithms in a simulated ring network with 6 agents.}
	\label{fig_J}
	\vspace{0mm}
\end{figure}

Figure \ref{fig_J} plots the accumulated reward $J(\omega_t)$ v.s. communication  
and sample complexity.
Each curve includes 10 repeated experiments, and its upper and lower envelopes denote the 95\% and 5\% percentiles of the 10 repetitions, respectively. 
For our decentralized AC algorithm (top two figures), its communication and sample complexities for achieving a high accumulated reward are significantly reduced under a larger batch size $N$. This matches our theoretical understanding in Theorem \ref{thm:AC} that a large $N$ helps reduce the communication frequency and policy gradient variance. In comparison, DAC-RP1 (with $N=1$) has little improvement on the accumulated reward. Moreover, although the modified DAC-RP100 (with  $N=100$) outperforms DAC-RP1, its performance is much worse than our \Cref{alg: 1} with $N=100$. This performance gap is due to two reasons: (i) Both DAC-RP algorithms suffer from an inaccurate parameterized estimation of the averaged reward, and their mean relative reward errors are over 100\%. In contrast, our noisy averaged reward estimation achieves a mean relative error in the range of $10^{-5}\sim 10^{-4}$;
(ii) Both DAC-RP algorithms apply only a single TD update per-round, and hence suffers from a large mean relative TD error (about $2\%$ and $1\%$ for DAC-RP1 and DAC-RP100, respectively)
whereas our algorithms perform multiple TD learning updates per-round and achieve a smaller mean relative TD error (about $0.3\%$).
For our decentralized NAC algorithm (bottom two figures), one can make similar observations and conclusions.

\vspace{-0mm}
\section{Conclusion }\label{sec_conclusion}
\vspace{-0mm}
 We developed fully-decentralized AC and NAC algorithms that are efficient and {do not reveal agents' local actions and policies}. The agents share noisy reward information and adopt mini-batch updates to improve sample and communication efficiency. Under Markovian sampling and linear function approximation, we proved that our decentralized AC and NAC algorithms achieve the state-of-the-art sample complexities $\mathcal{O}(\epsilon^{-2}\ln\epsilon^{-1})$ and $\mathcal{O}(\epsilon^{-3}\ln\epsilon^{-1})$, respectively, and they both achieve a small communication complexity $\mathcal{O}(\epsilon^{-1}\ln\epsilon^{-1})$. 
 Numerical experiments  demonstrate that our algorithms achieve better sample  and communication complexity than the existing decentralized AC algorithm that adopts reward parameterization.

\bibliographystyle{ieeetr}
\bibliography{decentralAC.bib}

\newpage
\onecolumn
\appendix

\addcontentsline{toc}{section}{Appendix} 
\part{Appendix} 
\parttoc 
\allowdisplaybreaks

\section{Notations}\label{supp:notation}

\textbf{Norms:} For any vector $x$,  we denote $\|x\|$ as its $\ell_2$ norm. For any matrix $X$, we denote $\|X\|, \|X\|_F$ as its spectral norm and Frobenius norm, respectively. 

\textbf{Difference matrix:} $\Delta:=I-\frac{1}{M}\mathbf{1}\mathbf{1}^{\top}$, where $\mathbf{1}$ denotes a column vector that consists of 1s. 

\textbf{Moments of random vectors: } For a random vector $X$, we define its variance and covariance matrix as $\text{Var}(X):=\mathbb{E}\|X-\mathbb{E}X\|^2$ and $\text{Cov}(X):=\mathbb{E}\big([X-\mathbb{E}X][X-\mathbb{E}X]^{\top}\big)$, respectively. It is well known that $\mathbb{E}\|X\|^2=\text{Var}(X)+\|\mathbb{E}X\|^2$ and that $\text{Var}(X)=\text{tr}[\text{Cov}(X)]$. 



\textbf{Score function:} At any time $t$, The joint score function $\psi_{t}(a_t|s_t):=\nabla_{\omega} \ln\pi_{t}(a_t|s_t)$ can be decomposed into individual score functions $\psi_{t}^{(m)}(a_t^{(m)}|s_t):= \nabla_{\omega^{(m)}}\ln\pi_{t}^{(m)}(a_t^{(m)}|s_t)$
as $\psi_{t}(a_t|s_t)=[{\psi_t^{(1)}(a_t^{(1)}|s_t)}, \ldots, {\psi_t^{(M)}(a_t^{(M)}|s_t)} ]$.

\textbf{Reward functions:} At any time $t$, we denote $R_t^{(m)}:=R^{(m)}(s_t,a_t,s_{t+1})$ and $\overline{R}_t:=\overline{R}(s_t,a_t,s_{t+1})$, where $\overline{R}(s,a,s')=\frac{1}{M}\sum_{m=1}^M R^{(m)}(s,a,s')$.

\textbf{Policy gradient: } The policy gradient theorem \cite{sutton1999policy} shows that
\begin{align}
    \nabla J(\omega)=\mathbb{E}_{\nu_{\omega}}\big[A_{\omega}(s,a)\psi_{\omega}(s,a)\big]. \label{eq:pg} 
\end{align}
where $A_{\omega}(s,a):=Q_{\omega}(s,a)-V_{\omega}(s)$ denotes the advantage function. 
In the decentralized case, we have the approximations $V_{\omega}(s_t)\approx \phi(s_t)^{\top}\theta$, $Q_{\omega}(s_t,a_t)\approx \overline{R}_t + \gamma\phi(s_{t+1}')^{\top}\theta$ {where $s_{t+1}'\sim\mathcal{P}(\cdot|s_t,a_t)$}. Therefore, we can stochastically approximate the partial policy gradient as eq. \eqref{eq: pg1}, i.e., for $m=1,...,M$,

$\nabla_{\omega^{(m)}} J(\omega_t) {\approx} \Big[\overline{R}_t+ \gamma \phi(s_{t+1}')^{\top}\theta_t^{(m)}-\phi(s_t)^{\top}\theta_t^{(m)}\Big] \psi_{t}^{(m)}(a_{t}^{(m)}|s_{t})$. 

We also define the following mini-batch stochastic (partial) policy gradient.

${\widetilde{\nabla}}_{\omega^{(m)}} J(\omega_t) := \frac{1}{N}\sum_{i=tN}^{(t+1)N-1}\Big[{\overline{R}}_i+\gamma \phi(s_{i+1}')^{\top}\theta_t^{(m)}-\phi(s_i)^{\top}\theta_t^{(m)}\Big] \psi_{t}^{(m)}(a_{i}^{(m)}|s_{i})$.

${\widetilde{\nabla}} J(\omega_t) := \big[{\widetilde{\nabla}}_{\omega^{(1)}} J(\omega_t); \ldots;{\widetilde{\nabla}}_{\omega^{(M)}} J(\omega_t)\big]$.





\textbf{Filtrations: } We define the following filtrations for Algorithms \ref{alg: 1} \& \ref{alg: 3}.

$\mathcal{F}_{t}:=\sigma\big(\{\theta_{t'}^{(m)}\}_{m\in\mathcal{M}, 0\le t'\le t} \cup \{s_{i},a_{i},s_{i+1}',\{e_i^{(m)}\}_{m\in\mathcal{M}}\}_{i=0}^{tN-1}\cup\{s_{tN}\}\big)$. 


$\mathcal{F}_{t}':=\sigma\big[\mathcal{F}_t\cup \sigma\big(\{s_{i},a_{i},s_{i+1}'\}_{i=tN+1}^{(t+1)N-1}\big)\big]$.




$\mathcal{F}_{t,k}=\sigma\big[\mathcal{F}_t\cup \sigma\big(\{s_{i},a_{i},s_{i+1},s_{i+1}',\{e_i^{(m)}\}_{m\in\mathcal{M}}\}_{i\in \cup_{k'=0}^{k-1}\mathcal{B}_{t,k'}}\big)\big]$.






\section{Proof of Theorem \ref{thm:AC}}\label{supp:thm_AC}
\thmAC*
\begin{proof}
	 Concatenating all the agents' actor updates in Algorithm \ref{alg: 1}, we obtain the joint actor update $\omega_{t+1}=\omega_t+\alpha \widehat{\nabla}J(\omega_t)$. Then, the item \ref{item:gradJ_Lip} of Lemma \ref{lemma:policy} implies that
	\begin{align}
		J(\omega_{t+1})&\ge J(\omega_{t})+\nabla J(\omega_{t})^{\top}(\omega_{t+1}-\omega_{t})-\frac{L_J}{2}\big\|\omega_{t+1}-\omega_{t}\big\|^2 \nonumber\\
		&=J(\omega_{t})+\alpha\nabla J(\omega_{t})^{\top}\widehat{\nabla}J(\omega_t)-\frac{L_J\alpha^2}{2}\big\|\widehat{\nabla}J(\omega_t)\big\|^2 \nonumber\\
		&\stackrel{(i)}{\ge} J(\omega_{t})+\alpha\|\nabla J(\omega_{t})\|^2+\alpha\nabla J(\omega_{t})^{\top}\big(\widehat{\nabla}J(\omega_t)-\nabla J(\omega_{t})\big) \nonumber\\ 
		&\quad -L_J\alpha^2\big\|\widehat{\nabla}J(\omega_t)-\nabla J(\omega_{t})\big\|^2 -L_J\alpha^2\big\|\nabla J(\omega_{t})\big\|^2 \nonumber\\
		&\stackrel{(ii)}{\ge} J(\omega_{t}) + \Big(\frac{\alpha}{2}-L_J\alpha^2\Big)\|\nabla J(\omega_{t})\|^2 - \Big(\frac{\alpha}{2}+L_J\alpha^2\Big)\big\|\widehat{\nabla}J(\omega_t)-\nabla J(\omega_{t})\big\|^2 \nonumber\\
		&\stackrel{(iii)}{\ge} J(\omega_{t}) + \frac{\alpha}{4}\|\nabla J(\omega_{t})\|^2 - \alpha\big\|\widehat{\nabla}J(\omega_t)-\nabla J(\omega_{t})\big\|^2 \nonumber
	\end{align}
	where (i) and (ii) use the inequalities $\|x\|^2\le 2\|x-y\|^2+2\|y\|^2$ and $x^{\top}y\ge -\frac{1}{2}\|x\|^2-\frac{1}{2}\|y\|^2$ for any $x,y\in\mathbb{R}^d$, respectively, and (iii) uses the condition that $\alpha\le \frac{1}{4L_J}$. Then, summing up the inequality above over $t=0,1,\ldots,T-1$ yields that 
	\begin{align}
		J(\omega_T)\ge J(\omega_0) + \frac{\alpha}{4}\sum_{t=0}^{T-1}\|\nabla J(\omega_{t})\|^2 - \alpha\sum_{t=0}^{T-1}\big\|\widehat{\nabla}J(\omega_t)-\nabla J(\omega_{t})\big\|^2.\nonumber
	\end{align}
	Rearranging the equation above and taking expectation on both sides yields that
	\begin{align}
		\mathbb{E} \big\|\nabla J(\omega_{\widetilde{T}})\big\|^2&=\frac{1}{T} \sum_{t=0}^{T-1}\mathbb{E}\|\nabla J(\omega_{t})\|^2 \nonumber\\
		&\le \frac{4}{T\alpha} \mathbb{E}[J(\omega_T)-J(\omega_0)] + \frac{4}{T} \sum_{t=0}^{T-1}\mathbb{E}\Big[\big\|\widehat{\nabla}J(\omega_t)-\nabla J(\omega_t)\big\|^2\Big] \nonumber\\
		&\stackrel{(i)}{\le} \frac{4R_{\max}}{T\alpha} +4c_4\sigma_{W}^{2T'}+4c_5\beta^2\sigma_{W}^{2T_c'}+4c_6\Big(1-\frac{\lambda_{B}}{4} \beta\Big)^{T_c}\nonumber\\
		&\quad +\frac{4c_7}{N}+\frac{4c_8}{N_c}+64C_{\psi}^2\zeta_{\text{approx}}^{\text{critic}},\label{eq:actor_err}
	\end{align}
	where (i) uses the item \ref{item:QJ_bound} of Lemma \ref{lemma:policy} and eq. \eqref{eq:gJerr} of Lemma \ref{lemma:Rgerr} (The condition of Lemma \ref{lemma:Rgerr} that $T'\ge \frac{\ln M}{2\ln(\sigma^{-1})}$ holds). This proves the error bound of Theorem \ref{thm:AC}. 
	
	Finally, for any $\epsilon\ge 128C_{\psi}^2\zeta_{\text{approx}}^{\text{critic}}$, it can be easily verified that the following hyperparameter choices make the error bound in \eqref{eq:actor_err} smaller than $\epsilon$ and also satisfy the conditions of this Theorem and those in Lemma \ref{prop:TD} that $\beta\le\min\big(\frac{\lambda_B}{8C_B^2},\frac{4}{\lambda_B},\frac{1-\sigma}{2C_B}\big)$, $N_c\ge \big(\frac{2}{\lambda_B}+2\beta\big)\frac{192 C_B^2[1+(\kappa-1)\rho]}{(1-\rho)\lambda_{B}}$.
	\begin{align}
	    \alpha=&\min\Big(1,\frac{1}{4L_J}\Big)=\mathcal{O}(1)\nonumber\\
	    \beta=&\min\big(\frac{\lambda_B}{8C_B^2},\frac{4}{\lambda_B},\frac{1-\sigma}{2C_B}\big)=\mathcal{O}(1) \nonumber\\
	    T=&\Big\lceil\frac{48R_{\max}}{\alpha \epsilon}\Big\rceil=\mathcal{O}(\epsilon^{-1}) \nonumber\\
	    T'=&\Big\lceil\frac{1}{2\ln(\sigma^{-1})}\max\big[\ln (48c_4\epsilon^{-1}), \ln M\big]\Big\rceil=\mathcal{O}\big(\ln(\epsilon^{-1})\big) \nonumber\\
	    T_c'=&\Big\lceil\frac{\ln (48c_5\beta^2\epsilon^{-1})}{2\ln(\sigma^{-1})}\Big\rceil=\mathcal{O}\big(\ln(\epsilon^{-1})\big) \nonumber\\
	    T_c=&\Big\lceil \frac{\ln (48c_6\epsilon^{-1})}{2\ln[(1-\lambda_B\beta/4)^{-1}]} \Big\rceil=\mathcal{O}\big(\ln(\epsilon^{-1})\big) \nonumber\\
	    N=&\Big\lceil\frac{48c_7}{\epsilon}\Big\rceil=\mathcal{O}(\epsilon^{-1}) \nonumber\\
        N_c=&\Big\lceil\max\Big[\frac{48c_7}{\epsilon}, \big(\frac{2}{\lambda_B}+2\beta\big)\frac{192 C_B^2[1+(\kappa-1)\rho]}{(1-\rho)\lambda_{B}}\Big]\Big\rceil=\mathcal{O}(\epsilon^{-1}) \label{eq: hyp_AC111} 
	\end{align}
\end{proof}

\section{Proof of Theorem \ref{thm:NAC}}\label{supp:thm_NAC}
\thmNAC*
\begin{proof}
    Concatenating all the agents' actor updates in Algorithm \ref{alg: 3}, we obtain the joint actor update $\omega_{t+1}=\omega_t+\alpha {h}_{t}$. Then, the item \ref{item:gradJ_Lip} of Lemma \ref{lemma:policy} implies that
	\begin{align}
	J(\omega_{t+1})&\ge J(\omega_{t})+\nabla J(\omega_{t})^{\top}(\omega_{t+1}-\omega_{t})-\frac{L_J}{2}\big\|\omega_{t+1}-\omega_{t}\big\|^2 \nonumber\\
	&=J(\omega_{t})+\alpha\nabla J(\omega_{t})^{\top}{h}_{t}-\frac{L_J\alpha^2}{2}\big\|{h}_{t}\big\|^2 \nonumber\\
	&\stackrel{(i)}{\ge} J(\omega_{t})+\alpha\nabla J(\omega_{t})^{\top}F(\omega_t)^{-1}\nabla J(\omega_t)+\alpha\nabla J(\omega_{t})^{\top}[{h}_{t}-h(\omega_t)] \nonumber\\ 
	&\quad -L_J\alpha^2\big\|{h}_{t}-h(\omega_t)\big\|^2 -L_J\alpha^2\big\|F(\omega_t)^{-1}\nabla J(\omega_t)\big\|^2 \nonumber\\
	&\stackrel{(ii)}{\ge} J(\omega_{t}) + \Big(\frac{\alpha}{C_{\psi}^2}-\frac{\alpha}{2C_{\psi}^2}-\frac{L_J\alpha^2}{\lambda_{F}^{2}}\Big)\|\nabla J(\omega_{t})\|^2 - \Big(\frac{\alpha C_{\psi}^2}{2}+L_J\alpha^2\Big) \big\|h_{t}-h(\omega_t)\big\|^2 \nonumber\\
	&\stackrel{(iii)}{\ge} J(\omega_{t}) + \frac{\alpha}{4C_{\psi}^2}\|\nabla J(\omega_{t})\|^2 - \alpha C_{\psi}^2\big\|h_{t}-h(\omega_t)\big\|^2 \nonumber
	\end{align}
	where (i) uses the notation that	$h(\omega_t)\stackrel{\triangle}{=}F(\omega_t)^{-1}\nabla J(\omega_t)$ and the inequality that $\|x\|^2\le 2\|x-y\|^2+2\|y\|^2$ for any $x,y\in\mathbb{R}^d$, (ii) uses the item \ref{item:Finv_exceed} of Lemma \ref{lemma:NAC} and the inequality that $x^{\top}y\ge -\frac{1}{2C_{\psi}^2}\|x\|^2-\frac{C_{\psi}^2}{2}\|y\|^2$ for any $x,y\in\mathbb{R}^d$, and (iii) uses the condition that $\alpha\le \min\Big(\frac{\lambda_{F}^2}{4L_J C_{\psi}^2}, \frac{C_{\psi}^2}{2L_J}\Big)$. Taking expectation on both sides of the above inequality, summing over $t=0,1,\ldots, T-1$ and rearranging, we obtain that
	\begin{align}
		\frac{1}{T}\sum_{t=0}^{T-1} \mathbb{E}\|\nabla J(\omega_{t})\|^2 &\le \frac{4C_{\psi}^2}{T\alpha}\mathbb{E}[J(\omega_T)-J(\omega_0)] + \frac{4C_{\psi}^4}{T} \sum_{t=0}^{T-1} \mathbb{E}\big\|h_{t}-h(\omega_t)\big\|^2 \nonumber\\
		&\stackrel{(i)}{\le} \frac{4C_{\psi}^2R_{\max}}{T\alpha} + 4C_{\psi}^4\Big[c_{10}\Big(1-\frac{\eta\lambda_F}{2}\Big)^{(K-1)/2} + c_{11}\sigma^{2T_z} + c_{12}\sigma^{2T'} \nonumber\\
	    &\quad + c_{13}\beta^2\sigma^{2T_c'} + c_{14}\Big(1-\frac{\lambda_{B}}{4} \beta\Big)^{T_c} + \frac{c_{15}}{N_c} + c_{16} \zeta_{\text{approx}}^{\text{critic}}\Big], \label{eq:dJ_err}
	\end{align}
	where (i) uses the item \ref{item:QJ_bound} of Lemma \ref{lemma:policy} and the item \ref{item:ht_err} of Lemma \ref{lemma:NAC}. 
	
	By Assumption \ref{assum_policy}, $\ln\pi_{\omega}(s,a)$ is an $L_{\psi}$-smooth function of $\omega$. Denote $\omega^* {:=}\mathop{\arg\min}_{\omega\in\Omega}J(\omega)$ and denote $\mathbb{E}_{\omega^*}$ as the unconditional expectation over $s\sim\nu_{\omega^*}, a\sim\pi_{\omega^*}(\cdot|s)$. 
	We obtain that
	\begin{align}
	&\mathbb{E}_{\omega^*}\big[\ln\pi_{t+1}(a|s)-\ln\pi_{t}(a|s)\big]\nonumber\\
	&\ge \mathbb{E}_{\omega^*}\Big[\big(\nabla_{\omega_t} \ln\pi_t(a|s)\big)^{\top}(\omega_{t+1}-\omega_{t})\Big] - \frac{L_{\psi}}{2}\mathbb{E}\|\omega_{t+1}-\omega_{t}\|^2 \nonumber\\
	&=\alpha \mathbb{E}_{\omega^*}\big[\psi_t(a|s)^{\top} h_{t}\big] - \frac{L_{\psi}\alpha^2}{2}\mathbb{E}\big[\|h_{t}\|^2\big] \nonumber\\
	&\stackrel{(i)}{\ge}\alpha \mathbb{E}_{\omega^*}\big[\psi_t(a|s)^{\top} \big(h_{t} - h(\omega_{t})\big)\big] + \alpha \mathbb{E}_{\omega^*}\big[\psi_t(a|s)^{\top}h(\omega_{t}) - A_{\omega_t}(s,a)\big] + \alpha \mathbb{E}_{\omega^*}\big[A_{\omega_t}(s,a)\big]\nonumber\\
	&\quad -L_{\psi}\alpha^2\mathbb{E} \big[\big\|h_{t}-h(\omega_{t})\big\|^2\big] -L_{\psi}\alpha^2\mathbb{E} \big[\big\|F(\omega_t)^{-1}\nabla J(\omega_t)\big\|^2\big]  \nonumber\\
	&\stackrel{(ii)}{\ge} -\alpha C_{\psi} \sqrt{\mathbb{E}\big[\big\|h_{t} - h(\omega_{t})\big\|^2\big]} -\alpha C_* \sqrt{\zeta_{\text{approx}}^{\text{actor}}}  \nonumber\\
	&\quad + \alpha \mathbb{E} \big[J(\omega^*)-J(\omega_t)\big] -L_{\psi}\alpha^2\mathbb{E} \big[\big\|h_{t}-h(\omega_{t})\big\|^2\big] -L_{\psi}\alpha^2 \lambda_{F}^{-2}\mathbb{E} \big[\big\|\nabla J(\omega_t)\big\|^2\big], \nonumber
	\end{align}
	where (i) uses the inequality that $\|x\|^2\le 2\|x-y\|^2+2\|y\|^2$ for any $x,y\in\mathbb{R}^d$ and the notation that $h(\omega_{t})\stackrel{\triangle}{=} F(\omega_t)^{-1}\nabla J(\omega_t)$, (ii) uses Cauchy-Schwarz inequality, the items \ref{item:Finv_exceed} \& \ref{item:actor_approx_err2} of Lemma \ref{lemma:NAC}, the inequality that $\mathbb{E}\|X\|\le \sqrt{\mathbb{E}\big[\|X\|^2\big]}$ for any random vector $X$ and the equality that $\mathbb{E}_{\omega^*}\big[A_{\omega_t}(s,a)\big]=\mathbb{E}\big[J(\omega^*)-J(\omega_t)\big]$ (See its proof in Lemma 3.2 of \cite{agarwal2019theory}.). Averaging the inequality above over $t=0,1,\ldots,T-1$ and rearranging it yields that
	\begin{align}
		&J(\omega^*)-\mathbb{E}\big[J(\omega_{\widetilde{T}})\big]=\frac{1}{T}\sum_{t=0}^{T-1}\mathbb{E}\big[J(\omega_{t})\big] \nonumber\\
		&\le  \frac{1}{T\alpha}\mathbb{E}_{\omega^*}\big[\ln\pi_{T}(a|s)-\ln\pi_{0}(a|s)\big] + C_*\sqrt{\zeta_{\text{approx}}^{\text{actor}}} + \frac{C_{\psi}}{T}\sum_{t=0}^{T-1} \sqrt{\mathbb{E}\big[\big\|h_{t} - h(\omega_{t})\big\|^2\big]} \nonumber\\
		&\quad + \frac{L_{\psi}\alpha}{T}\sum_{t=0}^{T-1} \mathbb{E}\big[\big\|h_{t} - h(\omega_{t})\big\|^2\big] + \frac{L_{\psi}\alpha}{T\lambda_{F}^2} \sum_{t=0}^{T-1} \mathbb{E}\big[\big\|\nabla J(\omega_{t})\big\|^2\big]\nonumber\\
		&\stackrel{(i)}{\le} \frac{1}{T\alpha}\mathbb{E}_{s\sim \nu_{\omega^*}}\big[\text{KL}\big(\pi_{\omega^*}(\cdot|s)||\pi_{0}(\cdot|s)\big)-\text{KL}\big(\pi_{\omega^*}(\cdot|s)||\pi_{T}(\cdot|s)\big)\big] + C^*\sqrt{\zeta_{\text{approx}}^{\text{actor}}} \nonumber\\
		&\quad +C_{\psi} \Big[c_{10}\Big(1-\frac{\eta\lambda_F}{2}\Big)^{(K-1)/2} + c_{11}\sigma^{2T_z} + c_{12}\sigma^{2T'} + c_{13}\beta^2\sigma^{2T_c'}\nonumber\\
	    &\quad + c_{14}\Big(1-\frac{\lambda_{B}}{4} \beta\Big)^{T_c} + \frac{c_{15}}{N_c} + c_{16} \zeta_{\text{approx}}^{\text{critic}}\Big]^{1/2} \nonumber\\
		&\quad +L_{\psi}\alpha \Big[c_{10}\Big(1-\frac{\eta\lambda_F}{2}\Big)^{(K-1)/2} + c_{11}\sigma^{2T_z} + c_{12}\sigma^{2T'} + c_{13}\beta^2\sigma^{2T_c'}\nonumber\\
	    &\quad + c_{14}\Big(1-\frac{\lambda_{B}}{4} \beta\Big)^{T_c} + \frac{c_{15}}{N_c} + c_{16} \zeta_{\text{approx}}^{\text{critic}}\Big] \nonumber\\
		&\quad + \frac{L_{\psi}\alpha}{\lambda_{F}^2} \Big\{\frac{4C_{\psi}^2R_{\max}}{T\alpha} + 4C_{\psi}^4\Big[c_{10}\Big(1-\frac{\eta\lambda_F}{2}\Big)^{(K-1)/2} + c_{11}\sigma^{2T_z} + c_{12}\sigma^{2T'} + c_{13}\beta^2\sigma^{2T_c'}\nonumber\\
	    &\quad + c_{14}\Big(1-\frac{\lambda_{B}}{4} \beta\Big)^{T_c} + \frac{c_{15}}{N_c} + c_{16} \zeta_{\text{approx}}^{\text{critic}}\Big]\Big\}\nonumber\\
		&\stackrel{(ii)}{\le} \frac{1}{T\alpha}\mathbb{E}_{s\sim \nu_{\omega^*}}\big[\text{KL}\big(\pi_{\omega^*}(\cdot|s)||\pi_{0}(\cdot|s)\big)\big] + C^*\sqrt{\zeta_{\text{approx}}^{\text{actor}}} \nonumber\\
		&\quad + C_{\psi} \Big[\sqrt{c_{10}}\Big(1-\frac{\eta\lambda_F}{2}\Big)^{(K-1)/4} + \sqrt{c_{11}}\sigma^{T_z} + \sqrt{c_{12}}\sigma^{T'} + \sqrt{c_{13}}\beta\sigma^{T_c'}\nonumber\\
	    &\quad + \sqrt{c_{14}}\Big(1-\frac{\lambda_{B}}{4} \beta\Big)^{T_c/2} + \sqrt{\frac{c_{15}}{N_c}} + \sqrt{c_{16}\zeta_{\text{approx}}^{\text{critic}}}\Big] \nonumber\\
		&\quad +L_{\psi} \Big(1+ \frac{4C_{\psi}^4}{\lambda_{F}^2}\Big) \Big[c_{10}\Big(1-\frac{\eta\lambda_F}{2}\Big)^{(K-1)/4} + c_{11}\sigma^{T_z} + c_{12}\sigma^{T'} + c_{13} \beta\sigma^{T_c'}\nonumber\\
	    &\quad + c_{14}\Big(1-\frac{\lambda_{B}}{4} \beta\Big)^{T_c/2} + \frac{c_{15}}{\sqrt{N_c}} + c_{16}\zeta_{\text{approx}}^{\text{critic}}\Big] + \frac{4L_{\psi}C_{\psi}^2 R_{\max}}{T\alpha \lambda_F^2} \nonumber\\
		&\stackrel{(iii)}{=} \frac{c_{17}}{T\alpha} + c_{18}\Big(1-\frac{\eta\lambda_F}{2}\Big)^{(K-1)/4} + c_{19}\sigma^{T_z} + c_{20}\sigma^{T'} + c_{21} \beta\sigma^{T_c'} + c_{22}\Big(1-\frac{\lambda_{B}}{4} \beta\Big)^{T_c/2} \nonumber\\
	    &\quad + \frac{c_{23}}{\sqrt{N_c}}  +C_{\psi}\sqrt{c_{16}\zeta_{\text{approx}}^{\text{critic}}}+c_{24}\zeta_{\text{approx}}^{\text{critic}} + C^*\sqrt{\zeta_{\text{approx}}^{\text{actor}}}, \label{eq:actor_err2}
	\end{align}
	
	where (i) uses the definition of KL divergence that $\text{KL}\big(\pi_{\omega^*}(\cdot|s)||\pi_{\omega}(\cdot|s)\big)=\mathbb{E}_{a\sim \pi_{\omega^*}(\cdot|s)}\big[\ln\pi_{\omega^*}(a|s)-\ln\pi_{\omega}(a|s)\big|s\big]$ and eqs. \eqref{eq:dJ_err} \& \eqref{eq:ht_err}, (ii) uses the condition that $\alpha\le 1$ and the inequality that $\sqrt{\sum_{i=1}^{n} x_i}\le \sum_{i=1}^{n} \sqrt{x_i}$ for any $n\in\mathbb{N}^+$ and $x_1,\ldots,x_n\ge 0$, (iii) uses the notations that $c_{17}{:=}\mathbb{E}_{s\sim \nu_{\omega^*}}\big[\text{KL}\big(\pi_{\omega^*}(\cdot|s)||\pi_{0}(\cdot|s)\big)\big]+\frac{4L_{\psi}C_{\psi}^2 R_{\max}}{\lambda_F^2}$, $c_{18}:= C_{\psi}\sqrt{c_{10}} + c_{10}L_{\psi}\Big(1+ \frac{4C_{\psi}^4}{\lambda_{F}^2}\Big)$, $c_{19}:= C_{\psi}\sqrt{c_{11}} + c_{11}L_{\psi}\Big(1+ \frac{4C_{\psi}^4}{\lambda_{F}^2}\Big)$, $c_{20} := C_{\psi}\sqrt{c_{12}} + c_{12}L_{\psi}\Big(1+ \frac{4C_{\psi}^4}{\lambda_{F}^2}\Big)$, $c_{21} := C_{\psi}\sqrt{c_{13}} + c_{13}L_{\psi}\Big(1+ \frac{4C_{\psi}^4}{\lambda_{F}^2}\Big)$, $c_{22} := C_{\psi}\sqrt{c_{14}} + c_{14}L_{\psi}\Big(1+ \frac{4C_{\psi}^4}{\lambda_{F}^2}\Big)$, $c_{23} := C_{\psi}\sqrt{c_{15}} + c_{15}L_{\psi}\Big(1+ \frac{4C_{\psi}^4}{\lambda_{F}^2}\Big)$, $c_{24} := c_{16}L_{\psi}\Big(1+ \frac{4C_{\psi}^4}{\lambda_{F}^2}\Big)$. This proves the error bound of Theorem \ref{thm:NAC}.
	
	Finally, for any $\epsilon\ge 2C_{\psi}\sqrt{c_{16}\zeta_{\text{approx}}^{\text{critic}}} +2c_{24}\zeta_{\text{approx}}^{\text{critic}} + 2C^*\sqrt{\zeta_{\text{approx}}^{\text{actor}}}$, it can be verified that the following hyperparameter choices make the error bound in \eqref{eq:actor_err2} smaller than $\epsilon$ and satisfy all the conditions of this Theorem 
	and those in Lemma \ref{prop:TD} that $\beta\le\min\big(\frac{\lambda_B}{8C_B^2},\frac{4}{\lambda_B},\frac{1-\sigma}{2C_B}\big)$, $N_c\ge \big(\frac{2}{\lambda_B}+2\beta\big)\frac{192 C_B^2[1+(\kappa-1)\rho]}{(1-\rho)\lambda_{B}}$. 
	\begin{align}
	    \alpha=&\min\Big(1,\frac{\lambda_F^2}{4L_JC_{\psi}^2},\frac{C_{\psi}^2}{2L_J}\Big)=\mathcal{O}(1)\nonumber\\
	    \beta=&\min\big(1,\frac{\lambda_B}{8C_B^2},\frac{4}{\lambda_B},\frac{1-\sigma}{2C_B}\big)=\mathcal{O}(1) \nonumber\\
	    \eta=&\frac{1}{2C_{\psi}^2}=\mathcal{O}(1) \nonumber\\
	    T=&\Big\lceil\frac{14c_{17}}{\alpha \epsilon}\Big\rceil=\mathcal{O}(\epsilon^{-1}) \nonumber\\
	    K=&\Big\lceil\max\Big[\frac{\ln 3}{\ln [(1-\eta\lambda_F/2)^{-1}]}, \frac{4\ln(14c_{18}\epsilon^{-1})}{\ln\big[(1-\eta\lambda_F/2)^{-1}\big]}+1\Big]\Big\rceil=\mathcal{O}\big[\ln(\epsilon^{-1})\big]\nonumber\\
        T_z=&\Big\lceil\max\Big[\frac{\ln(3D_J C_{\psi}^2)}{\ln(\sigma^{-1})}, \frac{\ln(14c_{19}\epsilon^{-1})}{\ln(\sigma^{-1})}\Big]=\Big\rceil\mathcal{O}\big[\ln(\epsilon^{-1})\big]\nonumber\\
	    T'=&\Big\lceil\max\Big[\frac{\ln M}{2\ln(\sigma^{-1})},  \frac{\ln(14c_{20}\epsilon^{-1})}{\ln(\sigma^{-1})}\Big]\Big\rceil=\mathcal{O}\big[\ln(\epsilon^{-1})\big] \nonumber\\
	    T_c'=&\Big\lceil\frac{\ln(14c_{21}\epsilon^{-1})}{\ln(\sigma^{-1})}\Big\rceil=\mathcal{O}\big[\ln(\epsilon^{-1})\big] \nonumber\\
	    T_c=&\Big\lceil\frac{2\ln(14c_{22}\epsilon^{-1})}{\ln[(1-\lambda_B\beta/4)^{-1}]}\Big\rceil=\mathcal{O}\big[\ln(\epsilon^{-1})\big] \nonumber\\
	    N=&\Big\lceil\frac{2304C_{\psi}^4(\kappa+1-\rho)}{\eta\lambda_F^5(1-\rho)(1-\eta\lambda_F/2)^{(K-1)/2}}\Big\rceil=\mathcal{O}(\epsilon^{-2}) \nonumber\\
	    N_c=&\Big\lceil\max\Big[\Big(\frac{2}{\lambda_B}+2\beta\Big)\frac{192 C_B^2[1+(\kappa-1)\rho]}{(1-\rho)\lambda_{B}}, 196c_{23}^2\epsilon^{-2}\Big]\Big\rceil=\mathcal{O}(\epsilon^{-2})  \label{eq:hyp_NAC}
	\end{align}
\end{proof}

\section{Supporting Lemmas}\label{supp:lemma}
First, we extend the Lemma F.3 of \cite{DTDC} to the \Cref{lemma:W} below. The item 1 of \Cref{lemma:W} generalizes the case $n=1$ to any $n\in \mathbb{N}^+$, the items 2 \& 3 remain unchanged, and the item 4 is added for convenience of our convergence analysis.

\begin{lemma}\label{lemma:W}
	The doubly stochastic matrix $W$ and the difference matrix $\Delta=I-\frac{1}{M}\mathbf{1}\mathbf{1}^{\top}$ have the following properties:
	\begin{enumerate}[leftmargin=*,topsep=0pt]
		\item $\Delta W^n=W^n\Delta=W^n-\frac{1}{M}\mathbf{1}\mathbf{1}^{\top}$ for any $n\in \mathbb{N}^+$. 
		\item The spectral norm of $W$ satisfies $\|W\|=1$.
		\item For any $x\in\mathbb{R}^M$ and $n\in \mathbb{N}^+$, $\|W^n\Delta x\|\le \sigma_W^n \|\Delta x\|$ ($\sigma_W$ is the second largest singular value of $W$). Hence, for any $H\in\mathbb{R}^{M\times M}$, $\|W^n\Delta H\|_F\le \sigma_W^n \|\Delta H\|_F$.
		\item $\big\|W^n-\frac{1}{M}\mathbf{1}\mathbf{1}^{\top}\big\|\le \sigma_W^n$, $\big\|W^n-\frac{1}{M}\mathbf{1}\mathbf{1}^{\top}\big\|_F\le \sigma_W^n\sqrt{M}$ for any $n\in\mathbb{N}^+$.
	\end{enumerate}
\end{lemma}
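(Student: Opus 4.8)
The plan is to prove the four items in order, leveraging only the doubly stochastic structure of $W$ (that is, $W\mathbf{1}=\mathbf{1}$ and $\mathbf{1}^{\top}W=\mathbf{1}^{\top}$) together with the fact that $\Delta=I-\frac{1}{M}\mathbf{1}\mathbf{1}^{\top}$ is the orthogonal projection onto $\mathbf{1}^{\perp}$. I would first dispatch item 1 by a one-line induction: the relations $W\mathbf{1}=\mathbf{1}$ and $\mathbf{1}^{\top}W=\mathbf{1}^{\top}$ propagate to $W^n\mathbf{1}=\mathbf{1}$ and $\mathbf{1}^{\top}W^n=\mathbf{1}^{\top}$ for every $n\in\mathbb{N}^+$. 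Expanding $\Delta W^n=W^n-\frac{1}{M}\mathbf{1}(\mathbf{1}^{\top}W^n)=W^n-\frac{1}{M}\mathbf{1}\mathbf{1}^{\top}$, and symmetrically $W^n\Delta=W^n-\frac{1}{M}(W^n\mathbf{1})\mathbf{1}^{\top}=W^n-\frac{1}{M}\mathbf{1}\mathbf{1}^{\top}$, yields the identity at once.

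For item 2, I would note that $W\mathbf{1}=\mathbf{1}$ already forces $\|W\|\ge\|W\mathbf{1}\|/\|\mathbf{1}\|=1$, while the upper bound $\|W\|\le 1$ follows from Birkhoff--von Neumann, since $W$ is a convex combination of permutation matrices, each of unit spectral norm. Hence $\|W\|=1$.

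Item 3 is the crux, and the step where I expect to spend the most care. The structural observations are that $\Delta x\in\mathbf{1}^{\perp}$ for every $x$ (because $\mathbf{1}^{\top}\Delta=0$) and that $W$ maps $\mathbf{1}^{\perp}$ into itself, since $\mathbf{1}^{\top}Wy=\mathbf{1}^{\top}y=0$ whenever $y\perp\mathbf{1}$. The heart of the argument is a single-step contraction: for $y\perp\mathbf{1}$, because $\mathbf{1}$ is a common left/right singular vector of $W$ for the top singular value $1$, the symmetric matrix $W^{\top}W$ leaves $\mathbf{1}^{\perp}$ invariant and its eigenvalues restricted there are at most $\sigma_W^2$; therefore $\|Wy\|^2=y^{\top}W^{\top}Wy\le\sigma_W^2\|y\|^2$. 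With this contraction established, a short induction on $n$ closes the claim: writing $y_n:=W^n\Delta x$, one checks $y_n\in\mathbf{1}^{\perp}$ for every $n$, so $\|y_{n+1}\|=\|Wy_n\|\le\sigma_W\|y_n\|$, giving $\|W^n\Delta x\|\le\sigma_W^n\|\Delta x\|$. The Frobenius version follows by applying this bound to each column of $\Delta H$ and summing the squared norms.

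Finally, item 4 I would derive from items 1 and 3. By item 1, $W^n-\frac{1}{M}\mathbf{1}\mathbf{1}^{\top}=W^n\Delta$, so the spectral bound is $\|W^n\Delta\|\le\sigma_W^n\|\Delta\|=\sigma_W^n$, using that $\Delta$ is an orthogonal projection of unit spectral norm; and the Frobenius bound is $\|W^n\Delta\|_F\le\sigma_W^n\|\Delta\|_F=\sigma_W^n\sqrt{M-1}\le\sigma_W^n\sqrt{M}$, since $\Delta$ has rank $M-1$ with unit eigenvalues so that $\|\Delta\|_F=\sqrt{M-1}$. The main obstacle is item 3: the fact that $W$ need not be symmetric means one cannot directly read the action of $W$ on $\mathbf{1}^{\perp}$ from its eigenvalues, so one must route through $W^{\top}W$ and exploit the invariance of $\mathbf{1}^{\perp}$ to obtain the clean contraction factor $\sigma_W$, after which the generalization from $n=1$ to arbitrary $n$ is a routine induction.
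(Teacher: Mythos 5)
Your proof is correct, and it is in fact more self-contained than the paper's. For items 1 and 4 --- the only parts the paper actually proves --- your argument is essentially the same as the paper's: item 1 follows from $W^n\mathbf{1}=\mathbf{1}$ and $\mathbf{1}^{\top}W^n=\mathbf{1}^{\top}$ (the paper phrases this as an induction on the base identity $\Delta W=W\Delta=W-\frac{1}{M}\mathbf{1}\mathbf{1}^{\top}$), and item 4 is deduced from items 1 and 3 exactly as you do it, the only cosmetic difference being that you evaluate $\|\Delta\|_F=\sqrt{M-1}$ via the eigenvalues of the rank-$(M-1)$ projection while the paper computes it entrywise. The genuine difference lies in items 2 and 3: the paper does not prove these at all, importing them from Lemma F.3 of its cited reference, whereas you supply complete proofs --- item 2 via Birkhoff--von Neumann together with $W\mathbf{1}=\mathbf{1}$ for the lower bound, and item 3 via the observation that $\mathbf{1}^{\perp}$ is invariant under $W^{\top}W$, so that the eigenvalues of $W^{\top}W$ restricted to $\mathbf{1}^{\perp}$ are bounded by $\sigma_W^2$ (one copy of the top eigenvalue $1$ being accounted for by the eigenvector $\mathbf{1}$), followed by a routine induction on $n$ using the invariance of $\mathbf{1}^{\perp}$ under $W$. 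Your spectral argument is sound: it correctly handles the fact that $W$ need not be symmetric by routing through $W^{\top}W$, and it remains valid even in the degenerate case $\sigma_W=1$. What your route buys is a lemma that stands on its own; what the paper's buys is brevity by delegation to prior work.
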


\begin{proof}
	The proof of items 2 \& 3 can be found in \cite{DTDC}. We prove the item 1 and item 4. 
	
	We prove item 1 by induction. The case $n=1$ of the item 1 can be proved by the following two equalities, as shown in \cite{DTDC}.
	\begin{gather}
		\Delta W=\Big(I-\frac{1}{M}\mathbf{1}\mathbf{1}^{\top}\Big)W =W-\frac{1}{M}\mathbf{1}\mathbf{1}^{\top}W=W-\frac{1}{M}\mathbf{1}\mathbf{1}^{\top}\nonumber\\
		W\Delta =W\Big(I-\frac{1}{M}\mathbf{1}\mathbf{1}^{\top}\Big) =W-\frac{1}{M}W\mathbf{1}\mathbf{1}^{\top}=W-\frac{1}{M}\mathbf{1}\mathbf{1}^{\top}\nonumber
	\end{gather}
	
	Suppose the case of $n=k$ holds for a certain $k\in\mathbb{N}^+$, then the following two equalities proves the case of $n=k+1$ and thus proves the item 1.
	\begin{gather}
	\Delta W^{k+1}=(\Delta W^k) W=\Big(W^{k}-\frac{1}{M}\mathbf{1}\mathbf{1}^{\top}\Big)W=W^{k+1}-\frac{1}{M}\mathbf{1}\mathbf{1}^{\top}\nonumber\\
	W^{k+1}\Delta=W(W^k\Delta)=W\Big(W^{k}-\frac{1}{M}\mathbf{1}\mathbf{1}^{\top}\Big)=W^{k+1}-\frac{1}{M}\mathbf{1}\mathbf{1}^{\top}\nonumber
	\end{gather}
	
	The item 4 can be proved by the following two inequalities.
	\begin{align}
		\Big\|W^n-\frac{1}{M}\mathbf{1}\mathbf{1}^{\top}\Big\|&\stackrel{(i)}{=} \big\|W^n\Delta\big\| =\sup_{x:\|x\|\le 1} \|W^n \Delta x\| {\stackrel{(ii)}{\le} \sup_{x:\|x\|\le 1} \sigma_W^n \|\Delta\|\|x\| \stackrel{(iv)}{=}} \sigma_W^n,
	\end{align}
	\begin{align}
		\Big\|W^n-\frac{1}{M}\mathbf{1}\mathbf{1}^{\top}\Big\|_F&\stackrel{(i)}{=} \big\|W^n\Delta\big\|_F \stackrel{(iii)}{\le} \sigma_W^n \|\Delta\|_F \nonumber\\
		&\stackrel{(iv)}{=} \sigma_W^n \sqrt{M\Big(1-\frac{1}{M}\Big)^2+M(M-1)\Big(-\frac{1}{M}\Big)^2}\le \sigma_W^n\sqrt{M},
	\end{align}
	where (i) uses the item 1, (ii) and (iii) use the item 3 ($H=I$ in (iii)), and (iv) uses the fact that $\Delta$ has $M$ diagnoal entries $1-\frac{1}{M}$ and $M(M-1)$ off-diagnoal entries $-\frac{1}{M}$, {which implies that $\|\Delta\|=1$}. 
\end{proof}

Next, we extend the Lemma F.2. of \cite{DTDC} to the \Cref{lemma:var} below. 

\begin{lemma}\label{lemma:var}
	Suppose the Markovian samples $\{s_i,a_i\}_{i\ge 0}$ are generated following the policy $\pi_{\omega}$ and transition kernel $\mathcal{P}'$ (can be $\mathcal{P}$ or $\mathcal{P}_{\xi}$), and $s_{i+1}'\sim \mathcal{P}(\cdot|s_i,a_i)$. Then, for any deterministic mapping $X:\mathcal{S}\times \mathcal{A}\times\mathcal{S}\times\mathcal{S}\to\mathbb{R}^{p\times q}$ ($p, q\in\mathbb{N^+}$ are arbitrary.) such that $\|X(s,a,s',\widetilde{s})\|_F\le C_x$ and for any $s,s',\widetilde{s}\in\mathcal{S}, a\in\mathcal{A}$, we have
	\begin{align}
	\mathbb{E}\Big[\Big\|\frac{1}{n}\sum_{i=n'}^{n+n'-1} X(s_i,a_i,s_{i+1},s_{i+1}')-\overline{X}\Big\|_F^2\Big|s_{n'}\Big]\le& \frac{9C_x^2(\kappa+1-\rho)}{n(1-\rho)}, \forall n, n'\in\mathbb{N}^+ \label{eq:var}
	\end{align}
	where $\overline{X}=\mathbb{E}\big[X(s_i,a_i,s_{i+1},s_{i+1}')\big|s_i\big]$ with $s_i\sim\mu_{\omega}$ (or $\nu_{\omega}$) when  $\mathcal{P}'=\mathcal{P}$ (or $\mathcal{P}_{\xi}$).  
\end{lemma}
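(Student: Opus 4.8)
The plan is to bound the conditional mean-squared deviation by expanding it into a double sum of Frobenius inner products and controlling the off-diagonal covariance terms through the geometric mixing in Assumption \ref{assum_TV_exp}. First I would write $Y_i:=X(s_i,a_i,s_{i+1},s_{i+1}')$ and introduce the one-step conditional mean $g(s):=\mathbb{E}[Y_i\mid s_i=s]$, so that $\bar X=\mathbb{E}_{s\sim\mu_\omega}[g(s)]$ when $\mathcal{P}'=\mathcal{P}$ (and the $\nu_\omega$-analogue when $\mathcal{P}'=\mathcal{P}_\xi$). Since $\|X\|_F\le C_x$ pointwise, Jensen's inequality gives $\|g(s)\|_F\le C_x$ and $\|\bar X\|_F\le C_x$, hence $\|Y_i-\bar X\|_F\le 2C_x$. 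Writing $\langle A,B\rangle_F:=\mathrm{tr}(A^\top B)$, the square of the Frobenius norm expands as $\frac1{n^2}\sum_{i,j}\mathbb{E}[\langle Y_i-\bar X,\,Y_j-\bar X\rangle_F\mid s_{n'}]$, and the $n$ diagonal terms are each bounded by $4C_x^2$.

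The heart of the argument is the off-diagonal terms with $i<j$. I would condition on the $\sigma$-field $\mathcal{G}_i$ generated by the trajectory through time $i$ (which in particular contains $s_{i+1}$); since $Y_i-\bar X$ is $\mathcal{G}_i$-measurable, the tower property reduces the cross term to $\mathbb{E}[\langle Y_i-\bar X,\ \mathbb{E}[Y_j-\bar X\mid\mathcal{G}_i]\rangle_F\mid s_{n'}]$. By the Markov property, $\mathbb{E}[Y_j\mid\mathcal{G}_i]=\mathbb{E}[g(s_j)\mid s_{i+1}]=\int g(s)\,\mathbb{P}(s_j\in ds\mid s_{i+1})$, whereas $\bar X=\int g(s)\,\mu_\omega(ds)$. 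The crucial step is to turn the scalar total-variation bound of Assumption \ref{assum_TV_exp} into a Frobenius bound on this matrix-valued integral: decomposing the signed measure $\mathbb{P}(s_j\in\cdot\mid s_{i+1})-\mu_\omega$ into positive and negative parts, each of total mass $d_{TV}\le\kappa\rho^{\,j-i-1}$, and using $\|g\|_F\le C_x$ yields $\|\mathbb{E}[Y_j-\bar X\mid\mathcal{G}_i]\|_F\le 2C_x\kappa\rho^{\,j-i-1}$. Cauchy--Schwarz for the Frobenius inner product then gives $|\mathbb{E}[\langle Y_i-\bar X,\,Y_j-\bar X\rangle_F\mid s_{n'}]|\le 4C_x^2\kappa\rho^{\,j-i-1}$.

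Finally I would sum these bounds. The diagonal contributes $4C_x^2 n$; grouping the off-diagonal pairs by their gap $d=j-i\ge1$ (at most $n$ ordered pairs per gap, counted on both sides of the diagonal) and summing $\sum_{d\ge1}\rho^{d-1}\le(1-\rho)^{-1}$ bounds them by $8C_x^2\kappa n/(1-\rho)$. Dividing by $n^2$ gives $\frac{4C_x^2}{n}+\frac{8C_x^2\kappa}{n(1-\rho)}$, which is dominated by the target $\frac{9C_x^2(\kappa+1-\rho)}{n(1-\rho)}$ since $4\le 9$ and $8\le 9$, establishing the claim. The main obstacle I anticipate is the matrix-valued mixing step, i.e.\ passing from the scalar TV estimate to a Frobenius-norm bound on the expectation of a matrix-valued function; this requires the positive/negative-part decomposition of the signed measure together with the uniform bound $\|g\|_F\le C_x$. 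The remaining bookkeeping in the geometric sum is routine but must be tracked carefully to land under the constant $9$.
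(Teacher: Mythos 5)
Your proposal is correct, but it takes a genuinely different route from the paper. The paper's proof first integrates out the auxiliary samples: it defines $Y(s,a,s'):=\mathbb{E}_{\widetilde{s}}\big[X(s,a,s',\widetilde{s})\,\big|\,s,a,s'\big]$, invokes an external concentration result (Lemma F.2 of the cited decentralized TD paper) to bound the Markovian deviation of the $Y$-average by $\frac{8C_x^2(\kappa+1-\rho)}{n(1-\rho)}$, and then adds the extra variance $\frac{C_x^2}{n}$ coming from the conditional independence of the samples $\{s_{i+1}'\}$ given the trajectory, which yields the constant $9$. You instead prove the whole statement from first principles: you expand the squared Frobenius norm into a covariance double sum over the four-argument terms $Y_i=X(s_i,a_i,s_{i+1},s_{i+1}')$ directly, and control the off-diagonal terms by the tower property, the Markov property (noting that $s_{i+1}'$ does not influence the future given $s_{i+1}$), and the conversion of the scalar total-variation bound in Assumption \ref{assum_TV_exp} into a Frobenius bound via the Jordan decomposition of the signed measure --- which is exactly the core content of the external lemma the paper cites, re-derived in the matrix-valued setting. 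Your accounting is sound: the diagonal contributes $\frac{4C_x^2}{n}$, the off-diagonal $\frac{8C_x^2\kappa}{n(1-\rho)}$, and both are dominated termwise by $\frac{9C_x^2(\kappa+1-\rho)}{n(1-\rho)}$, so you in fact obtain a slightly sharper bound than the paper's. What the paper's route buys is brevity and a clean separation of the two randomness sources (chain mixing handled by citation, auxiliary samples by a one-line conditional-variance argument); what your route buys is self-containedness and a unified treatment of both sources in a single expansion. One small point of care: for $Y_i-\overline{X}$ to be $\mathcal{G}_i$-measurable, $\mathcal{G}_i$ must include $s_{i+1}'$ as well as $s_{i+1}$; this is harmless because, given $(s_i,a_i,s_{i+1})$, the auxiliary sample $s_{i+1}'$ is independent of the future trajectory, so the identity $\mathbb{E}[Y_j\mid\mathcal{G}_i]=\mathbb{E}[Y_j\mid s_{i+1}]$ still holds, but it should be stated explicitly.
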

\begin{proof}
    Denote $Y(s,a,s'):=\mathbb{E}_{\widetilde{s}\sim \mathcal{P}'(\cdot|s,a)} \big[X(s,a,s',\widetilde{s})\big|s,a,s'\big]$  which satisfies $\|Y(s,a,s')\|\le C_x$ and $\mathbb{E}_{s_i\sim \nu_{\omega}}\big[Y(s_i,a_i,s_{i+1})\big]=\overline{X}$. Hence, Lemma F.2 of \cite{DTDC} can be applied to $Y(s,a,s')$ and obtain the following inequality
	\begin{align}
		\mathbb{E}\Big[\Big\|\frac{1}{n}\sum_{i=n'}^{n+n'-1} Y(s_i,a_i,s_{i+1})-\overline{X}\Big\|_F^2\Big|s_{n'}\Big]\le& \frac{8C_x^2(\kappa+1-\rho)}{n(1-\rho)}. \label{eq:varY}
	\end{align}
	
	Therefore, we obtain that
	\begin{align}
		&\mathbb{E}\Big[\Big\|\frac{1}{n}\sum_{i=n'}^{n+n'-1} X(s_i,a_i,s_{i+1},s_{i+1}')-\overline{X}\Big\|_F^2\Big|\{s_i,a_i,s_{i+1}\}_{i=n'}^{n+n'-1}\Big]\nonumber\\
		&=\Big\|\mathbb{E}\Big[\frac{1}{n}\sum_{i=n'}^{n+n'} X(s_i,a_i,s_{i+1},s_{i+1}')-\overline{X}\Big|\{s_i,a_i,s_{i+1}\}_{i=n'}^{n+n'-1}\Big]\Big\|_F^2 \nonumber\\
		&\quad + \text{Var}\Big[\frac{1}{n}\sum_{i=n'}^{n+n'-1} X(s_i,a_i,s_{i+1},s_{i+1}') \Big|\{s_i,a_i,s_{i+1}\}_{i=n'}^{n+n'-1}\Big] \nonumber\\
		&\stackrel{(i)}{=} \Big\|\frac{1}{n}\sum_{i=n'}^{n+n'-1} Y(s_i,a_i,s_{i+1})-\overline{X}\Big\|_F^2 \nonumber\\
		&\quad +\frac{1}{n^2}\sum_{i=n'}^{n+n'-1}\text{Var}\big[ X(s_i,a_i,s_{i+1},s_{i+1}') \big|\{s_i,a_i,s_{i+1}\}_{i=n'}^{n+n'-1}\big] \nonumber\\
		&\stackrel{(ii)}{\le} \Big\|\frac{1}{n}\sum_{i=n'}^{n+n'-1} Y(s_i,a_i,s_{i+1})-\overline{X}\Big\|_F^2 + \frac{C_x^2}{n}  \label{eq: EXdev}
	\end{align}
	where (i) uses the conditional independency among $\{s_{i+1}'\}_{i=tN}^{(t+1)N-1}$ on $\{s_i,a_i,s_{i+1}\}_{i=n'}^{n+n'-1}$ and (ii) uses the fact that $\|X(s_i,a_i,s_{i+1},s_{i+1}')\|_F\le C_x$.  
	
	Finally, eq. \eqref{eq:var} can be proved via the following inequality. 
	\begin{align}
		&\mathbb{E}\Big[\Big\|\frac{1}{n}\sum_{i=n'}^{n+n'} X(s_i,a_i,s_{i+1},s_{i+1}')-\overline{X}\Big\|_F^2\Big|s_{n'}\Big]\nonumber\\
		&\stackrel{(i)}{\le}\mathbb{E}\Big[\Big\|\frac{1}{n}\sum_{i=n'}^{n+n'-1} Y(s_i,a_i,s_{i+1})-\overline{X}\Big\|_F^2\Big|s_{n'}\Big] + \frac{C_x^2}{n}  \nonumber\\
		&\stackrel{(ii)}{\le} \frac{8C_x^2(\kappa+1-\rho)}{n(1-\rho)} + \frac{C_x^2}{n} \le  \frac{9C_x^2(\kappa+1-\rho)}{n(1-\rho)}, \nonumber
	\end{align}
	where (i) takes the conditional expectation of eq. \eqref{eq: EXdev} on $s_n'$ and (ii) uses eq. \eqref{eq:varY}. 
\end{proof}

Next, we prove the following Lemmas \ref{lemma:critic} \& \ref{prop:TD} on the decentralized TD in Algorithm \ref{alg: 2}. We first define the following useful notations. 

$\lambda_{\phi}:=\lambda_{\min}\big(\mathbb{E}_{s\sim\mu_{\omega}}[\phi(s)\phi(s)^{\top}]\big)>0$, see Assumption \ref{assum_phi}.

$B(s,s'):=\phi(s)\big[\gamma\phi(s')-\phi(s)\big]^{\top}$.

$B_{t}:=\frac{1}{N_c}\sum_{i=tN_c}^{(t+1)N_c-1} B(s_i,s_{i+1})$.

$B_{\omega}:=\mathbb{E}_{s\sim\mu_{\omega}, a\sim\pi_{\omega}(\cdot|s), s'\sim\mathcal{P}(\cdot|s,a)} \big[B(s,s')\big]$.

$b^{(m)}(s,a,s'):= R^{(m)}(s,a,s')\phi(s)$.

$b(s,a,s'):= \frac{1}{M}\sum_{m=1}^{M} b^{(m)}(s,a,s')$. 

$b_{t}^{(m)}:=\frac{1}{N_c}\sum_{i=tN_c}^{(t+1)N_c-1} b^{(m)}(s_i,a_i,s_{i+1})$.

$b_{t}:=\frac{1}{M}\sum_{m=1}^{M} b_{t}^{(m)}$. 

$b_{\omega}:=\mathbb{E}_{s\sim\mu_{\omega}, a\sim\pi_{\omega}(\cdot|s), s'\sim\mathcal{P}(\cdot|s,a)}\big[b(s,a,s')\big]$.

$\theta_{\omega}^*:=B_{\omega}^{-1}b_{\omega}$, which is the optimal critic parameter under policy $\pi_{\omega}$.\\

\begin{lemma}\label{lemma:critic}
	The following bounds hold for Algorithm \ref{alg: 2}. 
	\begin{enumerate}[leftmargin=*,topsep=0pt]
		\item $\|B(s,s')\|_F, \|B_t\|_F, \|B_{\omega}\|_F\le C_B:=1+\gamma$, \\
		$\|b^{(m)}(s,a,s')\|, \|b(s,a,s')\|,\|b_t^{(m)}\|,\|b_t\|,\|b_{\omega}\|\le C_b:=R_{\max}$.
		\item $\theta^{\top}B_{\omega}\theta\le -\frac{\lambda_{B}}{2}\|\theta\|^2$ uniformly for all $\omega$, where $\lambda_B := 2(1-\gamma)\lambda_{\phi}>0$.
		\item $\|\theta_{\omega}^*\|\le R_{\theta}:=\frac{2C_b}{\lambda_{B}}$ uniformly for all $\omega$.
	\end{enumerate}
\end{lemma}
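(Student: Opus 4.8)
The plan is to establish the three items in order, with items 1 and 3 being short consequences of standard inequalities and item 2 being the technical core. Throughout I would work directly from the definitions of $B(s,s')$, $b^{(m)}(s,a,s')$ and $\theta_\omega^* = B_\omega^{-1}b_\omega$ given just before the statement, together with Assumptions \ref{assum_R} and \ref{assum_phi}.

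For item 1, I would first observe that $B(s,s') = \phi(s)[\gamma\phi(s')-\phi(s)]^\top$ is a rank-one matrix, so its Frobenius norm factors as $\|\phi(s)\|\,\|\gamma\phi(s')-\phi(s)\|$. Using $\|\phi(s)\|\le 1$ from Assumption \ref{assum_phi} and the triangle inequality $\|\gamma\phi(s')-\phi(s)\| \le \gamma+1$, this gives $\|B(s,s')\|_F \le 1+\gamma = C_B$. Since $B_t$ and $B_\omega$ are a finite average and an expectation of terms of the form $B(s_i,s_{i+1})$, the same bound transfers to them by the triangle inequality and Jensen's inequality respectively. The reward-vector bounds are even simpler: $\|b^{(m)}(s,a,s')\| = |R^{(m)}(s,a,s')|\,\|\phi(s)\| \le R_{\max} = C_b$ by Assumptions \ref{assum_R} and \ref{assum_phi}, and the corresponding averages $b, b_t^{(m)}, b_t$ and the expectation $b_\omega$ inherit this bound.

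The heart of the argument is item 2. I would expand the quadratic form as
\[
\theta^\top B_\omega \theta = \gamma\,\mathbb{E}\big[(\phi(s)^\top\theta)(\phi(s')^\top\theta)\big] - \mathbb{E}\big[(\phi(s)^\top\theta)^2\big],
\]
with the expectation over $s\sim\mu_\omega$, $a\sim\pi_\omega(\cdot|s)$, $s'\sim\mathcal{P}(\cdot|s,a)$. The crucial observation is that $\mu_\omega$ is the stationary state distribution of $\mathcal{P}$ under $\pi_\omega$, so the marginal of $s'$ is again $\mu_\omega$; hence $\mathbb{E}[(\phi(s')^\top\theta)^2] = \mathbb{E}[(\phi(s)^\top\theta)^2]$. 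Applying Cauchy--Schwarz to the cross term then bounds it by $\mathbb{E}[(\phi(s)^\top\theta)^2]$, and since $\gamma>0$ this yields $\theta^\top B_\omega\theta \le -(1-\gamma)\,\mathbb{E}[(\phi(s)^\top\theta)^2]$. Rewriting $\mathbb{E}[(\phi(s)^\top\theta)^2] = \theta^\top \mathbb{E}_{s\sim\mu_\omega}[\phi(s)\phi(s)^\top]\theta$ and invoking the eigenvalue lower bound $\lambda_\phi$ from Assumption \ref{assum_phi} gives $\theta^\top B_\omega\theta \le -(1-\gamma)\lambda_\phi\|\theta\|^2 = -\tfrac{\lambda_B}{2}\|\theta\|^2$.

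For item 3, I would use the defining identity $B_\omega\theta_\omega^* = b_\omega$ to write $(\theta_\omega^*)^\top B_\omega\theta_\omega^* = (\theta_\omega^*)^\top b_\omega$. The left-hand side is at most $-\tfrac{\lambda_B}{2}\|\theta_\omega^*\|^2$ by item 2, while the right-hand side is at least $-\|\theta_\omega^*\|\,\|b_\omega\| \ge -\|\theta_\omega^*\|\,C_b$ by Cauchy--Schwarz and item 1. Combining these and dividing by $\|\theta_\omega^*\|$ gives $\|\theta_\omega^*\|\le \tfrac{2C_b}{\lambda_B} = R_\theta$. I expect the only real subtlety to be the justification in item 2 that the marginal of $s'$ coincides with $\mu_\omega$, which is exactly the stationarity property of $\mu_\omega$; the remaining steps are mechanical applications of Cauchy--Schwarz and the feature assumptions.
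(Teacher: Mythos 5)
Your proposal is correct and follows essentially the same route as the paper's proof: the rank-one Frobenius factorization plus Jensen for item 1, the expansion of $\theta^\top B_\omega\theta$ combined with stationarity of $\mu_\omega$ for item 2 (your Cauchy--Schwarz step is a cosmetic variant of the paper's $xy\le\frac{1}{2}(x^2+y^2)$ bound), and the identity $B_\omega\theta_\omega^*=b_\omega$ with Cauchy--Schwarz for item 3. No gaps; the only implicit step in both arguments is that the case $\theta_\omega^*=0$ is trivial before dividing by $\|\theta_\omega^*\|$.
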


\begin{proof}
	We first prove the item 1. Notice that for any vectors $x, y\in\mathbb{R}^d$,
	\begin{align}
	\|xy^{\top}\|_F=\sqrt{\sum_{i=1}^{d}\sum_{j=1}^{d}(x_iy_j)^2}=\sqrt{\sum_{i=1}^{d} x_i^2}\sqrt{\sum_{j=1}^{d} y_j^2}=\|x\|\|y\|. \nonumber
	\end{align} 
	Hence, we obtain that
	\begin{align}
	\|B(s,s')\|_F&=\big\|\phi(s)\big(\gamma\phi(s')-\phi(s)\big)^{\top}\big\|_F =\|\phi(s)\| \|\gamma\phi(s')-\phi(s)\| \le 1+\gamma:=C_B,
	\end{align}
	\begin{align}
	\|b(s,a,s')\|=\overline{R}(s,a,s')\|\phi(s)\| \le R_{\max}:= C_b.
	\end{align}
	The other terms listed in the item 1 can be proved by applying the Jensen's inequality to the convex function $\|\cdot\|$. 
	
	Next, we prove the item 2, where we use the underlying distribution that $s\sim\mu_{\omega}$, $a\sim\pi_{\omega}(\cdot|s)$, $s'\sim\mathcal{P}(\cdot|s,a)$. We obtain that
	\begin{align}
	\theta^{\top}B_{\omega}\theta &= \mathbb{E}_{\omega}\Big(\theta^{\top}\phi(s)\big[\gamma\phi(s')-\phi(s)\big]^{\top} \theta\Big) \nonumber\\
	&= \gamma\mathbb{E}_{\omega}\Big[\big(\theta^{\top}\phi(s)\big)\big(\theta^{\top}\phi(s')\big)\Big] - \mathbb{E}_{\omega}\Big[\big(\theta^{\top}\phi(s)\big)^2\Big] \nonumber\\
	&\le \frac{\gamma}{2}\Big(\mathbb{E}_{\omega}\Big[\big(\theta^{\top}\phi(s)\big)^2\Big]+\mathbb{E}_{\omega}\Big[\big(\theta^{\top}\phi(s')\big)^2\Big]\Big) - \mathbb{E}_{\omega}\Big[\big(\theta^{\top}\phi(s)\big)^2\Big] \nonumber\\
	&\stackrel{(i)}{=} (\gamma-1)\mathbb{E}_{\omega}\Big[\big(\theta^{\top}\phi(s)\big)^2\Big] \nonumber\\
	&=-(1-\gamma)\theta^{\top}\mathbb{E}_{\omega}[\phi(s)\phi(s)^{\top}]\theta \nonumber\\
	&\stackrel{(ii)}{\le} -\frac{\lambda_B}{2}\|\theta\|^2,
	\end{align}
	where (i) uses the fact that $s, s'\sim \mu_{\omega}$ which is the stationary state distribution with the transition kernel $\mathcal{P}$ and the policy $\pi_{\omega}$, and (ii) uses Assumption \ref{assum_phi} and we denote $\lambda_B := 2(1-\gamma)\lambda_{\phi}>0$. 
	
	Finally, the item 3 can be proved via the following inequality. 
	\begin{align}
	\|\theta_{\omega}^{*}\|^2 \stackrel{(i)}{\le} -\frac{2}{\lambda_{B}}(\theta_{\omega}^{*})^{\top}B_{\omega}\theta_{\omega}^{*} \le \frac{2}{\lambda_{B}} \|\theta_{\omega}^{*}\| \|B_{\omega}\theta_{\omega}^{*}\| = \frac{2}{\lambda_{B}} \|\theta_{\omega}^{*}\| \|b_{\omega}\| \le \frac{2C_b}{\lambda_{B}} \|\theta_{\omega}^{*}\|,
	\end{align}
	where (i) uses the item 2.
\end{proof}

\begin{lemma}\label{prop:TD}
	Under Assumptions \ref{assum_TV_exp}--\ref{assum_doubly} and choosing $\beta\le\min\big(\frac{\lambda_B}{8C_B^2},\frac{4}{\lambda_B},\frac{1-\sigma_W}{2C_B}\big)$, $N_c\ge \big(\frac{2}{\lambda_B}+2\beta\big)\frac{192 C_B^2[1+(\kappa-1)\rho]}{(1-\rho)\lambda_{B}}$, 
	Algorithm \ref{alg: 2} has the following convergence rate.
	\begin{align}
		\sum_{m=1}^{M} \mathbb{E}\big[\big\|\theta_{T_c+T_c'}^{(m)}-\theta_{\omega_t}^{*}\big\|^2\big|\omega_t\big] &\le \sigma_W^{2T_c'}\beta^2 c_2 + 2M \Big[c_3\Big(1-\frac{\lambda_{B}}{8} \beta\Big)^{T_c}+\frac{c_1}{N_c}\Big]. \label{eq:critic_err}
	\end{align}
	Moreover, to achieve $\sum_{m=1}^{M} \mathbb{E}\big[\big\|\theta_{T_c+T_c'}^{(m)}-\theta_{\omega_t}^{*}\big\|^2\big|\omega_t\big]\le \epsilon$, we can choose $T_c, T_c'=\mathcal{O}\big[\ln(\epsilon^{-1})\big]$ and $N_c=\mathcal{O}(\epsilon^{-1})$. Consequently, the sample complexity is $T_cN_c=\mathcal{O}\big[\epsilon^{-1}\ln(\epsilon^{-1})\big]$ and the communication complexity is $T_c+T_c'=\mathcal{O}\big[\ln(\epsilon^{-1})\big]$. 
\end{lemma}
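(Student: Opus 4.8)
The plan is to split the error of each local critic into an optimization error of the network average and a consensus error, and to bound the two separately. Stack the local parameters as the rows of $\Theta_{t'}\in\mathbb{R}^{M\times d}$ and write the update \eqref{eq: DTD} in matrix form as $\Theta_{t'+1}=W\Theta_{t'}+\beta(\Theta_{t'}B_{t'}^{\top}+\mathcal{B}_{t'})$, where $B_{t'}$ is the shared mini-batch matrix and $\mathcal{B}_{t'}$ stacks the local reward vectors $b_{t'}^{(m)}$. The crucial structural observation is that the agents share a common state sequence, so $B_{t'}$ is identical across agents; hence left-multiplying by $\frac1M\mathbf{1}^{\top}$ and using the double stochasticity of $W$ shows the average $\bar\theta_{t'}:=\frac1M\sum_m\theta_{t,t'}^{(m)}$ obeys the \emph{exact centralized} mini-batch TD recursion $\bar\theta_{t'+1}=\bar\theta_{t'}+\beta(B_{t'}\bar\theta_{t'}+b_{t'})$, with no consensus error entering. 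I will combine the two pieces through the exact identity $\sum_m\|\theta_{t,t'}^{(m)}-\theta_{\omega_t}^{*}\|^2=\|\Delta\Theta_{t'}\|_F^2+M\|\bar\theta_{t'}-\theta_{\omega_t}^{*}\|^2$.

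To control the optimization error, set $\delta_{t'}:=\bar\theta_{t'}-\theta_{\omega_t}^{*}$ and use that the expected update vanishes at the fixed point, $B_{\omega_t}\theta_{\omega_t}^{*}+b_{\omega_t}=0$, to write $\delta_{t'+1}=(I+\beta B_{t'})\delta_{t'}+\beta u_{t'}$ with the centered perturbation $u_{t'}:=(B_{t'}-B_{\omega_t})\theta_{\omega_t}^{*}+(b_{t'}-b_{\omega_t})$. I would expand $\|\delta_{t'+1}\|^2$ and take conditional expectation given the $\sigma$-field $\mathcal{F}_{t'}$ that contains the first state $s_{t'N_c}$ of the batch: $\delta_{t'}$ is $\mathcal{F}_{t'}$-measurable while, by the Markov property, the batch statistics depend only on the future of $s_{t'N_c}$, which decouples the iterate from the correlated batch. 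Then Lemma \ref{lemma:critic}(2) supplies the contraction $\delta_{t'}^{\top}B_{\omega_t}\delta_{t'}\le-\frac{\lambda_B}{2}\|\delta_{t'}\|^2$, Lemma \ref{lemma:critic}(1) gives $\|B_{t'}\|\le C_B$, and Lemma \ref{lemma:var} (applied to $B(s,s')$ and to $b(s,a,s')$) controls both the conditional bias $\|\mathbb{E}[B_{t'}-B_{\omega_t}\mid\mathcal{F}_{t'}]\|$ and the second moment $\mathbb{E}[\|u_{t'}\|^2\mid\mathcal{F}_{t'}]$ at order $\mathcal{O}(1/N_c)$. The step-size bound $\beta\le\frac{\lambda_B}{8C_B^2}$ absorbs the $\beta^2C_B^2\|\delta_{t'}\|^2$ term and the batch-size condition $N_c\ge(\frac{2}{\lambda_B}+2\beta)\frac{192C_B^2[1+(\kappa-1)\rho]}{(1-\rho)\lambda_B}$ keeps the bias-induced perturbation below the contraction margin, giving the one-step bound $\mathbb{E}\|\delta_{t'+1}\|^2\le(1-\frac{\lambda_B}{8}\beta)\mathbb{E}\|\delta_{t'}\|^2+\beta^2\mathcal{O}(1/N_c)$. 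Unrolling this geometric recursion over $t'=0,\dots,T_c-1$ and summing the geometric series yields exactly $c_3(1-\frac{\lambda_B}{8}\beta)^{T_c}+c_1/N_c$.

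For the consensus error I would left-multiply the matrix update by $\Delta$ and use $\Delta W=W\Delta$ (Lemma \ref{lemma:W}(1)) together with $\|W\Delta H\|_F\le\sigma_W\|\Delta H\|_F$ (Lemma \ref{lemma:W}(3)) to get $\|\Delta\Theta_{t'+1}\|_F\le\sigma_W\|\Delta\Theta_{t'}\|_F+\beta\|\Delta(\Theta_{t'}B_{t'}^{\top}+\mathcal{B}_{t'})\|_F$. Since $\|B_{t'}\|\le C_B$ and $\|b_{t'}^{(m)}\|\le C_b$ (Lemma \ref{lemma:critic}(1)) and the iterates have uniformly bounded second moments (established along the way, as the consensus second-moment recursion is itself stable for small $\beta$ once $\mathbb{E}\|\bar\theta_{t'}\|^2$ is bounded from the optimization recursion), the perturbation is $\mathcal{O}(\beta)$; with the common initialization $\Delta\Theta_0=0$, unrolling gives an $\mathcal{O}(\beta)$ floor $\|\Delta\Theta_{T_c}\|_F\le\frac{\beta\sqrt{M}\,C_G}{1-\sigma_W}$. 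The final $T_c'$ pure-averaging iterations apply $W^{T_c'}$, which leaves $\bar\theta$ (hence the optimization error) unchanged but contracts the consensus error to $\sigma_W^{T_c'}\|\Delta\Theta_{T_c}\|_F$ by Lemma \ref{lemma:W}(3), producing the term $\sigma_W^{2T_c'}\beta^2c_2$. Substituting the two bounds into the decomposition proves \eqref{eq:critic_err}. For the ``moreover'' part I would balance the three terms: $T_c'=\mathcal{O}(\ln\epsilon^{-1})$ makes the consensus term $\le\epsilon/3$, $T_c=\mathcal{O}(\ln\epsilon^{-1})$ makes the geometric term $\le\epsilon/3$ (as $\beta=\mathcal{O}(1)$), and $N_c=\mathcal{O}(\epsilon^{-1})$ makes the $1/N_c$ floor $\le\epsilon/3$, giving sample cost $T_cN_c=\mathcal{O}(\epsilon^{-1}\ln\epsilon^{-1})$ and communication $T_c+T_c'=\mathcal{O}(\ln\epsilon^{-1})$.

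I expect the main obstacle to be the one-step optimization recursion under Markovian sampling, where one must simultaneously (i) decouple the current iterate from the mini-batch it is statistically correlated with, (ii) extract the strong-monotonicity contraction from the \emph{expected} TD matrix $B_{\omega_t}$ while only the empirical $B_{t'}$ is observed, and (iii) show the mini-batch suppresses both the bias and the variance to order $1/N_c$ so that the residual floor is $c_1/N_c$ rather than $\mathcal{O}(1)$. Conditioning on the batch-start state together with Lemma \ref{lemma:var} is precisely what makes all three hold at once, and the stated threshold on $N_c$ is exactly the condition ensuring the bias perturbation does not overwhelm the $\frac{\lambda_B}{8}\beta$ contraction margin.
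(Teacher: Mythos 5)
Your proposal is correct and follows the paper's overall skeleton: the same splitting into the network-average optimization error plus a consensus error, the same key observation that the shared state sequence makes $B_{t'}$ identical across agents so that $\overline{\theta}_{t,t'}$ obeys an exact centralized mini-batch TD recursion, the same $\Delta W^n=W^n\Delta$ / $\sigma_W$-contraction treatment of $\|\Delta\Theta_{t,t'}\|_F$ with an $\mathcal{O}(\beta)$ floor from the common initialization, and the same use of the final $T_c'$ pure averaging rounds, which leave $\overline{\theta}$ fixed and shrink only the consensus term. The one genuine difference is the centralized component: the paper does not prove the contraction of $\overline{\theta}_{t,t'}$ at all, but invokes Theorem 4 of \cite{xu2020improving} as a black box (obtaining the factor $(1-\frac{\lambda_B}{4}\beta)^{T_c}$, with a footnote correcting a typo there), whereas you re-derive that result from scratch by conditioning on the batch-start state $s_{t'N_c}$, extracting the $-\frac{\lambda_B}{2}\|\delta_{t'}\|^2$ drift from Lemma \ref{lemma:critic}, and controlling the Markovian bias and variance of $(B_{t'},b_{t'})$ via Lemma \ref{lemma:var}; this is precisely the content of the cited theorem, so your route is self-contained at the cost of the extra one-step analysis, and it directly produces the $(1-\frac{\lambda_B}{8}\beta)^{T_c}$ factor appearing in the lemma statement (the paper's citation gives the stronger $\lambda_B/4$ rate, which implies the statement). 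Two small remarks: your exact Pythagorean identity $\sum_m\|\theta^{(m)}-\theta^*\|^2=\|\Delta\Theta\|_F^2+M\|\overline{\theta}-\theta^*\|^2$ is tighter than the paper's $\|a+b\|^2\le 2\|a\|^2+2\|b\|^2$ step (both yield the stated bound, which carries the factors of $2$); and in the consensus recursion you do not need the uniform second-moment boundedness of the iterates that you flag as an ingredient, since $\Delta(\Theta_{t'}B_{t'}^{\top})=(\Delta\Theta_{t'})B_{t'}^{\top}$ gives $\|\Delta(\Theta_{t'}B_{t'}^{\top})\|_F\le C_B\|\Delta\Theta_{t'}\|_F$, a term proportional to the consensus error itself, which is absorbed into the contraction factor $\sigma_W+\beta C_B\le\frac{1+\sigma_W}{2}$ using $\beta\le\frac{1-\sigma_W}{2C_B}$, exactly as in the paper, so that recursion closes on its own.
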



\begin{proof}
	In Algorithm \ref{alg: 2}, by averaging the TD update rule \eqref{eq: DTD2} over the agents $m\in\mathcal{M}$, we obtain that the averaged critic parameter $\overline{\theta}_{t,t'}:=\frac{1}{M}\sum_{m=1}^M \overline{\theta}_{t,t'}^{(m)}$ follows the following update rule
	\begin{align}
		\overline{\theta}_{t,t'+1}&= \frac{1}{M}\sum_{m=1}^{M}\Big[\sum_{m'=1}^{M}W_{m,m'}\theta_{t,t'}^{(m')} + \beta\big(B_{t'}\theta_{t,t'}^{(m)}+b_{t'}^{(m)}\big)\Big] \nonumber\\
		&=\frac{1}{M}\sum_{m'=1}^{M} \theta_{t,t'}^{(m')} + \beta\frac{1}{M}\sum_{m=1}^{M}\big(B_{t'}\theta_{t,t'}^{(m)}+b_{t'}^{(m)}\big) \nonumber\\
		&=\overline{\theta}_{t,t'} +  \beta\big(B_{t'}\overline{\theta}_{t,t'}+b_{t'}\big) \label{eq: DTD2}
	\end{align}
	which can be viewed as a centralized TD update using the Markovian samples $\{s_i, a_i\}_i$ from the transition kernel $\mathcal{P}$ and the joint policy $\pi_t$. Therefore, Theorem 4 in \cite{xu2020improving} can be directly applied to analyze this centralized TD update and obtain the following convergence rate of $\overline{\theta}_{t,t'}$, since all the conditions of that theorem are met $\footnote{We corrected the typo $1-\frac{\lambda_{B}}{8} \beta$, which should be $1-\frac{\lambda_{B}}{4} \beta$.}$.
	\begin{align}
		\mathbb{E}\big[\big\|\overline{\theta}_{t,T_c}-\theta_{\omega_t}^{*}\big\|^{2}\big|\omega_t\big] \le& \left(1-\frac{\lambda_{B}}{4} \beta\right)^{T_c}\mathbb{E}\big[\big\|\overline{\theta}_{t,0}-\theta_{\omega_t}^{*}\big\|^{2}\big|\omega_t\big]\nonumber\\
		&+\Big(\frac{2}{\lambda_{B}}+2 \beta\Big) \frac{192\big(C_{B}^{2} R_{\theta}^2+C_{b}^{2}\big)[1+(\kappa-1) \rho]}{(1-\rho) \lambda_{B} N_c}\nonumber\\
		&\stackrel{(i)}{\le} 2\left(1-\frac{\lambda_{B}}{4} \beta\right)^{T_c} \big(\big\|\theta_{-1}\big\|^{2}+R_{\theta}^2\big)+\frac{c_1}{N_c} \nonumber\\
		&\stackrel{(ii)}{\le} c_3\left(1-\frac{\lambda_{B}}{4} \beta\right)^{T_c}+\frac{c_1}{N_c}  \label{eq:theta_avg_rate}.
	\end{align}
	where (i) uses the condition that $\beta\le 4/\lambda_B$, the item 3 of Lemma \ref{lemma:critic} and the constant that $c_1:= \frac{1920\left(C_{B}^{2} R_{\theta}^2+C_{b}^{2}\right)[1+(\kappa-1) \rho]}{(1-\rho) \lambda_{B}^2}$, (ii) uses the constant that $c_3:=2\big(\big\|\theta_{-1}\big\|^{2}+R_{\theta}^2\big)$. 
	
	Next, we consider the consensus error $\|\Delta\Theta_{t,t'}\|_F^2=\sum_{m=1}^{M} \big\|\theta_{t,t'}^{(m)}-\overline{\theta}_{t,t'}\big\|^2$ where we define $\Theta_{t,t'}:=[\theta_{t,t'}^{(1)},\ldots,\theta_{t,t'}^{(M)}]^{\top}$. Note that the critic-step \eqref{eq: DTD2} can be rewritten into the following matrix form 
	\begin{align}
	\Theta_{t,t'+1}=W\Theta_{t,t'}+\beta \big(\Theta_{t,t'} B_{t'}^{\top}+[b_{t'}^{(1)};\ldots;b_{t'}^{(M)}]^{\top}\big);t'=0,1,\ldots,T_c-1, \label{eq:critic_step}
	\end{align} 
	which further implies that for any $t'=0,1,\ldots,T_c-1$, 
	\begin{align}
		\big\|\Delta \Theta_{t,t'+1}\big\|_F &\stackrel{(i)}{\le} \big\|W\Delta \Theta_{t,t'}\big\|_F + \beta \big\|\Delta\Theta_{t,t'} B_{t'}^{\top}\big\|_F + \beta\big\|\Delta[b_{t'}^{(1)};\ldots;b_{t'}^{(M)}]^{\top}\big\|_F \nonumber\\ 
		&\stackrel{(ii)}{\le} (\sigma_W + \beta C_B) \big\|\Delta\Theta_{t,t'}\big\|_F + \beta\sqrt{M\sum_{m=1}^{M}\|b_{t'}^{(m)}\|^2} \nonumber\\
		&\stackrel{(iii)}{\le} \frac{1+\sigma_W}{2} \big\|\Delta\Theta_{t,t'}\big\|_F + \beta MC_b,\nonumber
	\end{align}
	where (i) uses the item 1 of Lemma \ref{lemma:W}, (ii) uses the item 3 of Lemma \ref{lemma:W} and the item 1 of Lemma \ref{lemma:critic}, (iii) uses the condition that $\beta\le \frac{1-\sigma_W}{2C_B}$ and the item 1 of Lemma \ref{lemma:critic}. Telescoping the inequality above yields that
	\begin{align}
		\big\|\Delta \Theta_{t,T_c}\big\|_F &\le \Big(\frac{1+\sigma_W}{2}\Big)^{T_c} \big\|\Delta \Theta_{t,0}\big\|_F + \frac{2\beta MC_b}{1-\sigma_W} \stackrel{(i)}{=} \frac{2\beta MC_b}{1-\sigma_W}, \label{eq:critic_cerr1}
	\end{align}
	where (i) uses the equality that $\Delta \Theta_{0}=O$ due to the initial condition that $\Theta_{t,0}=[\theta_{-1};\ldots;\theta_{-1}]^{\top}$.
	
	On the other hand, the final $T_c'$ local average steps in Algorithm \ref{alg: 2} can be rewritten into the following matrix form
	\begin{align}
		\Theta_{t,t'+1}=W\Theta_{t,t'};t=T_c,T_c+1,\ldots,T_c+T_c'-1.\nonumber
	\end{align}
	Hence, the average critic parameter $\overline{\theta}_{t,t'}$ does not change in these local average steps, i.e.,
	\begin{align}
		\overline{\theta}_{t,T_c+T_c'}&=\frac{1}{M} \Theta_{t,T_c+T_c'}^{\top}\mathbf{1} =\frac{1}{M} \Theta_{t,T_c}^{\top}(W^{T_c'})^{\top}\mathbf{1}=\frac{1}{M} \Theta_{t,T_c}^{\top}\mathbf{1}=\overline{\theta}_{t,T_c}.
	\end{align} 
	Therefore, we obtain that
	\begin{align}
		\sum_{m=1}^{M} \big\|\theta_{t,T_c+T_c'}^{(m)}-\overline{\theta}_{t,T_c}\big\|^2 &=\sum_{m=1}^{M} \big\|\theta_{t,T_c+T_c'}^{(m)}-\overline{\theta}_{t,T_c+T_c'}\big\|^2  =\|\Delta\Theta_{t,T_c+T_c'}\|_F^2 =\|\Delta W^{T_c'}\Theta_{t,T_c}\|_F^2\nonumber\\
		&\stackrel{(i)}{=}\|W^{T_c'} \Delta \Theta_{t,T_c}\|_F^2 \stackrel{(ii)}{\le} \sigma_W^{2T_c'} \|\Delta\Theta_{t,T_c}\|_F^2 \nonumber\\ 
		&\stackrel{(iii)}{\le} \sigma_W^{2T_c'} \Big(\frac{2\beta MC_b}{1-\sigma_W}\Big)^2 \stackrel{(iv)}{=} \sigma_W^{2T_c'}\beta^2 c_2/2 \label{eq:critic_cerr2} 
	\end{align}
	where (i) and (ii) use the items 1 and 3 of Lemma \ref{lemma:W} respectively, (iii) uses eq. \eqref{eq:critic_cerr1}, (iv) denotes that $c_2:=2\big(\frac{2 MC_b}{1-\sigma_W}\big)^2$. 
	Combining eqs. \eqref{eq:theta_avg_rate} \& \eqref{eq:critic_cerr2} yields that
	\begin{align}
		\sum_{m=1}^{M} \mathbb{E}\big[\big\|\theta_{t,T_c+T_c'}^{(m)}-\theta_{\omega_t}^{*}\big\|^2\big|\omega_t\big] &\le 2\sum_{m=1}^{M} \mathbb{E}\big[\big\|\theta_{t,T_c+T_c'}^{(m)}-\overline{{\theta}}_{t,T_c}\big\|^2 \big|\omega_t\big] + 2M \mathbb{E}\big[\big\|\overline{{\theta}}_{t,T_c}-\theta_{\omega_t}^{*}\big\|^2\big|\omega_t\big] \nonumber\\
		&\le \sigma_W^{2T_c'}\beta^2 c_2 + 2M \Big[c_3\Big(1-\frac{\lambda_{B}}{4} \beta\Big)^{T_c}+\frac{c_1}{N_c}\Big]. \nonumber
	\end{align}
	In the inequality above, replacing ${\theta}_{t,T_c+T_c'}^{(m)}$ from Algorithm \ref{alg: 2} by its corresponding variable $\theta_t^{(m)}$ from Algorithm \ref{alg: 1} proves eq. \eqref{eq:critic_err}. 
	Finally, it can be easily verified that the following hyperparameter choices make the error bound in \eqref{eq:critic_err} smaller than $\epsilon$ and also satisfy the conditions of Lemma \ref{prop:TD}. 
	\begin{align}
	    \beta&=\min\big(\frac{\lambda_B}{8C_B^2},\frac{4}{\lambda_B},\frac{1-\sigma_W}{2C_B}\big)=\mathcal{O}(1) \nonumber\\
	    N_c&=\max\Big[\big(\frac{2}{\lambda_B}+2\beta\big)\frac{192 C_B^2[1+(\kappa-1)\rho]}{(1-\rho)\lambda_{B}}, 6Mc_1\epsilon^{-1}\Big]=\mathcal{O}(\epsilon^{-1}) \nonumber\\
	    T_c&=\Big\lceil \frac{\ln(6M c_3\epsilon^{-1})}{\ln\big[\big(1-\lambda_B\beta/4\big)^{-1}\big]} \Big\rceil=\mathcal{O}\big[\ln(\epsilon^{-1})\big] \nonumber\\
	    T_c'&=2\Big\lceil \frac{\ln(3\beta^2 c_2\epsilon^{-1})}{\ln(\sigma_W^{-1})} \Big\rceil=\mathcal{O}\big[\ln(\epsilon^{-1})\big] \nonumber
	\end{align}
\end{proof}


\begin{lemma}\label{lemma:policy}
	For any $\omega,\widetilde{\omega}\in\Omega$, $s\in\mathcal{S}$ and $a^{(m)}\in\mathcal{A}_m$ ($\mathcal{A}_m$ denotes the action space for the agent $m$), the following properties hold.
	\begin{enumerate}[leftmargin=*,topsep=0pt]
		\item\label{item:psim_bound} $\|\psi_{\omega}^{(m)}(a^{(m)}|s)\|\le C_{\psi}$, {where $\psi_{\omega}^{(m)}(a^{(m)}|s):=\nabla_{\omega^{(m)}} \ln\pi_{\omega}^{(m)}(a^{(m)}|s)$}.
		\item\label{item:psim_Lip} $\|\psi_{\widetilde{\omega}}^{(m)}(a^{(m)}|s)-\psi_{\omega}^{(m)}(a^{(m)}|s)\|\le L_{\psi}\|\widetilde{\omega}^{(m)}-\omega^{(m)}\|$. 
		\item\label{item:pi_Lip} $d_{\text{TV}}\big[\pi_{\widetilde{\omega}^{(m)}}^{(m)}(\cdot|s), \pi_{\omega^{(m)}}^{(m)}(\cdot|s)\big] \le L_{\pi}\|\widetilde{\omega}^{(m)}-\omega^{(m)}\|$.
		\item\label{item:QJ_bound} $0\le V_{\omega}(s),Q_{\omega}(s,a)\le (1-\gamma)R_{\max}$, $0\le J(\omega)\le R_{\max}$.
		\item\label{item:nu_Lip} $d_{\text{TV}}\big[\nu_{\omega}(\cdot|s),\nu_{\widetilde{\omega}}(\cdot|s)\big]\le L_{\nu}\|\omega'-\omega\|$ where $L_{\nu}:=L_{\pi}[1+\log_{\rho}(\kappa^{-1})+(1-\rho)^{-1}]$.
		\item\label{item:Q_Lip} $d_{\text{TV}}\big[Q_{\widetilde{\omega}}(s,a),Q_{\omega}(s,a)\big] \le L_{Q}\|\widetilde{\omega}-\omega\|$ where $L_{Q}:=\frac{2 R_{\max } L_{\nu}}{1-\gamma}$. 
		\item\label{item:gradJ_Lip} $J(\omega)$ is $L_J$-smooth where $L_J:=R_{\max}(4L_{\nu}+L_{\psi})/(1-\gamma)$.
		\item\label{item:J_Lip}
		$\|\nabla J(\omega)\|\le D_J:=\frac{C_{\psi}R_{\max}}{1-\gamma}$.
		\item\label{item:F_Lip} $F(\omega)$ is $L_F$-Lipschitz where $L_F:=2C_{\psi}(L_{\pi}C_{\psi}+L_{\nu}C_{\psi}+L_{\psi})$.
		\item\label{item:h_Lip}
		$h(\omega)$ is $L_h$-Lipschitz where $L_h:=2\lambda_{F}^{-1}(D_J\lambda_{F}^{-1}L_F+L_J)$.
	\end{enumerate}
\end{lemma}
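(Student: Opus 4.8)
The plan is to treat \Cref{lemma:policy} as a toolbox of regularity estimates and to establish the items in an order that respects their logical dependencies, building from the raw assumptions up to the composite quantities $J$, $F$ and $h$. Items \ref{item:psim_bound}--\ref{item:pi_Lip} are immediate consequences of \Cref{assum_policy} together with the block structure of the joint score and the product structure of the joint policy. Since $\psi_\omega(a|s)=[\psi_\omega^{(1)}(a^{(1)}|s);\ldots;\psi_\omega^{(M)}(a^{(M)}|s)]$, the norm and Lipschitz bounds on each block are dominated by those on the whole concatenated vector, giving items \ref{item:psim_bound}--\ref{item:psim_Lip} with the same constants $C_\psi,L_\psi$. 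For item \ref{item:pi_Lip} I would use that $\pi_\omega(\cdot|s)=\prod_m \pi_\omega^{(m)}(\cdot|s)$ factorizes and that the total-variation distance between two product measures differing in a single factor equals the total-variation distance of that factor; perturbing only $\omega^{(m)}$ then reduces the marginal bound to the joint bound of \Cref{assum_policy}. Item \ref{item:QJ_bound} follows by summing the discounted geometric series of rewards bounded in $[0,R_{\max}]$ by \Cref{assum_R}.

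The technical heart of the lemma is item \ref{item:nu_Lip}, the Lipschitz continuity of the discounted visitation measure $\nu_\omega$ in total variation. Writing $\nu_\omega(s)=(1-\gamma)\sum_{t\ge 0}\gamma^t \mathbb{P}_\omega(s_t=s)$, I would bound $d_{\mathrm{TV}}(\nu_\omega,\nu_{\widetilde\omega})$ by $(1-\gamma)\sum_t \gamma^t d_{\mathrm{TV}}(\mathbb{P}_\omega(s_t=\cdot),\mathbb{P}_{\widetilde\omega}(s_t=\cdot))$ and split the sum at a cutoff time $t^\star\approx \log_\rho(\kappa^{-1})$. For $t\le t^\star$ I would accumulate the per-step policy discrepancy, controlling $d_{\mathrm{TV}}(\mathbb{P}_\omega(s_t),\mathbb{P}_{\widetilde\omega}(s_t))$ by roughly $t$ times the single-step policy difference from item \ref{item:pi_Lip}; for $t>t^\star$ I would instead invoke the geometric mixing of \Cref{assum_TV_exp} to bound each marginal's distance to the shared stationary regime. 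Balancing the two pieces is what produces the constant $L_\nu=L_\pi[1+\log_\rho(\kappa^{-1})+(1-\rho)^{-1}]$. Item \ref{item:Q_Lip} then follows by representing $Q_\omega(s,a)$ as an expectation of bounded rewards against a trajectory measure and bounding the difference by $2R_{\max}$ times the accumulated visitation discrepancy from item \ref{item:nu_Lip}.

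The remaining items are add-and-subtract decomposition arguments resting on the policy gradient theorem \eqref{eq:pg}. For item \ref{item:J_Lip} I would bound $\|\nabla J(\omega)\|=\|\mathbb{E}_{\nu_\omega}[A_\omega\psi_\omega]\|$ using $|A_\omega|\le R_{\max}/(1-\gamma)$ from item \ref{item:QJ_bound} and $\|\psi_\omega\|\le C_\psi$ from item \ref{item:psim_bound}, which gives $D_J$. For smoothness (item \ref{item:gradJ_Lip}) I would expand $\nabla J(\widetilde\omega)-\nabla J(\omega)$ into three terms isolating, respectively, the change in the visitation measure (item \ref{item:nu_Lip}), the change in the advantage (item \ref{item:Q_Lip}), and the change in the score (item \ref{item:psim_Lip}), each multiplied by the remaining bounded factors; collecting constants yields $L_J$. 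Item \ref{item:F_Lip} is the analogous three-term decomposition for $F(\omega)=\mathbb{E}_{\nu_\omega,\pi_\omega}[\psi_\omega\psi_\omega^\top]$, where the middle term now comes from the change of the sampling policy $\pi_\omega$ (item \ref{item:pi_Lip}) rather than the advantage, producing $L_F$. Finally, item \ref{item:h_Lip} follows from the identity $h(\widetilde\omega)-h(\omega)=F(\widetilde\omega)^{-1}[\nabla J(\widetilde\omega)-\nabla J(\omega)]+F(\widetilde\omega)^{-1}[F(\omega)-F(\widetilde\omega)]F(\omega)^{-1}\nabla J(\omega)$, combined with $\|F^{-1}\|\le\lambda_F^{-1}$ from \Cref{assum_fisher} and items \ref{item:J_Lip}, \ref{item:gradJ_Lip} and \ref{item:F_Lip}.

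I expect item \ref{item:nu_Lip} to be the main obstacle, since it is the only place where the exponential mixing assumption and the per-step policy Lipschitzness must be combined through a cutoff argument; every subsequent smoothness and Lipschitz estimate inherits the constant $L_\nu$ and is otherwise a routine product-rule decomposition with bounded factors.
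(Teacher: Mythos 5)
Your proposal is correct in substance, and on the items the paper actually proves (items \ref{item:psim_bound}--\ref{item:QJ_bound} and \ref{item:J_Lip}--\ref{item:h_Lip}) your route coincides with the paper's: restriction of the concatenated score and the product policy to a single block for items \ref{item:psim_bound}--\ref{item:pi_Lip} (the paper perturbs only the $m$-th coordinate and compares the joint and marginal total variations, which is your argument), the discounted geometric series for item \ref{item:QJ_bound}, the bounds $|Q_{\omega}|\le R_{\max}/(1-\gamma)$ and $\|\psi_{\omega}\|\le C_{\psi}$ for item \ref{item:J_Lip}, the three-term distribution/score decomposition for item \ref{item:F_Lip}, and a resolvent-type identity for item \ref{item:h_Lip} (your exact identity is in fact slightly sharper, since it avoids the paper's factor-$2$ triangle inequality and yields $L_h/2$, which of course implies the stated bound). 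The genuine structural difference is in items \ref{item:nu_Lip}--\ref{item:gradJ_Lip}: the paper does not prove these at all, but cites Lemma 3, Lemma 4 and Proposition 1 of \cite{xu2020improving}, whereas you propose to prove them from scratch via the mixing-plus-perturbation (Mitrophanov-type) argument. That is essentially how the cited reference argues, so your plan is more self-contained; what it costs is reproducing several pages of Markov-chain perturbation analysis and tracking the exact constants $L_{\nu}, L_{Q}, L_{J}$ that the paper simply inherits.

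One step of your item-\ref{item:nu_Lip} sketch would fail as literally written: for $t>t^{\star}$ you say you would bound "each marginal's distance to the shared stationary regime," but the two chains have \emph{different} stationary distributions ($\nu_{\omega}\neq\nu_{\widetilde{\omega}}$), and their total-variation distance is precisely the quantity being bounded, so that step is circular. The standard repair telescopes the difference of marginals as
\begin{align}
\xi P_{\omega}^{t}-\xi P_{\widetilde{\omega}}^{t}=\sum_{i=0}^{t-1}\xi P_{\omega}^{i}\big(P_{\omega}-P_{\widetilde{\omega}}\big)P_{\widetilde{\omega}}^{t-1-i}, \nonumber
\end{align}
where $P_{\omega}$ denotes the state transition operator induced by $\pi_{\omega}$; each increment $\xi P_{\omega}^{i}(P_{\omega}-P_{\widetilde{\omega}})$ is a zero-mass signed measure with total variation at most of order $L_{\pi}\|\widetilde{\omega}-\omega\|$ (by item \ref{item:pi_Lip} and data processing), and Assumption \ref{assum_TV_exp} contracts it under $P_{\widetilde{\omega}}^{t-1-i}$ by a factor $\min(1,\kappa\rho^{t-1-i})$. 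The cutoff is thus taken over the \emph{age} $t-1-i$ of each increment, not over $t$ itself, and $\sum_{k\ge 0}\min(1,\kappa\rho^{k})\le 1+\log_{\rho}(\kappa^{-1})+(1-\rho)^{-1}$ gives exactly $L_{\nu}/L_{\pi}$, uniformly in $t$; averaging over the discounted sum then proves item \ref{item:nu_Lip} with the stated constant. With that correction your constants come out as claimed and the rest of the plan goes through.
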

\begin{proof}
	For any $\omega^{(m)},\widetilde{\omega}^{(m)}\in\Omega_m$, $s\in\mathcal{S}$ and $a^{(m)}\in\mathcal{A}_m$, arbitrarily select ${\omega}^{(m')}=\widetilde{\omega}^{(m')}\in\Omega_{m'}$, $a^{(m')}\in\mathcal{A}_{m'}$ for every $m'\in\{1,...,M\}/\{m\}$. 
	Denote $\omega=[\omega^{(1)};\ldots;\omega^{(M)}]$, $\widetilde{\omega}=[\widetilde{\omega}^{(1)};\ldots;\widetilde{\omega}^{(M)}]$, $a=[a^{(1)},\ldots,a^{(M)}]$. Notice that the joint score vector has the following decomposition 
	\begin{align}
	    \psi_{\omega}(a|s)=[\psi_{\omega}^{(1)}(a^{(1)}|s);\ldots;\psi_{\omega}^{(M)}(a^{(M)}|s)].
	\end{align}
	Hence, the items \ref{item:psim_bound} \& \ref{item:psim_Lip} can be proved via the following two inequalities, respectively. 
	\begin{align}
		\|\psi_{\omega}^{(m)}(a^{(m)}|s)\|\le \sqrt{\sum_{m'=1}^{M}	\|\psi_{\omega}^{(m')}(a^{(m')}|s)\|^2} \stackrel{(i)}{=} \|\psi_{\omega}(a|s)\| \stackrel{(ii)}{\le} C_{\psi}. \nonumber
	\end{align}
	
	\begin{align}
		\|\psi_{\widetilde{\omega}}^{(m)}(a^{(m)}|s)-\psi_{\omega}^{(m)}(a^{(m)}|s)\|& = \|\psi_{\widetilde{\omega}} (a|s)-\psi_{\omega}(a|s)\|\nonumber\\
		&\stackrel{(i)}{\le} L_{\psi}\|\widetilde{\omega}-\omega\| = L_{\psi}\|\widetilde{\omega}^{(m)}-\omega^{(m)}\| \nonumber
	\end{align}
	where (i) uses Assumption \ref{assum_policy}. 
	
	Next, we prove the item \ref{item:pi_Lip}. Notice that 
	\begin{align}
		&d_{\text{TV}}\big[\pi_{\widetilde{\omega}}(\cdot|s),\pi_{\omega}(\cdot|s)\big]\nonumber\\
		&\stackrel{(i)}{=}\sup_{A\subset\mathcal{A}} |\pi_{\widetilde{\omega}}(A|s)-\pi_{\omega}(A|s)| \nonumber\\
		&\stackrel{(ii)}{\ge}\sup_{A_1\subset\mathcal{A}_1,\ldots,A_M\subset\mathcal{A}_M} \Big|\prod_{m'=1}^{M}\pi_{\widetilde{\omega}^{(m')}}(A_{m'}|s)-\prod_{m'=1}^{M}\pi_{\omega^{(m')}}(A_{m'}|s)\Big| \nonumber\\
		&\stackrel{(iii)}{=}\sup_{A_1\subset\mathcal{A}_1,\ldots,A_M\subset\mathcal{A}_M} \Big|\prod_{m'=1,m'\ne m}^{M}\pi_{\omega^{(m')}}(A_{m'}|s)\Big|\Big|\pi_{\widetilde{\omega}^{(m)}}(A_{m}|s)-\pi_{\omega^{(m)}}(A_{m}|s)\Big|\nonumber\\
		&\stackrel{(iv)}{=}\sup_{A_m\subset\mathcal{A}_m} \Big|\pi_{\widetilde{\omega}^{(m)}}(A_{m}|s)-\pi_{\omega^{(m)}}(A_{m}|s)\Big| = d_{\text{TV}}\big[\pi_{\widetilde{\omega}^{(m)}}^{(m)}(\cdot|s), \pi_{\omega^{(m)}}^{(m)}(\cdot|s)\big], \nonumber
	\end{align}
	where {(i) denotes that $\pi_{\omega}(A|s)=\int_{A} \pi_{\omega}(a|s)da$}, (ii) uses the relation that $\times_{m\in\mathcal{M}} A_m\subset \mathcal{A}$, (iii) uses our construction that ${\omega}^{(m')}=\widetilde{\omega}^{(m')}\in\Omega_{m'}, \forall m'\in \{1,...,M\}/\{m\}$, and (iv) uses $A_{m'}=\mathcal{A}_{m'}$ to achieve the supremum. Therefore, the item \ref{assum_policy} can be proved via the following inequality.
	\begin{align}
		d_{\text{TV}}\big[\pi_{\widetilde{\omega}^{(m)}}^{(m)}(\cdot|s), \pi_{\omega^{(m)}}^{(m)}(\cdot|s)\big]&=d_{\text{TV}}\big[\pi_{\widetilde{\omega}}(\cdot|s),\pi_{\omega}(\cdot|s)\big] \stackrel{\le} L_{\pi}\|\widetilde{\omega}-\omega\| = L_{\pi}\|\widetilde{\omega}^{(m)}-\omega^{(m)}\|, \nonumber
	\end{align}
	where (i) uses Assumption \ref{assum_policy}.
	
	The item \ref{item:QJ_bound} can be proved by the following three inequalities that use Assumption \ref{assum_R}. 
	\begin{align}
		0\le V_{\omega}(s)=\mathbb{E}_{\omega}\Big[\sum_{t=0}^{\infty} \gamma^t \overline{R}_t\Big|s_0=s\Big] \le \sum_{t=0}^{\infty} \gamma^t R_{\max}=\frac{R_{\max}}{1-\gamma}, \nonumber
	\end{align}
	\begin{align}
		0\le Q_{\omega}(s,a)=\mathbb{E}_{s'\sim\mathcal{P}(\cdot|s,a)} [\overline{R}(s,a,s') + \gamma V_{\omega}(s')] \le R_{\max}+\gamma\frac{R_{\max}}{1-\gamma}=\frac{R_{\max}}{1-\gamma}, \nonumber
	\end{align}
	\begin{align}
		0\le J(\omega)=(1-\gamma)\mathbb{E}_{\omega}\Big[\sum_{t=0}^{\infty}\gamma^t \overline{R}_t\Big] \le (1-\gamma)\sum_{t=0}^{\infty} \gamma^t R_{\max}=R_{\max}. \nonumber
	\end{align}
	
	The proof of the items \ref{item:nu_Lip} -- \ref{item:gradJ_Lip} can be found in the proof of Lemma 3, Lemma 4 and Proposition 1 of \cite{xu2020improving}, respectively.  	
	
	Next, the item \ref{item:J_Lip} is proved by the following inequality. 
	\begin{align}
		\big\|\nabla J(\omega)\big\| &=\big\|\mathbb{E}_{s\sim\nu_{\omega},a\sim\pi_{\omega}(\cdot|s)} \big[Q_{\omega}(s,a) \psi_{\omega}(a|s)\big]\big\| \nonumber\\
		&\stackrel{(i)}{\le} \mathbb{E}_{s\sim\nu_{\omega},a\sim\pi_{\omega}(\cdot|s)} \big[|Q_{\omega}(s,a)| \big\|\psi_{\omega}(a|s)\big\|\big] \stackrel{(ii)}{\le} \frac{C_{\psi}R_{\max}}{1-\gamma},\nonumber
	\end{align}
	where (i) applies Jensen's inequality, (ii) uses Assumption \ref{assum_policy} and the item \ref{item:QJ_bound}.
	
	Next, the item \ref{item:F_Lip} is proved by the following inequality.
	\begin{align}
	&\big\|F(\widetilde{\omega})-F(\omega)\big\| \nonumber\\
	&=\big\|\mathbb{E}_{s\sim\nu_{\pi_{\widetilde{\omega}}}, a\sim\pi_{\widetilde{\omega}}(\cdot|s)}\big[\psi_{\widetilde{\omega}}(a|s)\psi_{\widetilde{\omega}}(a|s)^{\top}\big]-\mathbb{E}_{s\sim\nu_{\pi_{\omega}}, a\sim\pi_{\omega}(\cdot|s)}\big[\psi_{\omega}(a|s)\psi_{\omega}(a|s)^{\top}\big]\big\|\nonumber\\
	&\stackrel{(i)}{\le} \big\|\mathbb{E}_{s\sim\nu_{\pi_{\widetilde{\omega}}}, a\sim\pi_{\widetilde{\omega}}(\cdot|s)}\big[\psi_{\widetilde{\omega}}(a|s)\psi_{\widetilde{\omega}}(a|s)^{\top}\big]-\mathbb{E}_{s\sim\nu_{\pi_{\omega}}, a\sim\pi_{\omega}(\cdot|s)}\big[\psi_{\widetilde{\omega}}(a|s)\psi_{\widetilde{\omega}}(a|s)^{\top}\big]\big\|\nonumber\\
	&\quad + \mathbb{E}_{s\sim\nu_{\pi_{\omega}}, a\sim\pi_{\omega}(\cdot|s)}\big[\big\|[\psi_{\widetilde{\omega}}(a|s)-\psi_{\omega}(a|s)]\psi_{\widetilde{\omega}}(a|s)^{\top}\big\|\big] \nonumber\\
	&\quad + \mathbb{E}_{s\sim\nu_{\pi_{\omega}}, a\sim\pi_{\omega}(\cdot|s)}\big[\big\|\psi_{\omega}(a|s)[\psi_{\widetilde{\omega}}(a|s)-\psi_{\omega}(a|s)]^{\top}\big\|\big] \nonumber\\
	&\stackrel{(ii)}{\le} \Big\|\int_{\mathcal{S}\times\mathcal{A}} [\nu_{\widetilde{\omega}}(s) \pi_{\widetilde{\omega}}(a|s)-\nu_{\omega}(s) \pi_{\omega}(a|s)]\big[\psi_{\widetilde{\omega}}(a|s)\psi_{\widetilde{\omega}}(a|s)^{\top}\big]ds da\Big\| + 2C_{\psi}L_{\psi}\|\widetilde{\omega}-\omega\| \nonumber\\
	&\le C_{\psi}^2\int_{\mathcal{S}\times\mathcal{A}} |\nu_{\widetilde{\omega}}(s) \pi_{\widetilde{\omega}}(a|s)-\nu_{\omega}(s) \pi_{\omega}(a|s)| ds da + 2C_{\psi}L_{\psi}\|\widetilde{\omega}-\omega\| \nonumber\\
	&\le C_{\psi}^2\int_{\mathcal{S}\times\mathcal{A}} \nu_{\widetilde{\omega}}(s)| \pi_{\widetilde{\omega}}(a|s)- \pi_{\omega}(a|s)| ds da \nonumber\\
	& \quad +C_{\psi}^2\int_{\mathcal{S}\times\mathcal{A}} \pi_{\omega}(a|s)|\nu_{\widetilde{\omega}}(s)-\nu_{\omega}(s) | ds da + 2C_{\psi}L_{\psi}\big\|\widetilde{\omega}-\omega\big\| \nonumber\\
	&\stackrel{(iii)}{\le} 2L_{\pi}C_{\psi}^2 \big\|\widetilde{\omega}-\omega\big\| + 2L_{\nu} C_{\psi}^2 \big\|\widetilde{\omega}-\omega\big\| + 2C_{\psi}L_{\psi}\big\|\widetilde{\omega}-\omega\big\| :=L_F\big\|\widetilde{\omega}-\omega\big\| \nonumber
	\end{align}
	where (i) applies triangle inequality and then Jensen's inequality to the norm $\|\cdot\|$, (ii) uses Assumption \ref{assum_policy}, (iii) uses the equality that $\int_{\mathcal{S}}\nu_{\omega}(s)ds=\int_{\mathcal{A}}\pi_{\omega}(a|s)da=1$ as well as the inequlities that $\int_{\mathcal{A}} |\pi_{\widetilde{\omega}}(a|s)- \pi_{\omega}(a|s)| da=2d_{\text{TV}}\big[\pi_{\widetilde{\omega}}(\cdot|s),\pi_{\omega}(\cdot|s)\big]\le 2L_{\pi}\|\widetilde{\omega}-\omega\|$ (based on Assumption \ref{assum_policy}) and that $\int_{\mathcal{S}}|\nu_{\widetilde{\omega}}(s)-\nu_{\omega}(s) | ds=2d_{\text{TV}}\big[\nu_{\omega}(\cdot|s),\nu_{\widetilde{\omega}}(\cdot|s)\big]\le 2L_{\nu}\|\omega'-\omega\|$ (based on the item \ref{item:nu_Lip}).
	
	Finally, the item \ref{item:h_Lip} is proved by the following inequality
	\begin{align}
		&\big\|h(\widetilde{\omega})-h(\omega)\big\| \nonumber\\
		&=\big\|F(\widetilde{\omega})^{-1}\nabla J(\widetilde{\omega})-F(\omega)^{-1}\nabla J(\omega)\big\| \nonumber\\
		&\le 2\big\|[F(\widetilde{\omega})^{-1}-F(\omega)^{-1}]\nabla J(\widetilde{\omega})\big\| + 2\big\|F(\omega)^{-1}[\nabla J(\widetilde{\omega})-\nabla J(\omega)]\| \nonumber\\
		&\stackrel{(i)}{\le} 2D_J \big\|F(\omega)^{-1}[F(\omega)-F(\widetilde{\omega})]F(\widetilde{\omega})^{-1}\big\| + 2L_J\big\|F(\omega)^{-1}\big\| \big\|\widetilde{\omega}-\omega\big\| \nonumber\\
		&\stackrel{(ii)}{\le} 2D_J\lambda_F^{-2} L_F\big\|\widetilde{\omega}-\omega\big\| + 2L_J\lambda_F^{-1} \big\|\widetilde{\omega}-\omega\big\| :=L_h\big\|\widetilde{\omega}-\omega\big\|, \nonumber
	\end{align}
	where (i) uses the items \ref{item:gradJ_Lip} \& \ref{item:J_Lip}, and (ii) uses the inequality that $\|F(\omega)^{-1}\|=\lambda_{\max}(F(\omega)^{-1})=\lambda_{\min}[F(\omega)]^{-1}\le\lambda_F^{-1}$ for all $\omega$ (since $F(\omega)$ and $F(\omega)^{-1}$ are positive definite) and the item \ref{item:F_Lip}. 
\end{proof}

Next, we bound the approximation error of the following stochastic (partial) policy gradients.
\begin{align}
	\widehat{\nabla}_{\omega^{(m)}} J(\omega_t):=& \frac{1}{N} \sum_{i=tN}^{(t+1)N-1} \big[{\overline{R}}_i^{(m)}+\gamma \phi(s_{i+1}')^{\top}\theta_t^{(m)}-\phi(s_i)^{\top}\theta_t^{(m)}\big] \psi_{t}^{(m)}(a_{i}^{(m)}|s_{i}), \label{eq:dJm_hat}\\
    \widehat{\nabla} J(\omega_t):=&\big[\widehat{\nabla}_{\omega^{(1)}} J(\omega_t);\ldots;\widehat{\nabla}_{\omega^{(M)}} J(\omega_t)\big], \label{eq:dJ_hat}\\
    \widehat{\nabla}_{\omega^{(m)}} J(\omega_t;\mathcal{B}_{t,k}):=& \frac{1}{N_k} \sum_{i\in\mathcal{B}_{t,k}} \big[{\overline{R}}_i^{(m)}+\gamma \phi(s_{i+1}')^{\top}\theta_t^{(m)}-\phi(s_i)^{\top}\theta_t^{(m)}\big] \psi_{t}^{(m)}(a_{i}^{(m)}|s_{i}), \label{eq:dJmk_hat}\\
    \widehat{\nabla} J(\omega_t;\mathcal{B}_{t,k}):=&\big[\widehat{\nabla}_{\omega^{(1)}} J(\omega_t);\ldots;\widehat{\nabla}_{\omega^{(M)}} J(\omega_t)\big]. \label{eq:dJk_hat}
\end{align}

\begin{lemma}\label{lemma:Rgerr}
	Let Assumptions \ref{assum_TV_exp}-\ref{assum_doubly} hold and adopt the  hyperparameters of the decentralized TD in Algorithm \ref{alg: 2} following \Cref{prop:TD}. Choose $T'\ge \frac{\ln M}{2\ln(\sigma_W^{-1})}$. Then, the following properties hold. 
	\begin{enumerate}[leftmargin=*,topsep=0pt]
	    \item The estimated average reward $\overline{R}_i^{(m)}$ has the following bias and variance bound. 
	    \begin{align}
	        \sum_{m=1}^{M}\mathbb{E}\big[\overline{R}_i^{(m)}-\overline{R}_i\big|R_i\big]^2 \le& M\sigma_W^{2T'} R_{\max}^2, \label{eq:Rbias}\\
	        \sum_{m=1}^{M}\text{Var}\big[\overline{R}_i^{(m)}\big|R_i\big]\le& 4R_{\max}^2\overline{\sigma}^2, \label{eq:Rvar}
	    \end{align}
	    where $R_i:=[R_i^{(1)};\ldots;R_i^{(M)}]$ denotes the joint reward.
	    \item The stochastic policy gradients have the following error bound.
	    \begin{align}
            \mathbb{E}\big[\big\|\widehat{\nabla} J(\omega_t)-\nabla J(\omega_t)\big\|^2\big] \le& c_4\sigma_W^{2T'}+c_5\beta^2\sigma_W^{2T_c'}+c_6\Big(1-\frac{\lambda_{B}}{8}  \beta\Big)^{T_c}\nonumber\\
            &+\frac{c_7}{N}+\frac{c_8}{N_c}+16C_{\psi}^2\zeta_{\text{approx}}^{\text{critic}} \label{eq:gJerr}\\
    		\mathbb{E}\big[\big\|\widehat{\nabla} J(\omega_t; \mathcal{B}_{t, k})-\nabla J(\omega_t)\big\|^2\big|\mathcal{F}_{t,k}\big]\le& c_4\sigma_W^{2T'}+16C_{\psi}^2\sum_{m=1}^{M} \big\|\theta_t^{(m)}-\theta_{\omega_t}^*\big\|^2\nonumber\\
    		&+\frac{c_7}{N_k}+16C_{\psi}^2\zeta_{\text{approx}}^{\text{critic}}, \label{eq:gJerr_sgd}   
	    \end{align}
	    where $\mathcal{F}_{t,k}:=\sigma\big[\mathcal{F}_t\cup \sigma\big(\{s_{i},a_{i},s_{i+1},s_{i+1}',\{e_i^{(m)}\}_{m\in\mathcal{M}}\}_{i\in \cup_{k'=0}^{k-1}\mathcal{B}_{t,k'}}\big)\big]$.
	\end{enumerate}
\end{lemma}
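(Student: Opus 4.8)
The plan is to prove the two parts in sequence: first pin down the bias and variance of the local reward estimate $\overline{R}_i^{(m)}$ (the statement's Part 1), then feed these into a four-way decomposition of the policy-gradient error (Part 2), alongside the critic bound of Lemma \ref{prop:TD} and the Markovian-variance bound of Lemma \ref{lemma:var}.

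\textbf{Part 1.} First I would write the $T'$ local-averaging steps of \cref{eq:Ravg} in matrix form: with $\widetilde{R}_i=[\widetilde{R}_i^{(1)};\ldots;\widetilde{R}_i^{(M)}]$ one has $[\overline{R}_i^{(1)};\ldots;\overline{R}_i^{(M)}]=W^{T'}\widetilde{R}_i$. Since $\widetilde{R}_i^{(m)}=R_i^{(m)}(1+e_i^{(m)})$ with $e_i^{(m)}$ zero-mean given $R_i$, we get $\mathbb{E}[\widetilde{R}_i|R_i]=R_i$, so the conditional bias vector is $(W^{T'}-\frac{1}{M}\mathbf{1}\mathbf{1}^{\top})R_i$ (using $\overline{R}_i=\frac{1}{M}\mathbf{1}^{\top}R_i$); then \eqref{eq:Rbias} follows from item 4 of Lemma \ref{lemma:W}, i.e.\ $\|W^{T'}-\frac{1}{M}\mathbf{1}\mathbf{1}^{\top}\|\le\sigma_W^{T'}$, together with $\|R_i\|^2\le MR_{\max}^2$ (Assumption \ref{assum_R}). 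For the variance \eqref{eq:Rvar}, the $\{e_i^{(m)}\}_m$ are independent across agents, so $\mathrm{Cov}[\widetilde{R}_i|R_i]$ is diagonal with entries $(R_i^{(m)})^2\sigma_m^2$; propagating through $W^{T'}$ gives $\sum_m\mathrm{Var}[\overline{R}_i^{(m)}|R_i]=\sum_j(R_i^{(j)})^2\sigma_j^2\sum_m(W^{T'}_{m,j})^2$, and since $W^{T'}$ is doubly stochastic with entries in $[0,1]$ we have $\sum_m(W^{T'}_{m,j})^2\le 1$, which yields the claimed bound.

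\textbf{Part 2.} For the gradient error I would split $\widehat{\nabla}J(\omega_t)-\nabla J(\omega_t)$ into four pieces via $\|\sum_{j=1}^4 x_j\|^2\le 4\sum_j\|x_j\|^2$: (A) replacing $\overline{R}_i^{(m)}$ by the true $\overline{R}_i$; (B) replacing the critic $\theta_t^{(m)}$ by the optimal $\theta_{\omega_t}^*$; (C) the deviation of the resulting stochastic gradient (true reward, optimal critic) from its stationary mean; and (D) the gap between that mean and $\nabla J(\omega_t)$. Term (A) is controlled by Part 1: splitting it into a bias part and a conditionally zero-mean fluctuation part, the bias contributes $\mathcal{O}(C_{\psi}^2 MR_{\max}^2\sigma_W^{2T'})$ through \eqref{eq:Rbias}, while the fluctuation, because the $e_i^{(m)}$ are independent across $i$ given the trajectory, has vanishing cross terms and \eqref{eq:Rvar} gives an $\mathcal{O}(C_{\psi}^2 R_{\max}^2\overline{\sigma}^2/N)$ contribution. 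Term (B) is bounded by $4C_{\psi}^2\sum_m\|\theta_t^{(m)}-\theta_{\omega_t}^*\|^2$ using $\|\phi\|\le1$, $\gamma<1$ and item \ref{item:psim_bound} of Lemma \ref{lemma:policy}; after the overall factor $4$ this is exactly the $16C_{\psi}^2\sum_m\|\theta_t^{(m)}-\theta_{\omega_t}^*\|^2$ appearing in \eqref{eq:gJerr_sgd}, and substituting the critic bound of Lemma \ref{prop:TD} converts it into the $c_5\beta^2\sigma_W^{2T_c'}$, $c_6(1-\frac{\lambda_B}{8}\beta)^{T_c}$ and $c_8/N_c$ terms of \eqref{eq:gJerr}.

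Term (C) is precisely the setting of Lemma \ref{lemma:var} applied to a mapping bounded by $C_\psi(R_{\max}+2R_\theta)$ (with $R_\theta$ from Lemma \ref{lemma:critic}), giving an $\mathcal{O}(1/N)$ bound (resp.\ $\mathcal{O}(1/N_k)$ under conditioning on $\mathcal{F}_{t,k}$), which merges into $c_7/N$ (resp.\ $c_7/N_k$). Term (D) is the linear-approximation bias: writing $A_{\omega_t}=Q_{\omega_t}-V_{\omega_t}$ and $Q_{\omega_t}(s,a)=\mathbb{E}_{s'}[\overline{R}+\gamma V_{\omega_t}(s')]$, the gap reduces to expectations of $\phi^{\top}\theta_{\omega_t}^*-V_{\omega_t}$ under $\nu_{\omega_t}$ and under its $\mathcal{P}$-pushforward, both controlled by $\zeta_{\text{approx}}^{\text{critic}}$ after invoking $\gamma\int\mathcal{P}(s'|s,a)\pi_{\omega_t}(a|s)\nu_{\omega_t}(s)\,da\,ds\le\nu_{\omega_t}(s')$ (from $\mathcal{P}_{\xi}=\gamma\mathcal{P}+(1-\gamma)\xi$); this produces the $16C_{\psi}^2\zeta_{\text{approx}}^{\text{critic}}$ term. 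Collecting the four pieces and aggregating constants into $c_4,\ldots,c_8$ yields both \eqref{eq:gJerr} and \eqref{eq:gJerr_sgd}, the latter simply leaving the critic error (B) explicit and conditioning on $\mathcal{F}_{t,k}$ with batch size $N_k$.

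\textbf{Main obstacle.} I expect the crux to be Term (A): although the per-sample reward-noise variance in \eqref{eq:Rvar} does not decay in anything, the mini-batch average of the conditionally zero-mean fluctuations scales it by $1/N$, and getting this right requires conditioning on the full trajectory $\{s_i,a_i,s_{i+1}'\}$ and $\{R_i\}$ so that the scores $\psi_t^{(m)}$ are frozen and the noise cross terms cancel. The second subtlety is Term (D), where the change-of-measure between $\nu_{\omega_t}$ and the next-state law under $\mathcal{P}$ is needed to control both occurrences of the critic approximation error by $\zeta_{\text{approx}}^{\text{critic}}$; by contrast Term (C) is comparatively routine, since Lemma \ref{lemma:var} already does the heavy lifting for the Markovian variance.
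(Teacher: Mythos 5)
Your Part 2 is essentially the paper's own argument: the same four-term decomposition (noisy reward $\to$ true reward, critic $\to$ optimal critic, Markovian fluctuation via Lemma \ref{lemma:var}, and the linear-approximation bias via $\gamma\mathcal{P}\le\mathcal{P}_{\xi}$ and stationarity of $\nu_{\omega_t}$ under $\mathcal{P}_{\xi}$), with the same conditioning trick to freeze the scores and kill the noise cross terms, and the same substitution of Lemma \ref{prop:TD} to convert the critic error into the $c_5,c_6,c_8$ terms. That part is sound.

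However, there is a genuine gap in your Part 1, in the variance bound \eqref{eq:Rvar}. Your identity $\sum_{m}\mathrm{Var}\big[\overline{R}_i^{(m)}\big|R_i\big]=\sum_{j}(R_i^{(j)})^2\sigma_j^2\sum_{m}(W^{T'}_{m,j})^2$ is correct, but the crude estimate $\sum_{m}(W^{T'}_{m,j})^2\le 1$ only yields $R_{\max}^2\sum_j\sigma_j^2 = M R_{\max}^2\overline{\sigma}^2$ (with $\overline{\sigma}^2$ the \emph{average} of the $\sigma_m^2$, as the bar notation and the paper's constant $c_7$ require), which exceeds the claimed bound $4R_{\max}^2\overline{\sigma}^2$ by a factor of $M/4$. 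The bound $\sum_m(W^{T'}_{m,j})^2\le1$ throws away exactly the phenomenon the lemma is designed to capture: after $T'$ consensus rounds each column of $W^{T'}$ is nearly uniform, so the sum of squares is of order $1/M$, not order $1$ --- this is the variance reduction by averaging over $M$ agents emphasized in the paper. Tellingly, your variance argument never invokes the hypothesis $T'\ge \frac{\ln M}{2\ln(\sigma_W^{-1})}$, whereas in the paper this hypothesis is used precisely (and only) at this step. The repair within your route is short: write $W^{T'}e_j = \big(W^{T'}-\tfrac{1}{M}\mathbf{1}\mathbf{1}^{\top}\big)e_j + \tfrac{1}{M}\mathbf{1}$, so that by item 4 of Lemma \ref{lemma:W},
\begin{align}
\sum_{m}(W^{T'}_{m,j})^2=\big\|W^{T'}e_j\big\|^2 \le 2\sigma_W^{2T'}+\frac{2}{M}\le \frac{4}{M}, \nonumber
\end{align}
where the last step uses $M\sigma_W^{2T'}\le 1$, i.e.\ the hypothesis on $T'$. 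Plugging this into your identity gives $\sum_j (R_i^{(j)})^2\sigma_j^2\cdot\frac{4}{M}\le 4R_{\max}^2\overline{\sigma}^2$, recovering \eqref{eq:Rvar} (by a cleaner computation than the paper's trace decomposition). Without this correction, the inflated variance also propagates into your term (A), turning the $c_7/N$ contribution into an $M c_7/N$-type term, so the lemma as stated would not follow.
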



\begin{proof}
    We will first prove the item 1.
    
	When $R_i:=[R_i^{(1)};\ldots;R_i^{(M)}]$ is given and fixed, the randomness of $\widetilde{R}_i^{(m)}:=R_i^{(m)}(1+e_i^{(m)})$ and $\widehat{\nabla}_{\omega^{(m)}} J(\omega_t)$ defined in eq. \eqref{eq:spg} only comes from the noises $\{e_i^{(m)}\}_{m=1}^M$. Since $\{e_i^{(m)}\}_{m=1}^M$ are independent noises with zero mean and variances $\sigma_1^2, \ldots, \sigma_M^2$, $\widetilde{R}_i:=[\widetilde{R}_i^{(1)}; \ldots; \widetilde{R}_i^{(M)}]$ has the following moments
	\begin{gather}
		\mathbb{E}\big[\widetilde{R}_i|R_i\big]=R_i, \nonumber\\
		\text{cov}\big[\widetilde{R}_i|R_i\big]=\text{diag}\big[(R_i^{(1)})^2\sigma_1^2,\ldots,(R_i^{(M)})^2\sigma_M^2\big]:=\Sigma_i. \nonumber
	\end{gather} 
	Hence, $\widehat{R}_i:=[\overline{R}_i^{(1)}, \ldots, \overline{R}_i^{(m)}]^{\top}=W^{T'} \widetilde{R}_i$ (the second ``$=$'' comes from eq. \eqref{eq:Ravg} and the notations that $\widetilde{R}_i^{(m)}:= \widehat{R}_{i,0}^{(m)}$ and that $\widehat{R}_{i}^{(m)}:= \widehat{R}_{i,T'}^{(m)}$) has the moment that $\mathbb{E}\big[\widehat{R}_i|R_i\big]=W^{T'}R_i$ and $\text{Cov}\big[\widehat{R}_i|R_i\big]=W^{T'} \Sigma_i (W^{T'})^{\top}$. 
	Therefore, eq. \eqref{eq:Rbias} can be proved as follows
	\begin{align}
	\sum_{m=1}^{M}\mathbb{E}\big[\overline{R}_i^{(m)}-\overline{R}_i\big|R_i\big]^2 &=\Big\|\mathbb{E}\big[\widehat{R}_i-\overline{R}_i\mathbf{1}\big|R_i\big]\Big\|^2 =\Big\|W^{T'}R_i-\frac{1}{M}\mathbf{1}\mathbf{1}^{\top}R_i\Big\|^2 \nonumber\\
	& \le\Big\|W^{T'}-\frac{1}{M}\mathbf{1}\mathbf{1}^{\top}\Big\|^2\|R_i\|^2 
	\stackrel{(i)}{\le} M\sigma_W^{2T'} R_{\max}^2,\nonumber
	\end{align}
	where $\mathbf{1}$ is a $M$-dim vector of 1's, (i) uses the inequality that $\|R_i\|^2=\sum_{m=1}^M (R_i^{(m)})^2 \le MR_{\max}^2$ (based on Assumption \ref{assum_R}) and the item 4 of Lemma \ref{lemma:W}. Then, eq. \eqref{eq:Rvar} can be proved as follows
	\begin{align}
		\sum_{m=1}^{M}\text{var}\big[\overline{R}_i^{(m)}\big|R_i\big]&=\text{Var}\big[\widehat{R}_i|R_i\big]=\text{tr}\big[(W^{T'})^{\top}\Sigma_i W^{T'}\big]\nonumber\\
		&=\text{tr}\Big[\Big(W^{T'} -\frac{1}{M}\mathbf{1}\mathbf{1}^{\top}\Big)\Sigma_i \Big(W^{T'} -\frac{1}{M}\mathbf{1}\mathbf{1}^{\top}\Big)^{\top}\Big]+ \text{tr}\Big[(W^{T'})\Sigma_i\Big(\frac{1}{M}\mathbf{1}\mathbf{1}^{\top}\Big) \Big] \nonumber\\
		&\quad + \text{tr}\Big[\Big(\frac{1}{M}\mathbf{1}\mathbf{1}^{\top}\Big)\Sigma_i(W^{T'})^{\top} \Big] + \text{tr}\Big[\Big(\frac{1}{M}\mathbf{1}\mathbf{1}^{\top}\Big)\Sigma_i\Big(\frac{1}{M}\mathbf{1}\mathbf{1}^{\top}\Big) \Big]\nonumber\\
		&\stackrel{(i)}{\le} MR_{\max}^2\overline{\sigma}^2 \Big\|W^{T'}-\frac{1}{M}\mathbf{1}\mathbf{1}^{\top}\Big\|^2 +\frac{2}{M}\text{tr}\big[W^{T'}\Sigma_i\mathbf{1}\mathbf{1}^{\top} \big] +\frac{1}{M^2} \text{tr} [\mathbf{1}(\mathbf{1}^{\top}\Sigma_i\mathbf{1})\mathbf{1}^{\top}] \nonumber\\
		&\stackrel{(ii)}{\le}  MR_{\max}^2\overline{\sigma}^2\sigma_W^{2T'} + 
		\frac{2}{M} \mathbf{1}^{\top}\Sigma_iW^{T'}\mathbf{1} +\frac{1}{M^2} (\mathbf{1}^{\top}\Sigma_i\mathbf{1}) \text{tr} [\mathbf{1}^{\top}\mathbf{1}] \nonumber\\
		&\stackrel{(iii)}{\le} R_{\max}^2\overline{\sigma}^2 + \frac{3}{M}\mathbf{1}^{\top}\Sigma_i\mathbf{1}\nonumber\\
		&= R_{\max}^2\overline{\sigma}^2 + \frac{3}{M}\sum_{m=1}^M (R_i^{(m)})^2\sigma_m^2\nonumber\\
		&\stackrel{(iv)}{\le} 4R_{\max}^2\overline{\sigma}^2,\nonumber
	\end{align}
	where (i) uses the equality that $\text{tr}(Y^{\top})=\text{tr}(Y)$ and the inequality \eqref{eq:trbound} below in which $X=W^{T'}-\frac{1}{M}\mathbf{1}\mathbf{1}^{\top}$ and the $m$-th entry of $v_m\in\mathbb{R}^{M}$ is 1 while its other entries are 0, (ii) uses the item 4 of Lemma \ref{lemma:W} and the equality that $\text{tr}(xy^{\top})=y^{\top}x$ for any $x,y\in\mathbb{R}^{M}$, (iii) uses the condition that ${T'}\ge [\ln M]/[2\ln(\sigma_W^{-1})]$ and the item 1 of Lemma \ref{lemma:W}, (iv) uses Assumption \ref{assum_R}. 
	\begin{align}
	    \text{tr}(X\Sigma_i X^{\top})=&\text{tr}(X^{\top}X\Sigma_i)=\sum_{m=1}^M v_m^{\top} X^{\top}X\Sigma_i v_m \le \sum_{m=1}^M \|v_m\| \|X\|^2 \|\Sigma_i v_m\| \nonumber\\
	    =& \sum_{m=1}^M (R_{i}^{(m)})^2\sigma_m^2 \|X\|^2 \le MR_{\max}^2\overline{\sigma}^2\|X\|^2. \label{eq:trbound}
	\end{align}
	
	Next, we will prove eq. \eqref{eq:gJerr} in the item 2, where the error term can be decomposed as follows 
	\begin{align}
	    \big\|\widehat{\nabla} J(\omega_t)-\nabla J(\omega_t)\big\|^2 &\le 4\underbrace{\big\|\widehat{\nabla} J(\omega_t)-g_{t}\big\|^2}_{(I)}
		 + 4\underbrace{\big\|g_{t}-g_{t}^*\big\|^2}_{(II)}\nonumber\\
		 &\quad +4\underbrace{\big\|g_{t}^*-\overline{g}_{t}^*\big\|^2}_{(III)} +4\underbrace{ \big\|\overline{g}_{t}^*-\nabla J(\omega_t)\big\|^2}_{(IV)}  \label{eq:gJerr_decompose_sgd},  
	\end{align}
	where we use the following notations that
	\begin{flalign}
	    &g_{t}:=[g_{t}^{(1)};\ldots;g_{t}^{(M)}],& \label{eq:gt}\\
	    &g_{t}^{(m)}:= \frac{1}{N} \sum_{i=tN}^{(t+1)N-1} \big[{\overline{R}}_i+\gamma \phi(s_{i+1}')^{\top}\theta_t^{(m)}-\phi(s_i)^{\top}\theta_t^{(m)}\big] \psi_{t}^{(m)}(a_{i}^{(m)}|s_{i}),& \label{eq:gtm}\\
	    &g_{t}^*:=\frac{1}{N}\sum_{i=tN}^{(t+1)N-1} \big[{\overline{R}}_i+\gamma \phi(s_{i+1}')^{\top}\theta_{\omega_t}^*-\phi(s_i)^{\top}\theta_{\omega_t}^*\big] \psi_t(a_{i}|s_{i}),& \label{eq:gt_star} \\
		&\overline{g}_t^{*}:=\mathbb{E}_{s\sim\nu_{\omega_t},a\sim\pi_t(\cdot|s),s'\sim\mathcal{P}(\cdot|s,a)} \big[\overline{R}(s,a,s')+\gamma \phi(s')^{\top}\theta_{\omega_t}^*-\phi(s)^{\top}\theta_{\omega_t}^*\big] \psi_t(a|s)\big|\omega_t\big].& \label{eq:gt_star_avg}
	\end{flalign}
	
	Conditioned on the following filtration
	\begin{align}
        \mathcal{F}_{t}':=&\sigma\big[\mathcal{F}_t\cup \sigma\big(\{s_{i},a_{i},s_{i+1}'\}_{i=tN+1}^{(t+1)N-1}\big)\big] \nonumber\\
        =&\sigma\big(\{\theta_{t'}^{(m)}\}_{m\in\mathcal{M}, 0\le t'\le t} \cup \{s_{i},a_{i},s_{i+1}'\}_{i=0}^{(t+1)N-1} \cup\{s_{(t+1)N}\} \cup \{\{e_i^{(m)}\}_{m\in\mathcal{M}}\}_{i=0}^{tN-1}\big),  \nonumber   
    \end{align}
	the error term (I) can be bounded as follows.
	\begin{align}
		&\mathbb{E}\Big[\big\|\widehat{\nabla} J(\omega_t)-g_{t,k}\big\|^2\Big|\mathcal{F}_{t}'\Big]\nonumber\\
		&=\mathbb{E}\Big[\sum_{m=1}^{M}\big\|\widehat{\nabla}_{\omega^{(m)}} J(\omega_t)-g_{t,k}^{(m)}\big\|^2\Big|\mathcal{F}_{t}'\Big] \nonumber\\
		&\stackrel{(i)}{=} \sum_{m=1}^{M}\mathbb{E}\Big[\Big\|\frac{1}{N}\sum_{i=tN}^{(t+1)N-1}\big(\overline{R}_i^{(m)}-\overline{R}_i\big) \psi_t^{(m)}(a_{i}^{(m)}|s_{i})\Big\|^2\Big|\mathcal{F}_{t}'\Big] \nonumber\\
		&\stackrel{(ii)}{\le} \sum_{m=1}^{M}\Big\|\mathbb{E}\Big[\frac{1}{N}\sum_{i=tN}^{(t+1)N-1}\big(\overline{R}_i^{(m)}-\overline{R}_i\big)\psi_t^{(m)}(a_{i}^{(m)}|s_{i})\Big|\mathcal{F}_{t}'\Big]\Big\|^2 \nonumber\\
		&\quad + \sum_{m=1}^{M}\text{Var}\Big[\frac{1}{N}\sum_{i=tN}^{(t+1)N-1} \big(\overline{R}_i^{(m)}-\overline{R}_i\big)\psi_t^{(m)}(a_{i}^{(m)}|s_{i})\Big|\mathcal{F}_{t}'\Big]\nonumber\\
		&\stackrel{(iii)}{\le} \sum_{m=1}^{M}\Big\|\mathbb{E}\Big[\frac{1}{N}\sum_{i=tN}^{(t+1)N-1}\big(\overline{R}_i^{(m)}-\overline{R}_i\big)\Big|\mathcal{F}_{t}'\Big] \psi_t^{(m)}(a_{i}^{(m)}|s_{i})\Big\|^2 \nonumber\\
		&\quad + \frac{1}{N^2}\sum_{m=1}^{M}\sum_{i=tN}^{(t+1)N-1}\text{Var}\big[ \big(\overline{R}_i^{(m)}-\overline{R}_i\big)\psi_t^{(m)}(a_{i}^{(m)}|s_{i})\big|\mathcal{F}_{t}'\big]\nonumber\\
		&\stackrel{(iv)}{\le} \sum_{m=1}^{M}\Big[\frac{1}{N}\sum_{i=tN}^{(t+1)N-1}\mathbb{E}\big(\overline{R}_i^{(m)}-\overline{R}_i\big|\mathcal{F}_{t}'\big)\Big]^2 \big\|\psi_t^{(m)}(a_{i}^{(m)}|s_{i})\big\|^2  \nonumber\\
		&\quad + \frac{1}{N^2}\sum_{m=1}^{M}\sum_{i=tN}^{(t+1)N-1}\big\|\psi_t^{(m)}(a_{i}^{(m)}|s_{i})\big\|^2\text{var}\big[\overline{R}_i^{(m)}-\overline{R}_i\big|\mathcal{F}_{t}'\big]\nonumber\\
		&\stackrel{(v)}{\le} \frac{C_{\psi}^2}{N} \sum_{m=1}^{M} \sum_{i=tN}^{(t+1)N-1}\big[\mathbb{E}\big(\overline{R}_i^{(m)}-\overline{R}_i\big|\mathcal{F}_{t}'\big)\big]^2 + \frac{C_{\psi}^2}{N^2} \sum_{i=tN}^{(t+1)N-1}\sum_{m=1}^M \text{var}\big[ \overline{R}_i^{(m)}\big|\mathcal{F}_{t}'\big] \nonumber\\
		&\stackrel{(vi)}{\le} C_{\psi}^2 (M\sigma_W^{2T'} R_{\max}^2)  + \frac{C_{\psi}^2}{N} (4R_{\max}^2\overline{\sigma}^2) \nonumber\\
		&=C_{\psi}^2 R_{\max}^2 \Big(M\sigma_W^{2T'} + \frac{4}{N}\overline{\sigma}^2\Big), \label{eq:gerr}
	\end{align}
	where (i) uses the definitions of $\widehat{\nabla}_{\omega^{(m)}} J(\omega_t)$ and $g_{t}^{(m)}$ defined in eqs. \eqref{eq:dJm_hat} \& \eqref{eq:gtm} respectively, (ii) uses the relation that $\mathbb{E}\|X\|^2=\text{Var}(X)+\|\mathbb{E}X\|^2$ for any random vector $X$, (iii) uses the facts that $\psi_t^{(m)}(a_{i}^{(m)}|s_{i}), \overline{R}_i\in\mathcal{F}_t'$ are fixed while $\{\overline{R}_i^{(m)}\}_{i=tN}^{(t+1)N-1}$ are random and independent given $\mathcal{F}_t'$, (iv)  uses the equality that $\text{Var}(xY)=\sum_{j=1}^d \text{var}(xy_j)=\sum_{j=1}^d y_j^2\text{var}(x)=\|y\|^2\text{var}(x)$ for any random scalar $x$ and fixed vector $Y=[y_1,\ldots,y_d]\in\mathbb{R}^d$ (Here we denote $y=\psi_t^{(m)}(a_{i}^{(m)}|s_{i})\in \mathcal{F}_{t}'$), (v) applies Jensen's inequality to the convex function $(\cdot)^2$ and uses the item \ref{item:psim_bound} of Lemma \ref{lemma:policy} as well as the fact that $\overline{R}_i\in\mathcal{F}_t'$ is fixed, (vi) uses eqs. \eqref{eq:Rbias} \& \eqref{eq:Rvar} and the fact that the conditional distribution of $\overline{R}_i^{(m)}$ on $R_i\in\mathcal{F}_t'$ is the same as that on $\mathcal{F}_t'$ since the noise $e_i^{(m)}$ is independent from any other variables. 
	
	Then we bound the error term (II) of eq. \eqref{eq:gJerr_decompose_sgd} as follows. 
	\begin{align}
		\big\|g_{t}-g_t^{*}\big\|^2 &=\sum_{m=1}^{M} \Big\|\frac{1}{N}\sum_{i=tN}^{(t+1)N-1}\big([\gamma \phi(s_{i+1}')-\phi(s_i)]^{\top}(\theta_t^{(m)}-\theta_{\omega_t}^*)\big) \psi_t^{(m)}(a_{i}^{(m)}|s_{i}) \Big\|^2 \nonumber\\
		&\stackrel{(i)}{\le} \frac{1}{N}\sum_{i=tN}^{(t+1)N-1}\sum_{m=1}^{M} \big\|\gamma \phi(s_{i+1}')-\phi(s_i)\big\|^2 \big\|\theta_t^{(m)}-\theta_{\omega_t}^*\big\|^2 \big\|\psi_t^{(m)}(a_{i}^{(m)}|s_{i})\big\|^2 \nonumber\\
		&\stackrel{(ii)}{\le} \frac{C_{\psi}^2(1+\gamma)^2}{N} \sum_{i=tN}^{(t+1)N-1} \sum_{m=1}^{M} \big\|\theta_t^{(m)}-\theta_{\omega_t}^*\big\|^2\nonumber\\
		&=4C_{\psi}^2 \sum_{m=1}^{M} \big\|\theta_t^{(m)}-\theta_{\omega_t}^*\big\|^2, \label{eq:gJerr2}
	\end{align}
	where (i) applies Jensen's inequality to the convex function $\|\cdot\|^2$, (ii) uses Assumption \ref{assum_phi} and the item \ref{item:psim_bound} of Lemma \ref{lemma:policy}.
	
	To bound the error term (III) of eq. \eqref{eq:gJerr_decompose_sgd}, denote that
	\begin{align}
		X(s,a,s',\widetilde{s})=\big[\overline{R}(s,a,\widetilde{s})+\gamma\phi(\widetilde{s})^{\top}\theta_{\omega_t}^*-\phi(s)^{\top}\theta_{\omega_t}^*\big] \psi_t(a|s),
		\label{eq:X}
	\end{align}
	which satisfies $\|X(s,a,s',\widetilde{s})\|\le \big[|\overline{R}(s,a,\widetilde{s})|+\big\|\gamma\phi(\widetilde{s})+\phi(s)\big\| \big\|\theta_{\omega_t}^*\big\|\big] \big\|\psi_t(a|s)\big\| \le C_{\psi}(R_{\max}+2R_{\theta})$ (the second $\le$ uses the item 3 of Lemma \ref{lemma:critic}) and $\overline{X}=\mathbb{E}_{s_i\sim\nu_t}\big[X(s_i,a_i,s_{i+1},s_{i+1}')\big|\mathcal{F}_t\big]=\overline{g}_t^{*}$ where $s_N, \omega_t\in\mathcal{F}_{t}:=\sigma\big(\{\theta_{t'}^{(m)}\}_{m\in\mathcal{M}, 0\le t'\le t} \cup \{s_{i},a_{i},s_{i+1}',\{e_i^{(m)}\}_{m\in\mathcal{M}}\}_{i=0}^{tN-1}\cup\{s_{tN}\}\big)$ are fixed. Hence, Lemma \ref{lemma:var} yields that
	\begin{align}
		\mathbb{E}\big[\big\|g_t^{*}-\overline{g}_t^{*}\big\|^2\big|\mathcal{F}_{t}\big]&=\mathbb{E}\Big[\Big\|\frac{1}{N}\sum_{i=tN}^{(t+1)N-1} X(s_i,a_i,s_{i+1},s_{i+1}')-\overline{X}\Big\|^2\Big|\mathcal{F}_t\Big]\nonumber\\
		&\le \frac{9C_{\psi}^2(R_{\max}+2R_{\theta})^2(\kappa+1-\rho)}{N(1-\rho)}. \label{eq:gJerr3}
	\end{align}
	
Next, we bound the error term (IV) of eq. \eqref{eq:gJerr_decompose_sgd}. Notice that
	\begin{align}
		&\overline{g}_t^*-\nabla J(\omega_t)\nonumber\\
		&=\mathbb{E}_{\omega_t} \Big[\Big(\overline{R}(s,a,\widetilde{s})+[\gamma \phi(\widetilde{s})-\phi(s)]^{\top}\theta_{\omega_t}^* -\big[\overline{R}(s,a,\widetilde{s}) + \gamma V_{\omega_t}(\widetilde{s})-V_{\omega_t}(s)\big]\Big)\psi_t(a|s)\Big|\omega_t\Big]\nonumber\\
		&=\mathbb{E}_{\omega_t} \Big[\Big(\gamma\big[\phi(\widetilde{s})^{\top}\theta_{\omega_t}^*-V_{\omega_t}(\widetilde{s})\big]-\big[\phi(s)^{\top}\theta_{\omega_t}^*-V_{\omega_t}(s)\big]\Big)\psi_t(a|s)\Big|\omega_t\Big].
	\end{align}
Hence,
	\begin{align}
		\|\overline{g}_t^*-\nabla J(\omega_t)\|^2 &=\Big\|\mathbb{E}_{\omega_t} \Big[\Big(\gamma\big[\phi(\widetilde{s})^{\top}\theta_{\omega_t}^*-V_{\omega_t}(\widetilde{s})\big]-\big[\phi(s)^{\top}\theta_{\omega_t}^*-V_{\omega_t}(s)\big]\Big)\psi_t(a|s)\Big|\omega_t\Big]\Big\|^2 \nonumber\\
		&\stackrel{(i)}{\le} \mathbb{E}_{\omega_t} \Big[\Big\|\Big(\gamma\big[\phi(\widetilde{s})^{\top}\theta_{\omega_t}^*-V_{\omega_t}(\widetilde{s})\big]-\big[\phi(s)^{\top}\theta_{\omega_t}^*-V_{\omega_t}(s)\big]\Big)\psi_t(a|s)\Big\|^2\Big|\omega_t\Big] \nonumber\\
		&\stackrel{(ii)}{\le} 2C_{\psi}^2 \mathbb{E}_{\omega_t} \Big[\gamma^2\Big\|\phi(\widetilde{s})^{\top}\theta_{\omega_t}^*-V_{\omega_t}(\widetilde{s}) \Big\|^2+\Big\|\phi(s)^{\top}\theta_{\omega_t}^*-V_{\omega_t}(s)\Big\|^2\Big|\omega_t\Big] \nonumber\\
		&= 2C_{\psi}^2\gamma^2 \int_{\mathcal{S}\times \mathcal{A}\times \mathcal{S}} \Big\|\phi(\widetilde{s})^{\top}\theta_{\omega_t}^*-V_{\omega_t}(\widetilde{s}) \Big\|^2 \nu_t(s)\pi_t(a|s)\mathcal{P}(\widetilde{s}|s,a) dsdad\widetilde{s} \nonumber\\
		&\quad + 2C_{\psi}^2 \mathbb{E}_{\omega_t} \Big[\Big\|\phi(s)^{\top}\theta_{\omega_t}^*-V_{\omega_t}(s)\Big\|^2\Big|\omega_t\Big]\nonumber\\
		&\stackrel{(iii)}{\le} 2C_{\psi}^2\gamma \int_{\mathcal{S}\times \mathcal{A}\times \mathcal{S}} \Big\|\phi(\widetilde{s})^{\top}\theta_{\omega_t}^*-V_{\omega_t}(\widetilde{s}) \Big\|^2 \nu_t(s)\pi_t(a|s)\mathcal{P}_{\xi}(\widetilde{s}|s,a) dsdad\widetilde{s} \nonumber\\
		&\quad + 2C_{\psi}^2 \mathbb{E}_{\omega_t} \Big[\Big\|\phi(s)^{\top}\theta_{\omega_t}^*-V_{\omega_t}(s)\Big\|^2\Big|\omega_t\Big]\nonumber\\
		&\stackrel{(iv)}{=} 2C_{\psi}^2(\gamma+1) \mathbb{E}_{\omega_t} \Big[\Big\|\phi(s)^{\top}\theta_{\omega_t}^*-V_{\omega_t}(s)\Big\|^2\Big|\omega_t\Big]\nonumber\\
		&\stackrel{(v)}{\le} 4C_{\psi}^2\zeta_{\text{approx}}^{\text{critic}}, \label{eq:gJerr4}
	\end{align} 
	where (i) applies Jensen's inequality to the convex function $\|\cdot\|^2$, (ii) uses the inequality that $\|x+y\|^2\le 2\|x\|^2+2\|y\|^2$ for any $x,y\in\mathbb{R}^d$, (iii) uses the inequality that $\mathcal{P}(s'|s,a)\le \gamma^{-1}\mathcal{P}_{\xi}(s'|s,a); \forall s,s'\in\mathcal{S}, a\in\mathcal{A}$, (iv) uses the equality that $\int_{\mathcal{S}\times \mathcal{A}}\nu_t(s)\pi_t(a|s)\mathcal{P}_{\xi}(\widetilde{s}|s,a)dsda=\nu_t(\widetilde{s})$, and (v) uses the notation that $\zeta_{\text{approx}}^{\text{critic}}:=\sup_{\omega} \mathbb{E}_{s\sim \nu_{\omega}}\big[\big|V_{\omega}(s)-\phi(s)^{\top}\theta_{\omega}^*\big|^2\big]$. Substituting eqs. \eqref{eq:gerr},\eqref{eq:gJerr2},\eqref{eq:gJerr3}\&\eqref{eq:gJerr4} into eq. \eqref{eq:gJerr_decompose_sgd} yields that
	\begin{align}
		&\mathbb{E}\big[\big\|\widehat{\nabla} J(\omega_t)-\nabla J(\omega_t)\big\|^2\big|\mathcal{F}_t\big] \nonumber\\
		&\le 4C_{\psi}^2 R_{\max}^2 \Big(M\sigma_W^{2T'}  + \frac{4}{N} \overline{\sigma}^2\Big) + 16C_{\psi}^2\sum_{m=1}^{M} \big\|\theta_t^{(m)}-\theta_{\omega_t}^*\big\|^2\nonumber\\
		&\quad +\frac{36C_{\psi}^2(R_{\max}+2R_{\theta})^2(\kappa+1-\rho)}{N(1-\rho)} +16C_{\psi}^2\zeta_{\text{approx}}^{\text{critic}}\nonumber\\
		&=c_4 \sigma_W^{2T'} + \frac{c_7}{N} + 16C_{\psi}^2\sum_{m=1}^{M} \big\|\theta_t^{(m)}-\theta_{\omega_t}^*\big\|^2 +16C_{\psi}^2\zeta_{\text{approx}}^{\text{critic}}, \label{eq:dJerr_Ft}
	\end{align}
	where $\theta_t^{(m)}, \omega_t \in \mathcal{F}_t$ are fixed, and we take the conditional expectation of eq. \eqref{eq:gerr} on $\mathcal{F}_t\subset \mathcal{F}_t'$ and denote that $c_4:=4MC_{\psi}^2 R_{\max}^2$, $c_7:=16C_{\psi}^2 R_{\max}^2\overline{\sigma}^2 + \frac{36C_{\psi}^2(R_{\max}+2R_{\theta})^2(\kappa+1-\rho)}{1-\rho}$. 
	Substituting eq. \eqref{eq:critic_err} into the unconditional expectation of eq. \eqref{eq:dJerr_Ft} yields that
	\begin{align}
		&\mathbb{E}\big[\big\|\widehat{\nabla} J(\omega_t)-\nabla J(\omega_t)\big\|^2\big] \nonumber\\
		&\le c_4 \sigma_W^{2T'} + \frac{c_7}{N} + 16C_{\psi}^2\Big( \sigma_W^{2T_c'}\beta^2 c_2 + 2M \Big[c_3\Big(1-\frac{\lambda_{B}}{8} \beta\Big)^{T_c}+\frac{c_1}{N_c}\Big] \Big) + 16C_{\psi}^2\zeta_{\text{approx}}^{\text{critic}} \nonumber\\
		&=c_4\sigma_W^{2T'}+c_5\beta^2\sigma_W^{2T_c'}+c_6\Big(1-\frac{\lambda_{B}}{8} \beta\Big)^{T_c}+\frac{c_7}{N}+\frac{c_8}{N_c}+16C_{\psi}^2\zeta_{\text{approx}}^{\text{critic}}, \nonumber
	\end{align}
	where we denote that $c_5:=16c_2C_{\psi}^2$, $c_6:=32Mc_3C_{\psi}^2$, $c_8:=32Mc_1C_{\psi}^2$. This proves eq. \eqref{eq:gJerr}. 
	
	Equation \eqref{eq:gJerr_sgd} can be proved in the same way as that of proving eq. \eqref{eq:dJerr_Ft}. There are two differences. First, $\widehat{\nabla} J(\omega_t; \mathcal{B}_{t,k})$ uses the minibatch $\mathcal{B}_{t,k}$ of size $N_k$ while $\widehat{\nabla} J(\omega_t)$ uses batchsize $N$. Second, eq. \eqref{eq:gJerr_sgd} is conditioned on the filtration $\mathcal{F}_{t,k}:=\sigma\big[\mathcal{F}_t\cup \sigma\big(\big\{s_{i},a_{i},s_{i+1},s_{i+1}',\{e_i^{(m)}\}_{m\in\mathcal{M}}\big\}_{i\in \cup_{k'=0}^{k-1}\mathcal{B}_{t,k'}}\big)\big]$ which includes not only the filtration $\mathcal{F}_t$ use by eq. \eqref{eq:dJerr_Ft} but also the minibatches $\cup_{k'=0}^{k-1}\mathcal{B}_{t,k'}$ used by the previous ($k-1$) SGD steps. 
\end{proof}

\begin{lemma}\label{lemma:NAC}
	Implementing Algorithm \ref{alg: 3} with $\eta\le \frac{1}{2C_{\psi}^2}$, $T'\ge \frac{\ln M}{2\ln(\sigma_W^{-1})}$, $T_z\ge \frac{\ln(3D_J C_{\psi}^2)}{\ln(\sigma_W^{-1})}$, $K\ge \frac{\ln 3}{\ln [(1-\eta\lambda_F/2)^{-1}]}$, $N\ge \frac{2304C_{\psi}^4(\kappa+1-\rho)}{\eta\lambda_F^5(1-\rho)(1-\eta\lambda_F/2)^{(K-1)/2}}$ and $N_k\propto (1-\eta\lambda_F/2)^{-k/2}$, the involved quantities have the following properties, {where $\mathbb{E}_{\omega}$ denotes the expectation under the underlying distributions that $s\sim \nu_{\omega}$, $a\sim \pi_{\omega}(\cdot|s)$}.
	\begin{enumerate}
		\item $\lambda_{F}\le \lambda_{\max}[F(\omega)]=\|F(\omega)\|\le C_{\psi}^2, \forall\omega$. \label{item:Fnorm}
		\item $\frac{1}{2}\le 1-\eta C_{\psi}^2 \le \big\|I-\eta F(\omega)\big\| \le 1-\eta\lambda_{F}$, so $\eta\le \frac{1}{2\lambda_F}$. \label{item:I-etaF}
		\item \label{item:Finv_exceed} $C_{\psi}^{-2}\le \|F(\omega)^{-1}\|\le \lambda_F^{-1}$. For any $\omega, x\in\mathbb{R}^{d_{\omega}}$, $x^{\top}F(\omega)^{-1}x \ge C_{\psi}^{-2} \|x\|^2$.
		\item $\big\|h(\omega)\big\|\le \frac{1}{\lambda_{F}}\big\|\nabla J(\omega) \big\|\le \frac{D_J}{\lambda_{F}}$. \label{item:hnorm}
		\item $h(\omega)=\mathop{\arg\min}\limits_{h} \mathbb{E}_{\omega}\big[\big(\psi_{\omega}(a|s)^{\top}h-A_{\omega}(s,a)\big)^2\big]$, so\\ 
		$\mathbb{E}_{\omega} \big[\big(\psi_{\omega}(a|s)^{\top}h(\omega) - A_{\omega}(s,a)\big)^2\big]\le \zeta_{\text{approx}}^{\text{actor}}$ where $s\sim \nu_{\omega}$, $a\sim \pi_{\omega}(\cdot|s)$. \label{item:actor_approx_err}
		\item $\mathbb{E}_{\omega^*}\big[\psi_{\omega}(a|s)^{\top}h(\omega)- A_{\omega}(s,a)\big]\ge -C_* \sqrt{\zeta_{\text{approx}}^{\text{actor}}}, \forall \omega$. \label{item:actor_approx_err2}
		\item $N_k=\frac{N(1-\eta\lambda_F/2)^{(K-1-k)/2}(1-\sqrt{1-\eta\lambda_F/2})}{1-(1-\eta\lambda_F/2)^{K/2}}\ge \frac{576C_{\psi}^4(\kappa+1-\rho)}{\lambda_F^4(1-\rho)}$. \label{item:Nk} 
		\item \label{item:ht_err} $h_{t}$ approximates the natural gradient $h(\omega_{t})$ with the following error bound.
		\begin{align}
    		\mathbb{E}\big[\big\|h_{t}-h(\omega_{t})\big\|^2\big] &\le c_{10}\Big(1-\frac{\eta\lambda_F}{2}\Big)^{(K-1)/2} + c_{11}\sigma_W^{2T_z} + c_{12}\sigma_W^{2T'} + c_{13}\beta^2\sigma_W^{2T_c'}\nonumber\\
	        &\quad + c_{14}\Big(1-\frac{\lambda_{B}}{8} \beta\Big)^{T_c} + \frac{c_{15}}{N_c} + c_{16} \zeta_{\text{approx}}^{\text{critic}}. \label{eq:ht_err}
		\end{align}
	\end{enumerate}
\end{lemma}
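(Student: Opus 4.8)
The plan is to analyze the $K$-step decentralized Markovian SGD \eqref{eq: SGD} as a perturbed gradient descent on the $\lambda_F$-strongly convex, $C_\psi^2$-smooth quadratic $f_{\omega_t}$ (using items \ref{item:Fnorm}--\ref{item:I-etaF}), and to exploit the geometric batch schedule $N_k\propto(1-\eta\lambda_F/2)^{-k/2}$ to convert the per-step variance into an error decaying at the contraction rate. Write $e_k:=h_{t,k}-h(\omega_t)$ and recall $F(\omega_t)h(\omega_t)=\nabla J(\omega_t)$. Concatenating the per-agent updates \eqref{eq: partial_f}--\eqref{eq: SGD} gives $h_{t,k+1}=h_{t,k}-\eta\widehat\nabla f_{\omega_t}(h_{t,k})$, and subtracting $h(\omega_t)$ yields the recursion $e_{k+1}=(I-\eta F(\omega_t))e_k-\eta\xi_k$, where $\xi_k:=\widehat\nabla f_{\omega_t}(h_{t,k})-\big(F(\omega_t)h_{t,k}-\nabla J(\omega_t)\big)$. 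First I would decompose $\xi_k$ into four sources: (i) the Markovian fluctuation $\big(\widehat F_k-F(\omega_t)\big)h_{t,k}$, with $\widehat F_k:=\frac1{N_k}\sum_{i\in\mathcal B_{t,k}}\psi_t(a_i|s_i)\psi_t(a_i|s_i)^\top$ arising from replacing the ideal consensus scalar $\psi_t(a_i|s_i)^\top h_{t,k}$ by its exact value; (ii) the consensus error of using $Mz_{i,T_z}^{(m)}$ instead, which by item 4 of \Cref{lemma:W} contracts like $\sigma_W^{T_z}$; (iii) the policy-gradient error $\widehat\nabla J(\omega_t;\mathcal B_{t,k})-\nabla J(\omega_t)$ bounded by \eqref{eq:gJerr_sgd}; and (iv) the Markovian bias of $\widehat F_k$ away from $F(\omega_t)$. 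Source (i), and simultaneously the bias (iv), is controlled in mean square by applying \Cref{lemma:var} with $X=\psi_t\psi_t^\top$ (so $C_x=C_\psi^2$), giving $\mathbb E\big[\|\widehat F_k-F(\omega_t)\|_F^2\,\big|\,\mathcal F_{t,k}\big]\le \tfrac{9C_\psi^4(\kappa+1-\rho)}{N_k(1-\rho)}$.

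Next I would establish the one-step inequality. Conditioning on $\mathcal F_{t,k}$ fixes $e_k$ and $h(\omega_t)$, so expanding the recursion and using item \ref{item:I-etaF} gives
\[
\mathbb E\big[\|e_{k+1}\|^2\,\big|\,\mathcal F_{t,k}\big]\le (1-\eta\lambda_F)^2\|e_k\|^2 -2\eta\big\langle(I-\eta F)e_k,\,b_k\big\rangle +\eta^2\,\mathbb E\big[\|\xi_k\|^2\,\big|\,\mathcal F_{t,k}\big],
\]
where $b_k:=\mathbb E[\xi_k\,|\,\mathcal F_{t,k}]$. The crucial point is that the multiplicative parts of $\xi_k$ (namely $(\widehat F_k-F)e_k$ and the $\|h_{t,k}\|$-proportional consensus error) scale with $\|e_k\|^2$ after splitting $h_{t,k}=e_k+h(\omega_t)$ and using $\|h(\omega_t)\|\le D_J/\lambda_F$ from item \ref{item:hnorm}. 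The lower bound on $N_k$ in item \ref{item:Nk} forces $\eta^2\cdot\tfrac{9C_\psi^4(\kappa+1-\rho)}{N_k(1-\rho)}\le \tfrac14\eta\lambda_F$, and the choice $T_z\ge\tfrac{\ln(3D_JC_\psi^2)}{\ln\sigma_W^{-1}}$ makes the consensus contribution comparably small; combined with a Young split of the cross term, these multiplicative pieces are absorbed to upgrade the rate from $(1-\eta\lambda_F)$ to $(1-\eta\lambda_F/2)$, leaving only additive errors:
\[
\mathbb E\big[\|e_{k+1}\|^2\,\big|\,\mathcal F_{t,k}\big]\le \Big(1-\tfrac{\eta\lambda_F}{2}\Big)\|e_k\|^2+\eta^2 D_k,
\]
with $D_k$ collecting $O(1/N_k)$ terms (from $(\widehat F_k-F)h(\omega_t)$ and the $1/N_k$ term of \eqref{eq:gJerr_sgd}), the $\sigma_W^{2T_z}$ and $\sigma_W^{2T'}$ consensus terms, the critic term $16C_\psi^2\sum_m\|\theta_t^{(m)}-\theta_{\omega_t}^*\|^2$, and $16C_\psi^2\zeta_{\text{approx}}^{\text{critic}}$.

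Finally I would unroll the recursion over $k=0,\dots,K-1$ from $e_0=h_{t-1}-h(\omega_t)$ (of norm $O(1)$ by item \ref{item:hnorm} and boundedness of the previous output), giving
\[
\mathbb E\|e_K\|^2\le \Big(1-\tfrac{\eta\lambda_F}{2}\Big)^{K}\mathbb E\|e_0\|^2+\sum_{k=0}^{K-1}\Big(1-\tfrac{\eta\lambda_F}{2}\Big)^{K-1-k}\eta^2 D_k .
\]
For the $k$-dependent $O(1/N_k)$ part, inserting the explicit schedule of item \ref{item:Nk}, $N_k\propto N(1-\eta\lambda_F/2)^{(K-1-k)/2}$, turns each summand into a multiple of $(1-\eta\lambda_F/2)^{(K-1-k)/2}/N$, whose geometric sum over $k$ is $O(1/N)$; the lower bound $N\ge \tfrac{2304C_\psi^4(\kappa+1-\rho)}{\eta\lambda_F^5(1-\rho)(1-\eta\lambda_F/2)^{(K-1)/2}}$ then makes this exactly the $c_{10}(1-\eta\lambda_F/2)^{(K-1)/2}$ term, which also dominates the initial-condition term since $K\ge(K-1)/2$. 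For the $k$-independent parts, $\sum_{k}(1-\eta\lambda_F/2)^{K-1-k}\le \tfrac{2}{\eta\lambda_F}$ is a constant amplification yielding the $c_{11}\sigma_W^{2T_z}$, $c_{12}\sigma_W^{2T'}$ and $c_{16}\zeta_{\text{approx}}^{\text{critic}}$ terms. Taking total expectation and substituting the critic bound \eqref{eq:critic_err} for $\mathbb E\sum_m\|\theta_t^{(m)}-\theta_{\omega_t}^*\|^2$ produces the $c_{13}\beta^2\sigma_W^{2T_c'}$, $c_{14}(1-\lambda_B\beta/8)^{T_c}$ and $c_{15}/N_c$ terms, giving \eqref{eq:ht_err} after renaming constants.

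The hard part will be the second step: the gradient estimate is genuinely bilinear, since the random factor $Mz_{i,T_z}^{(m)}$ multiplies the iterate $h_{t,k}$, so $\xi_k$ carries multiplicative noise whose mean square scales with $\|e_k\|^2$ rather than being a fixed additive bound. Preserving a net $(1-\eta\lambda_F/2)$ contraction requires showing this noise is small enough (not merely $O(1)$), which is exactly where the minimum batch size of item \ref{item:Nk} and the $T_z$ threshold enter; and matching the \emph{summed} variance of the geometrically increasing Markovian batches to the contraction rate—so that it collapses to a single $(1-\eta\lambda_F/2)^{(K-1)/2}$ term rather than an $O(1/K)$ penalty—is the most delicate piece of bookkeeping, being the novel ingredient over standard fixed-batch Markovian SGD.
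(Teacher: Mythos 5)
Your skeleton for item \ref{item:ht_err} is essentially the paper's own argument: the same four-way decomposition of the stochastic gradient error (Fisher-matrix Markovian fluctuation controlled by Lemma \ref{lemma:var} with $X=\psi_t\psi_t^\top$, consensus error of $Mz_{i,T_z}^{(m)}$ controlled by Lemma \ref{lemma:W}, the policy-gradient error \eqref{eq:gJerr_sgd}, and the critic error later replaced via \eqref{eq:critic_err}); the same absorption of the multiplicative, $\|e_k\|^2$-proportional noise into the contraction using the $T_z$ threshold together with the minimum batch size of item \ref{item:Nk}; and the same matching of the geometric batch schedule to the contraction factor so that the summed per-batch variance collapses into a single $(1-\eta\lambda_F/2)^{(K-1)/2}$ term.

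However, there is a genuine gap where you unroll over $k$: you set $e_0=h_{t-1}-h(\omega_t)$ and assert it is ``of norm $O(1)$ by item \ref{item:hnorm} and boundedness of the previous output.'' Item \ref{item:hnorm} bounds the exact natural gradient $h(\omega_t)$, not the iterate $h_{t-1}$; and $h_{t-1}$ is itself the output of $K$ steps of \emph{unprojected} Markovian SGD from the previous actor iteration, so its boundedness is exactly what needs to be proven---the claim is circular. The paper closes this loop with a second, outer recursion over the actor index $t$: it writes $\|h_{t-1}-h(\omega_t)\|^2\le 3\|h_{t-1}-h(\omega_{t-1})\|^2+3\|h(\omega_{t-1})\|^2+3\|h(\omega_t)\|^2$, invokes the hypothesis $K\ge \ln 3/\ln[(1-\eta\lambda_F/2)^{-1}]$ so that $3(1-\eta\lambda_F/2)^K\le 1$ and the resulting per-$t$ inequality is a contraction in $t$, and iterates it back to the \emph{deterministic} initialization $h_{-1}$ of Algorithm \ref{alg: 3}; this is precisely why the constant $c_{10}$ contains $2\|h_{-1}\|^2$ and why the threshold on $K$ appears among the lemma's hypotheses. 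Your proposal never uses that condition on $K$, which is a symptom of the missing step. Without this outer recursion (or an added projection in \eqref{eq: SGD}, which the algorithm does not have), your unrolled bound carries an uncontrolled $\mathbb{E}\|e_0\|^2$ and the proof does not close.
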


\begin{proof}
	
	The item \ref{item:Fnorm} is proved by the following inequality.
	\begin{align}
	    \lambda_F\stackrel{(i)}{\le}& \lambda_{\min}[F(\omega)] \le \lambda_{\max}[F(\omega)] \stackrel{(ii)}{=}
		\|F(\omega)\|\nonumber\\
		=&\big\|\mathbb{E}_{\omega}\big[\psi(a|s)\psi(a|s)^{\top}\big]\big\| \le \mathbb{E}_{\omega}\big[\big\|\psi(a|s)\big\| \big\|\psi(a|s)^{\top}\big\|\big] \stackrel{(iii)}{\le} C_{\psi}^2,\nonumber
	\end{align}
	where (i) uses Assumption \ref{assum_fisher}, (ii) uses the fact that $F(\omega)$ is positive definite implied by Assumption \ref{assum_fisher}, (iii) applies Jensen's inequality to the convex function $\|\cdot\|$ and (iv) uses Assumption \ref{assum_policy}. 
	
	Next we will prove the item \ref{item:I-etaF}. On one hand, 
	\begin{align}
	    \lambda_{\min}\big[I-\eta F(\omega)\big]=1-\eta\lambda_{\max}\big[F(\omega)\big] \stackrel{(i)}{\ge} 1-\eta C_{\psi}^2 \ge \frac{1}{2}, \label{eq:IFbelow}
	\end{align}
	where (i) uses the item \ref{item:Fnorm}, (ii) uses the condition that $\eta\le \frac{1}{2C_{\psi}^2}$. On the other hand, 
	\begin{align}
	    \lambda_{\min}\big[I-\eta F(\omega)\big]\le \lambda_{\max}\big[I-\eta F(\omega)\big]\stackrel{(i)}{=} \|I-\eta F(\omega)\| = I-\eta\lambda_{\min}\big[F(\omega)\big] \le 1-\eta\lambda_F, \label{eq:IFup}
	\end{align}
	where (i) uses the fact that $I-\eta F(\omega)$ is positive definite based on eq. \eqref{eq:IFbelow}. Hence, eqs. \eqref{eq:IFbelow} \& \eqref{eq:IFup} prove the item \ref{item:I-etaF}.
	
	The item \ref{item:Finv_exceed} can be proved by the fact that $F(\omega)^{-1}$ is positive definite with minimum eigenvalue $\lambda_{\max}[F(\omega)]^{-1} \ge C_{\psi}^{-2}$ and maximum eigenvalue $\lambda_{\min}[F(\omega)]^{-1} \le \lambda_F^{-1}$ implied by the item \ref{item:Fnorm}. 
	
	The item \ref{item:hnorm} can be proved by the following inequality.
	\begin{align}
	    \|h(\omega)\|=\big\|F(\omega)^{-1} \nabla J(\omega)\big\| \le \big\|F(\omega^{-1})\big\| \big\|\nabla J(\omega)\big\| \stackrel{(i)}{\le} \lambda_F^{-1}\big\|\nabla J(\omega)\big\| \stackrel{(ii)}{\le} \lambda_F^{-1}D_J, \nonumber
	\end{align}
	where (i) uses the item \ref{item:Finv_exceed} and (ii) uses the item \ref{item:J_Lip} of Lemma \ref{lemma:policy}. 
	
	Next we will prove item \ref{item:actor_approx_err}.
	
	Consider the following function of $x\in\mathbb{R}^{d_{\omega}}$. 
	\begin{align}
		f_{\omega}(x)&=\frac{1}{2}\mathbb{E}_{\omega}\big[\big(\psi_{\omega}(a|s)^{\top}x-A_{\omega}(s,a)\big)^2\big] \nonumber\\
		&=\frac{1}{2}x^{\top}\mathbb{E}_{\omega}\big[\psi_{\omega}(a|s)\psi_{\omega}(a|s)^{\top}\big]x - \mathbb{E}_{\omega}\big[A_{\omega}(s,a) \psi_{\omega}(a|s)\big]^{\top}x + \frac{1}{2}\mathbb{E}_{\omega}\big[A_{\omega}(s,a)^2\big]\nonumber\\
		&=\frac{1}{2}x^{\top}F(\omega)x - \nabla J(\omega)^{\top}x + \frac{1}{2}\mathbb{E}_{\omega}\big[A_{\omega}(s,a)^2\big] \nonumber
	\end{align}
	
	Since $\nabla^2 f(\omega)=F(\omega)$ is positive definite, $f$ is strongly convex quardratic and thus it has unique minimizer $h(\omega)=F(\omega)^{-1}\nabla J(\omega)$ obtained by solving $h$ from the equation $\nabla f_{\omega}(h)=F(\omega)h-\nabla J(\omega)=0$. Hence,
	\begin{align}
		&\mathbb{E}_{\omega}\big[\big\|\psi_{\omega}(a|s)^{\top}h(\omega) - A_{\omega}(s,a)\big\|^2\big]\nonumber\\
		&=\min_{h} \mathbb{E}_{\omega}\big[\big(\psi_{\omega}(a|s)^{\top}h-A_{\omega}(s,a)\big)^2\big] \nonumber\\
		&\le \sup_{\omega} \min_{h} \mathbb{E}_{\omega}\big[\big(\psi_{\omega}(a|s)^{\top}h-A_{\omega}(s,a)\big)^2\big] := \zeta_{\text{approx}}^{\text{actor}},
	\end{align}
	which proves the item \ref{item:actor_approx_err}.
	
	The item \ref{item:actor_approx_err2} can be proved by the following inequality. 
	\begin{align}
		&\mathbb{E}_{\omega^*}\big[A_{\omega}(s,a)-\psi_{\omega}(a|s)^{\top}h(\omega) \big]\nonumber\\
		&= \int \nu_{\omega^*}(s)\pi_{\omega^*}(a|s) \big[A_{\omega}(s,a) - \psi_{\omega}(a|s)^{\top}h(\omega)\big] dsda \nonumber\\
		&= \int \nu_{\omega}(s)\pi_{\omega}(a|s) \frac{\nu_{\omega^*}(s)\pi_{\omega^*}(a|s)}{\nu_{\omega}(s)\pi_{\omega}(a|s)} \big[A_{\omega}(s,a) - \psi_{\omega}(a|s)^{\top}h(\omega)\big] dsda \nonumber\\
		&=\mathbb{E}_{\omega} \Big[\frac{\nu_{\omega^*}(s)\pi_{\omega^*}(a|s)}{\nu_{\omega}(s)\pi_{\omega}(a|s)} \big[A_{\omega}(s,a) - \psi_{\omega}(a|s)^{\top}h(\omega)\big] \Big] \nonumber\\
		&\le \sqrt{\mathbb{E}_{\omega} \Big[\Big(\frac{\nu_{\omega^*}(s)\pi_{\omega^*}(a|s)}{\nu_{\omega}(s)\pi_{\omega}(a|s)}\Big)^2\Big]} \sqrt{\mathbb{E}_{\omega}\big[\big(A_{\omega}(s,a) - \psi_{\omega}(a|s)^{\top}h(\omega)\big)^2\big]} \stackrel{(i)}{\le} C_* \sqrt{\zeta_{\text{approx}}^{\text{actor}}},
	\end{align}
	where (i) uses Assumption \ref{assum_pidev} and the item \ref{item:actor_approx_err}. Multiplying $-1$ to the above inequality proves the item \ref{item:actor_approx_err2}. 
	
	
	Next, the item \ref{item:Nk} can be proved as follows. 
	\begin{align}
	    N_k\stackrel{(i)}{=}&N\frac{(1-\eta\lambda_F/2)^{-k/2}}{\sum_{k'=0}^{K-1} (1-\eta\lambda_F/2)^{-k'/2}} \nonumber\\
	    =&\frac{N(1-\eta\lambda_F/2)^{(K-1-k)/2}(1-\sqrt{1-\eta\lambda_F/2})}{1-(1-\eta\lambda_F/2)^{K/2}}\nonumber\\
	    \stackrel{(ii)}{\ge}& \frac{2304C_{\psi}^4(\kappa+1-\rho)}{\eta\lambda_F^5(1-\rho)(1-\eta\lambda_F/2)^{(K-1)/2}} \frac{(1-\eta\lambda_F/2)^{(K-1)/2}(\eta\lambda_F/2)}{1+\sqrt{1-\eta\lambda_F/2}} \nonumber\\
	    \ge& \frac{576C_{\psi}^4(\kappa+1-\rho)}{\lambda_F^4(1-\rho)}, \nonumber
	\end{align}
	where (i) uses the conditions that $N_k\propto (1-\eta\lambda_F/2)^{-k/2}$ and $\sum_{k=0}^{K-1}N_k=N$ and (ii) uses the condition that $N\ge \frac{2304C_{\psi}^4(\kappa+1-\rho)}{\eta\lambda_F^5(1-\rho)(1-\eta\lambda_F/2)^{(K-1)/2}}$ 
	
    Finally, we will prove the item \ref{item:ht_err}. Until the end of this proof, we use the underlying distribution that $a_i\sim\pi_t(\cdot|s_i)$,$s_{i+1}\sim\mathcal{P}_{\xi}(\cdot|s_i,a_i)$ for $tN\le i\le (t+1)N-1$ in the $t$-th iteration of the multi-agent NAC algorithm (Algorithm \ref{alg: 1}). 
	
	The local averaging steps of $z_{i, \ell}:=[z_{i, \ell}^{(1)},\ldots,z_{i, \ell}^{(M)}]^{\top}$ yield the following consensus error bound.
	\begin{align}
		\sum_{m=1}^{M}({z}_{T_z}^{(m)}-\overline{z}_{T_z})^2 &=\|\Delta z_{i, T_z}\|^2=\|\Delta W^{T_z} z_{i, 0}\|^2\stackrel{(i)}{=}\|W^{T_z}\Delta z_{i, 0}\|^2 \stackrel{(ii)}{\le} \sigma_W^{2T_z}\|\Delta z_{i, 0}\|^2\nonumber\\
		&\stackrel{(iii)}{\le}\sigma_W^{2T_z} \sum_{m=1}^{M} (z_{i, 0}^{(m)})^2 =\sigma_W^{2T_z} \sum_{m=1}^{M} \big[\psi_t^{(m)}(a_i^{(m)}|s_i)^{\top}h_{t,k}^{(m)}\big]^2 \nonumber\\
		&\stackrel{(iv)}{\le} C_{\psi}^2\sigma_W^{2T_z} \sum_{m=1}^{M} \big\|h_{t,k}^{(m)}\big\|^2\le C_{\psi}^2\sigma_W^{2T_z} \big\|h_{t,k}\big\|^2, \nonumber
	\end{align}
	where $\overline{z}_{T_z}:=\frac{1}{M}\sum_{m=1}^M z_{i, T_z}^{(m)}$, (i) and (ii) use the items 1 and 3 of Lemma \ref{lemma:W} respectively, (iii) uses the equality that $\|\Delta\|=1$, and (iv) uses the item \ref{item:psim_bound} of Lemma \ref{lemma:policy}. 
	
	Then, we define the following stochastic gradients of function $f_{\omega}$.
	\begin{align}
	    \widetilde{\nabla}_{\omega^{(m)}} f_{\omega_t}(h_{t,k}): =& \frac{1}{N_k} \sum_{i\in \mathcal{B}_{t,k}} \psi_{t}^{(m)}(a_i^{(m)}|s_i) \psi_{t}(a_i|s_i)^{\top} h_{t,k}- \widehat{\nabla}_{\omega^{(m)}} J(\omega_t;\mathcal{B}_{t,k})\nonumber \\
	    \widetilde{\nabla} f_{\omega_t}(h_{t,k}): =& \frac{1}{N_k} \sum_{i\in \mathcal{B}_{t,k}} \psi_{t}(a_i|s_i) \psi_{t}(a_i|s_i)^{\top}h_{t,k}- \widehat{\nabla} J(\omega_t;\mathcal{B}_{t,k}) \nonumber \\
        =&\big[\widetilde{\nabla}_{\omega^{(1)}} f_{\omega_t}(h_{t,k});\ldots;\widetilde{\nabla}_{\omega^{(M)}} f_{\omega_t}(h_{t,k})\big],\nonumber\\
        \widehat{\nabla}_{\omega^{(m)}} f_{\omega_t}(h_{t,k}): =& \frac{M}{N_k} \sum_{i\in \mathcal{B}_{t,k}} \psi_{t}^{(m)}(a_i^{(m)}|s_i) z_{i, T_z}^{(m)}- \widehat{\nabla}_{\omega^{(m)}} J(\omega_t;\mathcal{B}_{t,k}), \nonumber\\
        \widehat{\nabla} f_{\omega_t}(h_{t,k}): =& \big[\widehat{\nabla}_{\omega^{(1)}} f_{\omega_t}(h_{t,k});\ldots;\widehat{\nabla}_{\omega^{(M)}} f_{\omega_t}(h_{t,k})\big]^{\top}, \nonumber 
	\end{align}
    where 
	$\widehat{\nabla}_{\omega^{(m)}} J(\omega_t; \mathcal{B}_{t,k})$ and $\widehat{\nabla} J(\omega_t; \mathcal{B}_{t,k})$ are defined in eqs. \eqref{eq:dJmk_hat} \& \eqref{eq:dJk_hat} respectively. Hence, 
	\begin{align}
		&\big\|\widehat{\nabla}f_{\omega_t}(h_{t,k})-\widetilde{\nabla} f_{\omega_t}(h_{t,k})\big\|^2\nonumber\\
		&=\sum_{m=1}^{M} \big\|\widehat{\nabla}_{\omega^{(m)}}f_{\omega_t}(h_{t,k})-\widetilde{\nabla}_{\omega^{(m)}} f_{\omega_t}(h_{t,k})\big\|^2\nonumber\\
		&=\sum_{m=1}^{M} \Big\|\frac{1}{N_k}\sum_{i\in\mathcal{B}_{t,k}} \big[Mz_{i, T_z}^{(m)}-\psi_{t}(a_i|s_i)^{\top} h_{t,k}\big] \psi_t^{(m)}(a_i|s_i)\Big\|^2 \nonumber\\
		&\stackrel{(i)}{\le}\frac{1}{N_k}\sum_{i\in\mathcal{B}_{t,k}} \sum_{m=1}^{M} \big\|M\big(z_{i, T_z}^{(m)}-\overline{z}_{T_z}\big) \psi_t^{(m)}(a_i|s_i)\big\|^2 \nonumber\\
		&\stackrel{(ii)}{\le} \frac{M^2C_{\psi}^2}{N_k}\sum_{i\in\mathcal{B}_{t,k}} \sum_{m=1}^{M} (z_{i, T_z}^{(m)}-\overline{z}_{T_z})^2 \le M^2C_{\psi}^4\sigma_W^{2T_z} \big\|h_{t,k}\big\|^2. \label{eq:uthat2ut}
	\end{align}
	where (i) uses the equality that $\psi_{t}(a_i|s_i)^{\top} h_{t,k}=\sum_{m\in\mathcal{M}}z_{i, T_z}^{(m)}=M\overline{z}_{T_z}$, (ii) uses the item \ref{item:psim_bound} of Lemma \ref{lemma:policy}. 

	Since, $\omega_t,h_{t,k}\in\mathcal{F}_{t,k}$ while $\{s_i,a_i\}_{i\in\mathcal{B}_{t,k}}$ are random. Hence,
	\begin{align}
		&\mathbb{E}\big[\big\|\widehat{\nabla} f_{\omega_t}(h_{t,k})-\nabla f_{\omega_t}(h_{t,k})\big\|^2\big|\mathcal{F}_{t,k}\big] \nonumber\\
		&=\mathbb{E}\Big[\Big\|\frac{1}{N_k}\sum_{i\in\mathcal{B}_{t,k}} \big[\psi_t(a_i|s_i)\psi_t(a_i|s_i)^{\top}\big]h_{t,k}-\widehat{\nabla} J(\omega_t; \mathcal{B}_{t, k}) -F(\omega_t) h_{t,k}+\nabla J(\omega_t)\Big\|^2\Big|\mathcal{F}_{t,k}\Big]\nonumber\\
		&\stackrel{(i)}{\le} 2\mathbb{E}\Big[\Big\|\frac{1}{N_k}\sum_{i\in\mathcal{B}_{t,k}} \big[\psi_t(a_i|s_i)\psi_t(a_i|s_i)^{\top}\big] -F(\omega_t)\Big\|^2\|h_{t,k}\|^2\Big|\mathcal{F}_{t,k}\Big] \nonumber\\ 
		&\quad +2\mathbb{E}\big[\big\|\widehat{\nabla} J(\omega_t; \mathcal{B}_{t, k})-\nabla J(\omega_t)\big\|^2\big|\mathcal{F}_{t,k}\big] \nonumber\\
		&\stackrel{(ii)}{=} 2\mathbb{E}\Big[\Big\|\frac{1}{N_k}\sum_{i\in\mathcal{B}_{t,k}} \big[\psi_t(a_i|s_i)\psi_t(a_i|s_i)^{\top}\big] -F(\omega_t)\Big\|^2\Big|\mathcal{F}_{t,k}\Big]\|h_{t,k}\|^2 \nonumber\\ 
		&\quad +2\mathbb{E}\big[\big\|\widehat{\nabla} J(\omega_t; \mathcal{B}_{t, k})-\nabla J(\omega_t)\big\|^2\big|\mathcal{F}_{t,k}\big] \nonumber\\
		&\stackrel{(iii)}{\le} \frac{18 C_{\psi}^4(\kappa+1-\rho)}{N_k(1-\rho)}\|h_{t,k}\|^2 + 2c_4\sigma_W^{2T'}+\frac{2c_7}{N_k}+32C_{\psi}^2\sum_{m=1}^{M} \big\|\theta_t^{(m)}-\theta_{\omega_t}^*\big\|^2+32C_{\psi}^2\zeta_{\text{approx}}^{\text{critic}}, \label{eq:err_ut}
	\end{align}
	where (i) uses the inequalities that $\|x+y\|^2\le 2\|x\|^2+2\|y\|^2$ for any $x,y\in\mathbb{R}^d$, (ii) uses the fact that $h_{t,k}\in\mathcal{F}_{t,k}$, and (iii) uses eq. \eqref{eq:gJerr_sgd} and applies Lemma \ref{lemma:var} to the quantity that $X(s,a,s',\widetilde{s})=\psi_t(a|s)\psi_t(a|s)^{\top}$ in which $\omega_{t}\in\mathcal{F}_{t,k}$ is fixed and $\|X(s,a,s',\widetilde{s})\|_F\le C_{\psi}^2$. 
	
	Combining eqs. \eqref{eq:uthat2ut} \& \eqref{eq:err_ut} yields that
	\begin{align}
		&\mathbb{E}\big[\big\|\widehat{\nabla}f_{\omega_t}(h_{t,k})-\nabla f_{\omega_t}(h_{t,k})\big\|^2\big|\mathcal{F}_{t,k}\big]\nonumber\\
		\le& 2\mathbb{E}\big[\big\|\widehat{\nabla}f_{\omega_t}(h_{t,k})-\widetilde{\nabla} f_{\omega_t}(h_{t,k})\big\|^2\big|\mathcal{F}_{t,k}\big] + 2\mathbb{E}\big[\big\|\widetilde{\nabla} f_{\omega_t}(h_{t,k})-\nabla f_{\omega_t}(h_{t,k})\big\|^2\big|\mathcal{F}_{t,k}\big]\nonumber\\
		\le& C_{\psi}^4\Big[2M^2\sigma_W^{2T_z} + \frac{36 (\kappa+1-\rho)}{N_k(1-\rho)}\Big]\|h_{t,k}\|^2 + 4c_4\sigma_W^{2T'}\nonumber\\
		&+\frac{4c_7}{N_k}+64C_{\psi}^2\sum_{m=1}^{M} \big\|\theta_t^{(m)}-\theta_{\omega_t}^*\big\|^2+64C_{\psi}^2\zeta_{\text{approx}}^{\text{critic}}. \label{eq:err_uthat}
	\end{align}
	Therefore, 
	\begin{align}
		&\mathbb{E}\big[\big\|h_{t,k+1}-h(\omega_{t})\big\|^2\big|\mathcal{F}_{t,k}\big]\nonumber\\
		&=\mathbb{E}\big[\big\|h_{t,k}-\eta \widehat{\nabla}f_{\omega_t}(h_{t,k}) - h(\omega_{t})\big\|^2\big|\mathcal{F}_{t,k}\big] \nonumber\\
		&\stackrel{(i)}{\le} (1+\eta\lambda_{F}) \mathbb{E}\big[\big\|h_{t,k}-\eta \nabla f_{\omega_t}(h_{t,k})-h(\omega_{t})\big\|^2\big|\mathcal{F}_{t,k}\big]\nonumber\\
		&\quad + \big[1+(\eta\lambda_{F})^{-1}\big] \mathbb{E}\big[\big\|\eta \big[\widehat{\nabla}f_{\omega_t}(h_{t,k})-\nabla f_{\omega_t}(h_{t,k})\big]\big\|^2\big|\mathcal{F}_{t,k}\big] \nonumber\\
		&\stackrel{(ii)}{=} (1+\eta\lambda_{F}) \big\|h_{t,k}-\eta F(\omega_{t})\big[h_{t,k}-h(\omega_t)\big]-h(\omega_{t})\big\|^2 \nonumber\\
		&\quad +\eta \big(\eta+\lambda_{F}^{-1}\big) \mathbb{E}\big[\big\|\widehat{\nabla}f_{\omega_t}(h_{t,k})-\nabla f_{\omega_t}(h_{t,k})\big\|^2\big|\mathcal{F}_{t,k}\big] \nonumber\\
		&=(1+\eta\lambda_{F}) \big\|\big[I-\eta F(\omega_{t})\big]\big[h_{t,k}-h(\omega_t)\big]\big\|^2 \nonumber\\
		&\quad +\eta \big(\eta+\lambda_{F}^{-1}\big) \mathbb{E}\big[\big\|[\widehat{\nabla}f_{\omega_t}(h_{t,k})-\nabla f_{\omega_t}(h_{t,k})]\big\|^2\big|\mathcal{F}_{t,k}\big] \nonumber\\
		&\stackrel{(iii)}{\le} (1+\eta\lambda_{F})(1-\eta\lambda_{F})^2 \big\|h_{t,k}-h(\omega_t)\big\|^2 \nonumber\\
		&\quad +\frac{2\eta}{\lambda_{F}} \Big(C_{\psi}^4\Big[2M^2\sigma_W^{2T_z} + \frac{36 (\kappa+1-\rho)}{N_k(1-\rho)}\Big]\|h_{t,k}\|^2 + 4c_4\sigma_W^{2T'}\nonumber\\
		&\quad +\frac{4c_7}{N_k}+64C_{\psi}^2\sum_{m=1}^{M} \big\|\theta_t^{(m)}-\theta_{\omega_t}^*\big\|^2+64C_{\psi}^2\zeta_{\text{approx}}^{\text{critic}}\Big) \nonumber\\
		&\le (1-\eta\lambda_{F}) \big\|h_{t,k}-h(\omega_t)\big\|^2 \nonumber\\
		&\quad +\frac{2\eta}{\lambda_{F}} \Big(2C_{\psi}^4\Big[2M^2\sigma_W^{2T_z} + \frac{36 (\kappa+1-\rho)}{N_k(1-\rho)}\Big](\|h_{t,k}-h(\omega_t)\|^2+\|h(\omega_t)\|^2) \nonumber\\
		&\quad  +4c_4\sigma_W^{2T'} +\frac{4c_7}{N_k}+64C_{\psi}^2\sum_{m=1}^{M} \big\|\theta_t^{(m)}-\theta_{\omega_t}^*\big\|^2+64C_{\psi}^2\zeta_{\text{approx}}^{\text{critic}}\Big) \nonumber\\
		&\stackrel{(iv)}{\le} \Big(1-\frac{\eta\lambda_F}{2}\Big) \big\|h_{t,k}-h(\omega_t)\big\|^2 +\frac{2\eta}{\lambda_{F}} \Big(2C_{\psi}^4\Big[2M^2\sigma_W^{2T_z} + \frac{36 (\kappa+1-\rho)}{N_k(1-\rho)}\Big] \frac{D_J^2}{\lambda_F^2} \nonumber\\
		&\quad  +4c_4\sigma_W^{2T'} +\frac{4c_7}{N_k}+64C_{\psi}^2\sum_{m=1}^{M} \big\|\theta_t^{(m)}-\theta_{\omega_t}^*\big\|^2+64C_{\psi}^2\zeta_{\text{approx}}^{\text{critic}}\Big) \nonumber\\
		&\stackrel{(v)}{\le} \Big(1-\frac{\eta\lambda_F}{2}\Big) \big\|h_{t,k}-h(\omega_t)\big\|^2 +\frac{8\eta}{\lambda_{F}} \Big(C_{\psi}^4M^2\sigma_W^{2T_z} + \frac{c_9}{N_k} \nonumber\\
		&\quad  +c_4\sigma_W^{2T'} +16C_{\psi}^2\sum_{m=1}^{M} \big\|\theta_t^{(m)}-\theta_{\omega_t}^*\big\|^2+16C_{\psi}^2\zeta_{\text{approx}}^{\text{critic}}\Big) \nonumber
	\end{align}
	where (i) uses the inequality that $\|x+y\|^2\le (1+\eta\lambda_{F})\|x\|^2+[1+(\eta\lambda_F)^{-1}]\|y\|^2$ for any $x, y\in\mathbb{R}^d$, (ii) uses the notation that $\nabla f_{\omega_t}(h)=F(\omega_{t})h-\nabla J(\omega_t)=F(\omega_{t})[h-h(\omega_t)]$ and the fact that $\omega_t, h_{t,k}\in\mathcal{F}_{t,k}$, (iii) uses eq. \eqref{eq:err_uthat} and the item \ref{item:I-etaF} of this Lemma, (iv) uses the conditions that $T_z\ge \frac{\ln(3D_J C_{\psi}^2)}{\ln(\sigma_W^{-1})}$ and the item \ref{item:Nk} of this Lemma, and (v) uses the notation that $c_9:=\frac{18C_{\psi}^4D_J^2(\kappa+1-\rho)}{\lambda_F^2(1-\rho)}+c_7$.
	
	Then, taking unconditional expectation of the above inequality and iterating it over $k=0,1,\ldots,K-1$ yield that
	\begin{align}
	&\mathbb{E}\big[\big\|h_{t}-h(\omega_{t})\big\|^2\big]=\mathbb{E}\big[\big\|h_{t,K}-h(\omega_{t})\big\|^2\big]\nonumber\\
	&\le \Big(1-\frac{\eta\lambda_F}{2}\Big)^K \mathbb{E}\big[\big\|h_{t,0}-h(\omega_t)\big\|^2\big] +\frac{8\eta}{\lambda_{F}}\sum_{k=0}^{K-1} \Big(1-\frac{\eta\lambda_F}{2}\Big)^{K-1-k} \nonumber\\
	&\quad \Big(C_{\psi}^4M^2\sigma_W^{2T_z} + \frac{c_9}{N_{k}} +c_4\sigma_W^{2T'} +16C_{\psi}^2\sum_{m=1}^{M} \mathbb{E}\big[\big\|\theta_t^{(m)}-\theta_{\omega_t}^*\big\|^2\big]+16C_{\psi}^2\zeta_{\text{approx}}^{\text{critic}}\Big) \nonumber\\
	&\stackrel{(i)}{\le} \Big(1-\frac{\eta\lambda_F}{2}\Big)^K \mathbb{E}\big[\big\|h_{t-1}-h(\omega_{t})\big\|^2\big] \nonumber\\
	&\quad + \frac{16}{\lambda_{F}^2} \Big(C_{\psi}^4M^2\sigma_W^{2T_z} +c_4\sigma_W^{2T'} +16C_{\psi}^2\sum_{m=1}^{M} \mathbb{E}\big[\big\|\theta_t^{(m)}-\theta_{\omega_t}^*\big\|^2\big]+16C_{\psi}^2\zeta_{\text{approx}}^{\text{critic}}\Big) \nonumber\\
	&\quad + \frac{8\eta c_9[1-(1-\eta\lambda_F/2)^{K/2}]}{N\lambda_F(1-\sqrt{1-\eta\lambda_F/2})} \sum_{k=0}^{K-1} \Big(1-\frac{\eta\lambda_F}{2}\Big)^{(K-1-k)/2}  \nonumber\\
	&\stackrel{(ii)}{\le} \Big(1-\frac{\eta\lambda_F}{2}\Big)^K \mathbb{E}\big[\big\|h_{t-1}-h(\omega_{t})\big\|^2\big] + \frac{16}{\lambda_{F}^2} \big(C_{\psi}^4M^2\sigma_W^{2T_z} +c_4\sigma_W^{2T'} +16C_{\psi}^2\zeta_{\text{approx}}^{\text{critic}}\big) \nonumber\\
	&\quad +\frac{256C_{\psi}^2}{\lambda_F^2}\Big(\sigma_W^{2T_c'}\beta^2 c_2 + 2M \Big[c_3\Big(1-\frac{\lambda_{B}}{8} \beta\Big)^{T_c}+\frac{c_1}{N_c}\Big]\Big) +\frac{8\eta c_9}{N\lambda_F(1-\sqrt{1-\eta\lambda_F/2})^2} \nonumber\\
	&\stackrel{(iii)}{\le} \Big(1-\frac{\eta\lambda_F}{2}\Big)^K \mathbb{E}\big[\big\|h_{t-1}-h(\omega_{t})\big\|^2\big] + \frac{16}{\lambda_{F}^2} \big(C_{\psi}^4M^2\sigma_W^{2T_z} +c_4\sigma_W^{2T'} +16C_{\psi}^2\zeta_{\text{approx}}^{\text{critic}}\big) \nonumber\\
	&\quad +\frac{256C_{\psi}^2}{\lambda_F^2}\Big(\sigma_W^{2T_c'}\beta^2 c_2 + 2M \Big[c_3\Big(1-\frac{\lambda_{B}}{8} \beta\Big)^{T_c}+\frac{c_1}{N_c}\Big]\Big) +\frac{128 c_9}{N\eta\lambda_F^3} \label{eq:herr1}\\
	&\stackrel{(iv)}{\le} 3\Big(1-\frac{\eta\lambda_F}{2}\Big)^K \mathbb{E}\big[\big\|h_{t-1}-h(\omega_{t-1})\big\|^2+\big\|h(\omega_{t-1})\big\|^2+\big\|-h(\omega_{t})\big\|^2\big] \nonumber\\
	&\quad + \frac{16}{\lambda_{F}^2} \big(C_{\psi}^4M^2\sigma_W^{2T_z} +c_4\sigma_W^{2T'} +16C_{\psi}^2\zeta_{\text{approx}}^{\text{critic}}\big) \nonumber\\
	&\quad +\frac{256C_{\psi}^2}{\lambda_F^2}\Big(\sigma_W^{2T_c'}\beta^2 c_2 + 2M \Big[c_3\Big(1-\frac{\lambda_{B}}{8} \beta\Big)^{T_c}+\frac{c_1}{N_c}\Big]\Big) +\frac{128 c_9}{N\eta\lambda_F^3} \nonumber\\
	&\stackrel{(v)}{\le} 3\Big(1-\frac{\eta\lambda_F}{2}\Big)^K \mathbb{E}\big[\big\|h_{t-1}-h(\omega_{t-1})\big\|^2\big] + \frac{6D_J^2}{\lambda_F^2}\Big(1-\frac{\eta\lambda_F}{2}\Big)^K\nonumber\\
	&\quad + \frac{16}{\lambda_{F}^2} \big(C_{\psi}^4M^2\sigma_W^{2T_z} +c_4\sigma_W^{2T'} +16C_{\psi}^2\zeta_{\text{approx}}^{\text{critic}}\big) \nonumber\\
	&\quad +\frac{256C_{\psi}^2}{\lambda_F^2}\Big(\sigma_W^{2T_c'}\beta^2 c_2 + 2M \Big[c_3\Big(1-\frac{\lambda_{B}}{8} \beta\Big)^{T_c}+\frac{c_1}{N_c}\Big]\Big) +\frac{128 c_9}{N\eta\lambda_F^3}, \nonumber
	\end{align}
	where (i) uses the notation that $h_{t,0}=h_t$, the item \ref{item:Nk} of this Lemma and the inequality that $\sum_{k=0}^{K-1} \big(1-\frac{\eta\lambda_F}{2}\big)^{K-1-k} \le \frac{2}{\eta\lambda_F}$, (ii) uses Lemma \ref{prop:TD}, (iii) uses the inequality that $\frac{1}{(1-\sqrt{1-\eta\lambda_F/2})^2}=\frac{(1+\sqrt{1-\eta\lambda_F/2})^2}{(\eta\lambda_F/2)^2}\le \frac{16}{(\eta\lambda_F)^2}$ implied by the item \ref{item:I-etaF} of this Lemma, (iv) uses the inequality that $\|x+y+z\|^2\le 3\|x\|^2+3\|y\|^2+3\|z\|^2, \forall x, y, z\in\mathbb{R}^d$, and (v) uses the items \ref{item:hnorm} of this Lemma. Taking unconditional expectation of the above inequality and iterating it over $t$ yield that
	\begin{align}
		&\mathbb{E}\big[\big\|h_{t}-h(\omega_{t})\big\|^2\big] \nonumber\\
		&\stackrel{(i)}{\le} \Big[3\Big(1-\frac{\eta\lambda_F}{2}\Big)^K\Big]^{t} \mathbb{E}\big[\big\|h_{0}-h(\omega_{0})\big\|^2\big] + \frac{12D_J^2}{\lambda_F^2} \Big(1-\frac{\eta\lambda_F}{2}\Big)^K \nonumber\\
	    &\quad + \frac{32}{\lambda_{F}^2} \big(C_{\psi}^4M^2\sigma_W^{2T_z} +c_4\sigma_W^{2T'} +16C_{\psi}^2\zeta_{\text{approx}}^{\text{critic}}\big) \nonumber\\
	    &\quad +\frac{512C_{\psi}^2}{\lambda_F^2}\Big(\sigma_W^{2T_c'}\beta^2 c_2 + 2M \Big[c_3\Big(1-\frac{\lambda_{B}}{8} \beta\Big)^{T_c}+\frac{c_1}{N_c}\Big]\Big) +\frac{256 c_9}{N\eta\lambda_F^3} \nonumber\\
		&\stackrel{(ii)}{\le} \Big[3\Big(1-\frac{\eta\lambda_F}{2}\Big)^K\Big]^{t} \Big[\Big(1-\frac{\eta\lambda_F}{2}\Big)^K \mathbb{E}\big[\big\|h_{-1}-h(\omega_{0})\big\|^2\big] \nonumber\\
		&\quad +\frac{16}{\lambda_{F}^2} \big(C_{\psi}^4M^2\sigma_W^{2T_z} +c_4\sigma_W^{2T'} +16C_{\psi}^2\zeta_{\text{approx}}^{\text{critic}}\big) \nonumber\\
    	&\quad +\frac{256C_{\psi}^2}{\lambda_F^2}\Big(\sigma_W^{2T_c'}\beta^2 c_2 + 2M \Big[c_3\Big(1-\frac{\lambda_{B}}{8} \beta\Big)^{T_c}+\frac{c_1}{N_c}\Big]\Big) +\frac{128 c_9}{N\eta\lambda_F^3}\Big] \nonumber\\
		&\quad+ \frac{12D_J^2}{\lambda_F^2} \Big(1-\frac{\eta\lambda_F}{2}\Big)^K + \frac{32}{\lambda_{F}^2} \big(C_{\psi}^4M^2\sigma_W^{2T_z} +c_4\sigma_W^{2T'} +16C_{\psi}^2\zeta_{\text{approx}}^{\text{critic}}\big) \nonumber\\
	    &\quad +\frac{512C_{\psi}^2}{\lambda_F^2}\Big(\sigma_W^{2T_c'}\beta^2 c_2 + 2M \Big[c_3\Big(1-\frac{\lambda_{B}}{8} \beta\Big)^{T_c}+\frac{c_1}{N_c}\Big]\Big) +\frac{256 c_9}{N\eta\lambda_F^3} \nonumber\\
		&\stackrel{(iii)}{\le} 2\Big(1-\frac{\eta\lambda_F}{2}\Big)^K \Big(\big\|h_{-1}\big\|^2+\frac{D_J^2}{\lambda_{F}^2}\Big) \nonumber\\ 
		&\quad+ \frac{12D_J^2}{\lambda_F^2} \Big(1-\frac{\eta\lambda_F}{2}\Big)^K + \frac{48}{\lambda_{F}^2} \big(C_{\psi}^4M^2\sigma_W^{2T_z} +c_4\sigma_W^{2T'} +16C_{\psi}^2\zeta_{\text{approx}}^{\text{critic}}\big) \nonumber\\
	    &\quad +\frac{768C_{\psi}^2}{\lambda_F^2}\Big(\sigma_W^{2T_c'}\beta^2 c_2 + 2M \Big[c_3\Big(1-\frac{\lambda_{B}}{8} \beta\Big)^{T_c}+\frac{c_1}{N_c}\Big]\Big) \nonumber\\
	    &\quad +\frac{384 c_9}{\eta\lambda_F^3} \frac{\eta\lambda_F^5(1-\rho)(1-\eta\lambda_F/2)^{(K-1)/2}}{2304C_{\psi}^4(\kappa+1-\rho)} \nonumber\\
	    &\stackrel{(iv)}{\le} c_{10}\Big(1-\frac{\eta\lambda_F}{2}\Big)^{(K-1)/2} + c_{11}\sigma_W^{2T_z} + c_{12}\sigma_W^{2T'} + c_{13}\beta^2\sigma_W^{2T_c'}\nonumber\\
	    &\quad + c_{14}\Big(1-\frac{\lambda_{B}}{8} \beta\Big)^{T_c} + \frac{c_{15}}{N_c} + c_{16} \zeta_{\text{approx}}^{\text{critic}} \nonumber
	\end{align}
	where (i) uses the inequality that $3(1-\eta \lambda_F/2)^K\le 1$ implied by the condition that $K\ge \frac{\ln 3}{\ln [(1-\eta\lambda_F/2)^{-1}]}$, (ii) uses eq. \eqref{eq:herr1} with $t=0$, (iii) uses the condition that $N\ge \frac{2304C_{\psi}^4(\kappa+1-\rho)}{\eta\lambda_F^5(1-\rho)(1-\eta\lambda_F/2)^{(K-1)/2}}$ as well as the inequalities that $\big\|h_{-1}-h(\omega_0)\big\|^2\le 2\big\|h_{-1}\big\|^2+2\big\|h(\omega_0)\big\|^2\stackrel{*}{\le} 2\big\|h_{-1}\big\|^2+2D_J^2\lambda_{F}^{-2}$ (* uses the item \ref{item:hnorm} of this Lemma) and that $3(1-\eta \lambda_F/2)^K\le 1$, (iv) denotes that $c_{10}:= 2\|h_{-1}\|^2 + \frac{14D_J^2}{\lambda_F^2} + \frac{c_9 \lambda_F^2}{C_{\psi}^4}$, $c_{11}:= \frac{48C_{\psi}^4M^2}{\lambda_F^2}$, $c_{12}:= \frac{48c_4}{\lambda_F^2}$, $c_{13}:= \frac{768c_2C_{\psi}^2}{\lambda_F^2}$, $c_{14}:= \frac{1536Mc_3C_{\psi}^2}{\lambda_F^2}$, $c_{15}:=\frac{1536Mc_1C_{\psi}^2}{\lambda_F^2}$, $c_{16}:= \frac{768C_{\psi}^2}{\lambda_F^2}$. This proves the item \ref{item:ht_err} of this Lemma.
\end{proof}

\section{Experiment Setup and Additional Results}\label{supp: experiments}

\subsection{Experiment Setup}
We simulate a fully decentralized ring network with 6 fully decentralized agents, using communication matrix with diagonal entries $0.4$ and off-diagnonal entries $0.3$. The shared state space contains 5 states and each agent can take 2 actions. We adopt the softmax policy $\pi_{\omega}(a|s)\propto e^{\omega_{s,a}}$. The entries of the transition kernel and the reward functions are independently generated from the standard Gaussian distribution (with proper normalization of the absolute value for the transition kernel). We use the rows of a 5-dimensional identity matrix as state features. We set the discount factor $\gamma=0.95$. 

We implement and compare four decentralized AC-type algorithms in this multi-agent MDP: our decentralized AC in Algorithm \ref{alg: 1}, our decentralized NAC in Algorithm \ref{alg: 3}, an existing decentralized AC algorithm (Algorithm 2 of \cite{zhang2018fully}) that uses a linear model to parameterize the agents' averaged reward $\overline{R}(s,a,s')=\sum_i \lambda_if_i(s,a,s')$ (we name it DAC-RP1 for decentralized AC with reward parameterization) \footnote{The original algorithm in \cite{zhang2018fully} uses the parameterization $\overline{R}(s,a)=\sum_i \lambda_if_i(s,a)$, and we extend to our setting where the rewards also depend on the next state $s'$.}, and our proposed modified version of DAC-RP1 to incorporate minibatch, which we refer to as DAC-RP100 with batch size $N=100$. 
For our Algorithm \ref{alg: 1}, we choose $T=500$, $T_c=50$, $T_c'=10$, $N_c=10$, $T'=T_z=5$, $\beta=0.5$, $\{\sigma_m\}_{m=1}^6=0.1$, and consider batch size choices $N=100, 500, 2000$. Algorithm \ref{alg: 3} uses the same hyperparameters as those of Algorithm \ref{alg: 1} except that $T=2000$ in Algorithm \ref{alg: 3}. We select $\alpha=10,50,200$ for Algorithm \ref{alg: 1} with $N=100,500,2000$ respectively, and $T_z=5$, $\alpha=0.1,0.5,2$, $\eta=0.04,0.2,0.8$, $K=50,100,200$, $N_k\equiv 2,5,10$ for \Cref{alg: 3} with $N=100,500,2000$, respectively. For DAC-RP1 
that was originally designed for discount factor $\gamma=1$, we slightly adjust it to fit our setting where $0<\gamma<1$\footnote{\cite{zhang2018fully} defined the Q-function $Q_{\theta}(s,a)=\mathbb{E}\big[\overline{r}_{t+1}-J(\theta)\big]$ for policy parameter $\theta$ and used the temporal differences $\delta_t^i=r_{t+1}^i-\mu_t^i+V_{t+1}(v_t^i)-V_t(v_t^i)$ and $\widetilde{\delta}_t^i=\overline{R}_{t}(\lambda_t^i)-\mu_t^i+V_{t+1}(v_t^i)-V_t(v_t^i)$ for critic update and actor update respectively. To fit $0<\gamma<1$, we use $\delta_t^i=r_{t+1}^i+\gamma V_{t+1}(v_t^i)-V_t(v_t^i)$ and $\widetilde{\delta}_t^i=\overline{R}_{t}(\lambda_t^i)+\gamma V_{t+1}(v_t^i)-V_t(v_t^i)$ where $\mu_t^i\approx J(\theta_t)$ is removed since $Q_{\theta}(s,a)=\mathbb{E}(\overline{r}_{t+1})$. In addition, we used two different chains generated from transition kernels $\mathcal{P}$, $\mathcal{P}_{\xi}$ respectively for critic update and actor update as in our Algorithm \ref{alg: 1}.}. For this adjusted 
DAC-RP1, we select diminishing stepsizes $\beta_{\theta}=2(t+1)^{-0.9}$, $\beta_{v}=5(t+1)^{-0.8}$ as recommended in \cite{zhang2018fully} and use the rows of a 1600-dimensional identity matrix as the reward features $\{f_i(s,a,s'): s,s'\in\mathcal{S}, a\in\mathcal{A}\}$ $(i=1,2,\ldots,1600)$ to fully express $\overline{R}(s,a,s')$ over all the $5\times 2^6 \times 5=1600$  triplets $(s,a,s')$. 
DAC-RP100 
has batchsizes 100 and 10 for actor and critic updates respectively, and selects constant stepsizes $\beta_v=0.5$, $\beta_{\theta}=10$. This setting is similar to \Cref{alg: 1} with $N=100$ to inspect the reason of performance difference between \Cref{alg: 1} and DAC-RP1. All the algorithms are repeated 10 times using initial state 0 and the same initial actor parameter $\omega_0$ generated from standard Gaussian distribution. 

\begin{figure}[b]
	\centering
	\begin{subfigure}[b]{0.270\textwidth}   
		\centering 
		\includegraphics[width=\textwidth]{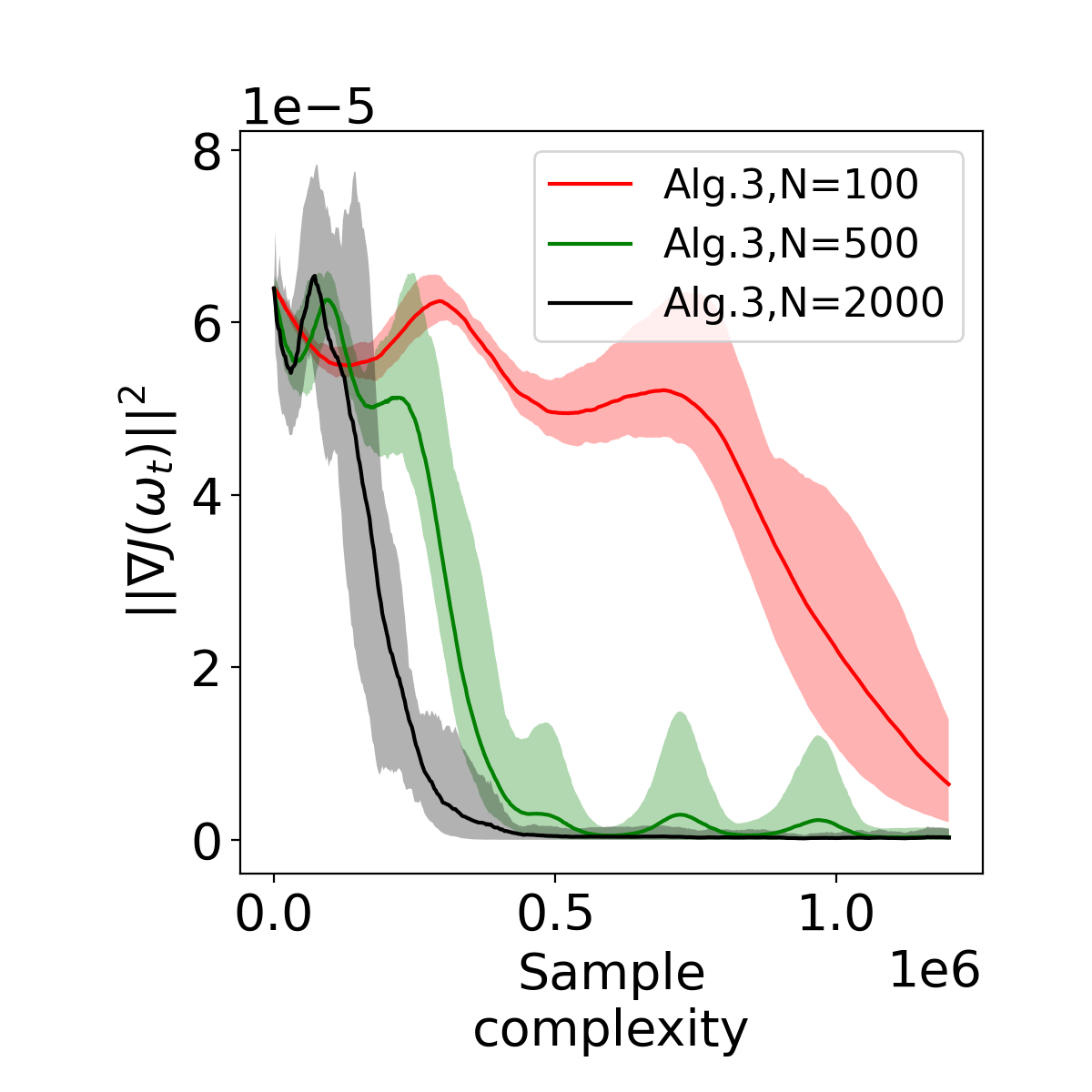}
	\end{subfigure}
	\hspace{-0.53\textwidth}
	\begin{subfigure}[b]{0.270\textwidth}   
		\centering 
		\includegraphics[width=\textwidth]{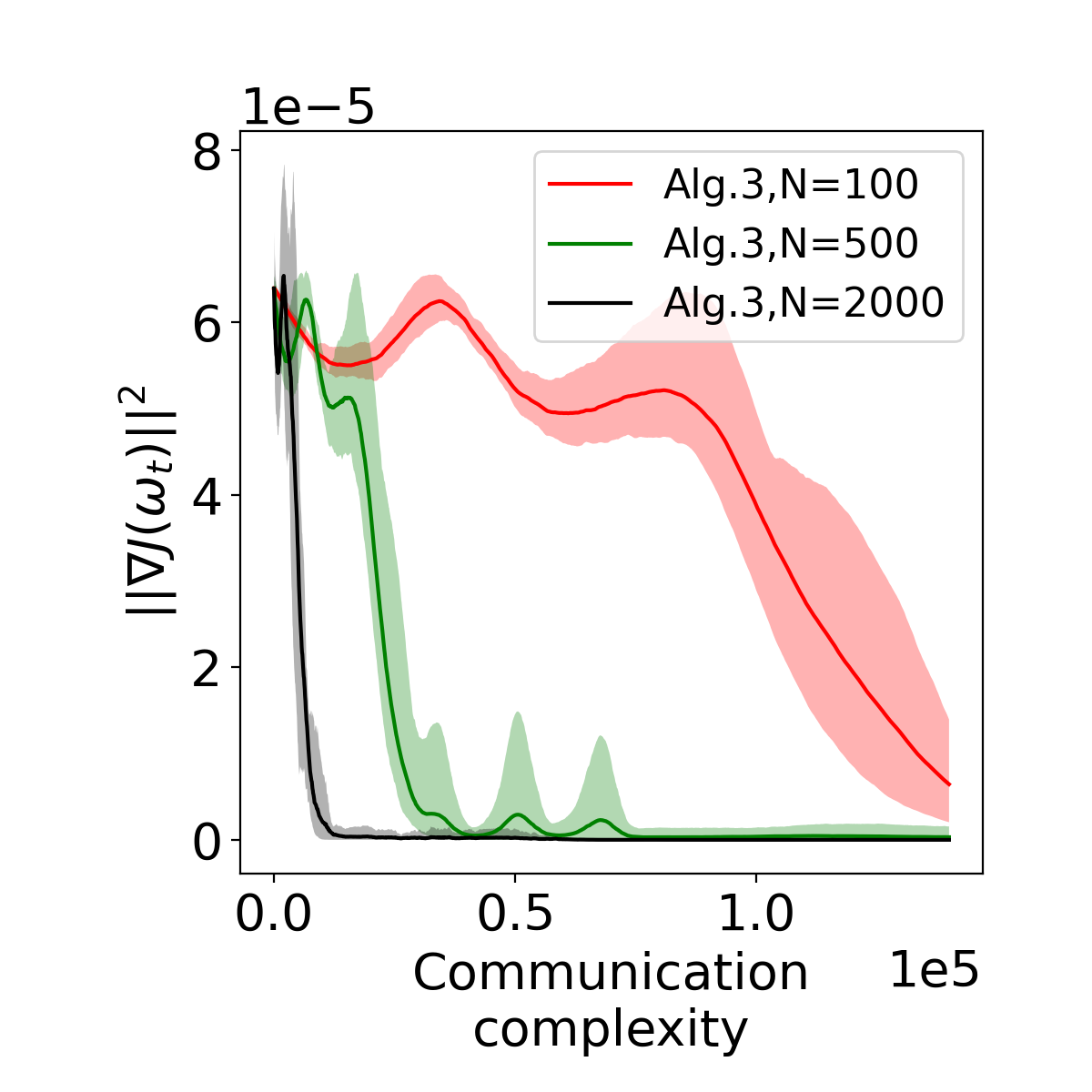}
	\end{subfigure}
	\hspace{-0.53\textwidth}
	\begin{subfigure}[b]{0.270\textwidth}   
		\centering 
		\includegraphics[width=\textwidth]{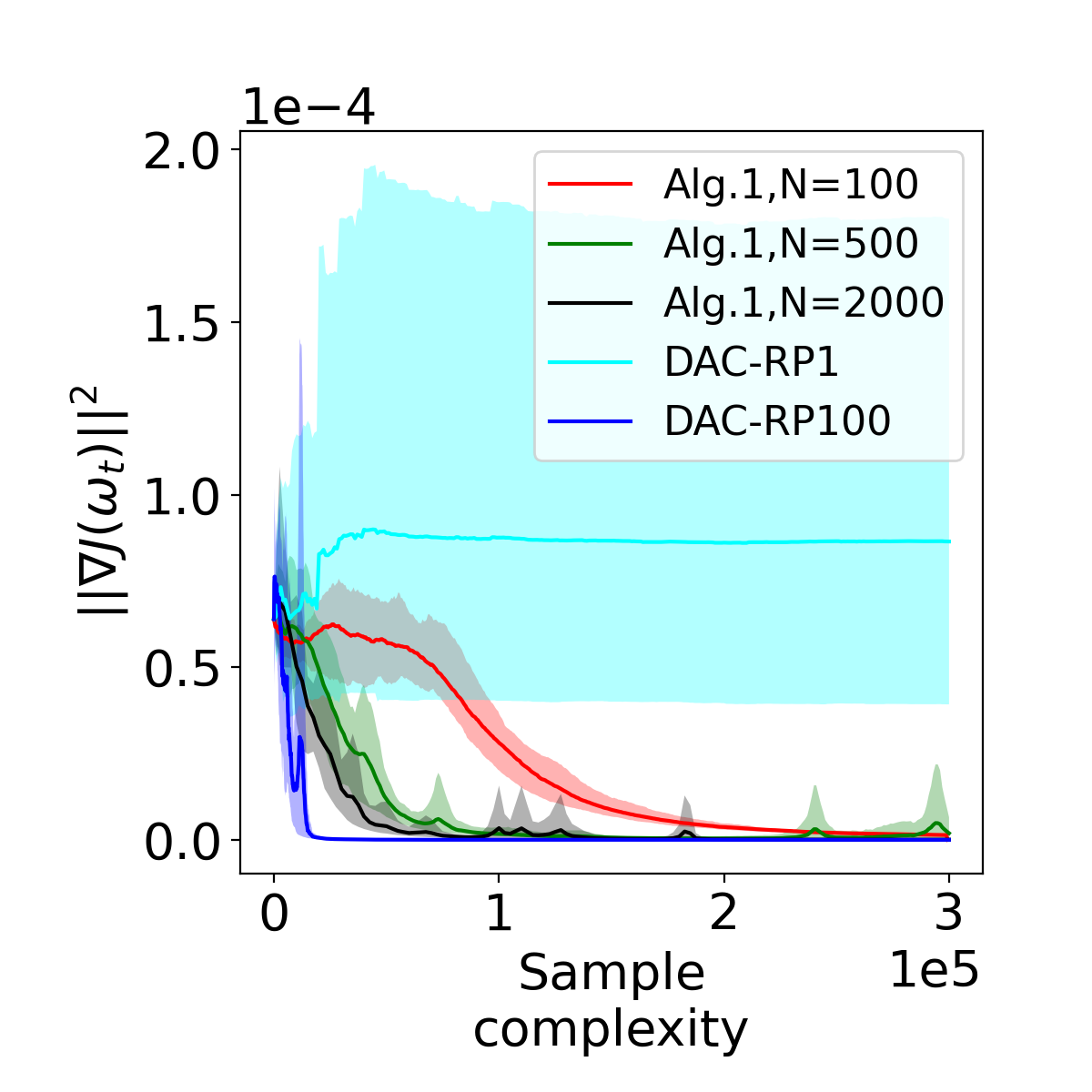}
	\end{subfigure}
	\hspace{-0.53\textwidth}
	\begin{subfigure}[b]{0.270\textwidth}   
		\centering 
		\includegraphics[width=\textwidth]{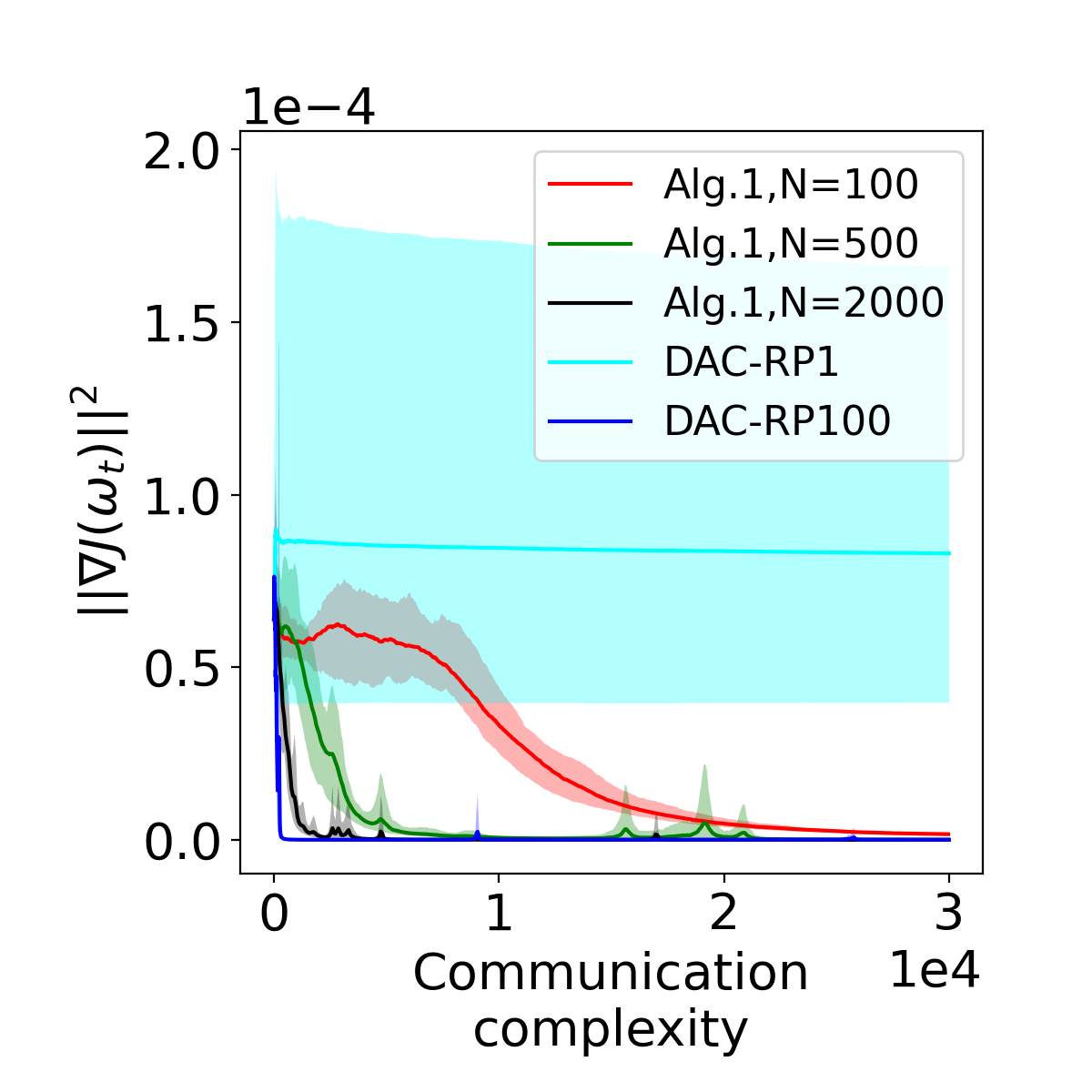}
	\end{subfigure}
	\caption{Comparison of  $\|\nabla J(\omega_t)\|^2$ among decentralized AC-type algorithms {in a ring network}.}
	\label{fig_dJ}
\end{figure}

{\subsection{Gradient Norm Convergence Results in Ring Network}\label{supp:expr_ring}
Figure \ref{fig_dJ} plots $\|\nabla J(\omega_t)\|^2$ v.s. communication complexity ($t(T_c+T_c+T')=65t$, $t(T_c+T_c+T'+T_z)=70t$ and $2t$ for Algorithms \ref{alg: 1} \& \ref{alg: 3}, and both DAC-RP algorithms, respectively)\footnote{Each update of our decentralized AC uses $T_c+T_c'$ and $T'$ communication rounds for synchronizing critic model and rewards, respectively. Each update of our decentralized NAC uses $T_c+T_c'$, $T'$, $T_z'$ communication rounds for synchronizing critic model, rewards and scalar $z$, respectively. Each update of both DAC-RP1 and DAC-RP100 uses 1 communication round for synchronizing $v$ and $\lambda$ respectively.} and sample complexity ($t(T_cN_c+N)$, $2t$ and $110t$ for both of our AC-type algorithms, DAC-RP1 and DAC-RP100, respectively).\footnote{DAC-RP1 uses 1 sample for actor and critic updates respectively. DAC-RP100 uses 100 and 10 samples for actor and critic updates respectively.} For each curve, its upper and lower envelopes denote the 95\% and 5\% percentiles of the 10 repetitions, respectively.

Similar to the result of accumulative reward $J(\omega_t)$ shown in Figure \ref{fig_J}, it can be seen from Figure \ref{fig_dJ} that the communication and sample efficiency of both our decentralized AC and NAC algorithms improve with larger batchsize due to reduced gradient variance, which matches our understanding in Theorems \ref{thm:AC} \& \ref{thm:NAC}. Our decentralized AC and NAC algorithms significantly outperform DAC-RP1 which has batchsize 1. Using mini-batch, DAC-RP100 outperforms a lot than DAC-RP1, and converges to critical points earlier than \Cref{alg: 1}. However, it can be seen from \Cref{fig_J} that such early convergence turns out to have much lower $J(\omega_t)$ than \Cref{alg: 1} with $N=100$ and $N_c=10$. Such a performance gap is caused by two reasons: (i) Both DAC-RP1 and DAC-RP100 suffer from an inaccurate parameterized estimation of the averaged reward, and the mean relative estimation errors of both DAC-RP1 and DAC-RP100 are over 100\% \footnote{The relative reward estimation error at the $t$-th iteration of both DAC-RP1 and DAC-RP100 is defined as $A/B$ where $A=\frac{1}{M|\mathcal{S}|^2|\mathcal{A}|} \sum_{m=1}^{M} \sum_{s,s’ \in \mathcal{S}}\sum_{a\in \mathcal{A}} [\overline{R}(s,a,s’)-\sum_i \lambda_i^{(m)}f_i(s,a,s')]^2$ and $B=\frac{1}{|\mathcal{S}|^2|\mathcal{A}|} \sum_{s,s’ \in \mathcal{S}}\sum_{a\in \mathcal{A}}\overline{R}(s,a,s’)^2$.}. In contrast, our noisy averaged reward estimation achieves a mean relative error in the range of $10^{-5}\sim 10^{-4}$. \footnote{At the $t$-th iteration of Algorithms \ref{alg: 1} \& \ref{alg: 3}, we focus on $\overline{r}_t^{(m)}=\frac{1}{N}\sum_{i=tN}^{(t+1)N-1} \overline{R}_i^{(m)}$ as the estimation of the batch-averaged reward $\overline{r}_t=\frac{1}{N}\sum_{i=tN}^{(t+1)N-1} \overline{R}_i$ since its estimation error affects the accuracy of the policy gradient \eqref{eq:spg}. The relative estimation error is defined as $\frac{1}{M\overline{r}_t^2}\sum_{m=1}^M (\overline{r}_t^{(m)}-\overline{r}_t)^2$.} 
; (ii) Both DAC-RP1 and DAC-RP100 apply only a single TD update per-round, and hence suffers from a larger mean TD learning error (about $2\%$ and $1\%$ for DAC-RP1 and DAC-RP100, respectively
), whereas our algorithms perform multiple TD learning updates per-round and achieve a smaller mean relative error (about $0.3\%$ and $0.07\%$ for our decentralized AC and NAC respectively) \footnote{The TD error at the $t$-th iteration is defined as $\frac{1}{M\|\theta_{\omega_t}^*\|^2}\sum_{m=1}^M \|\theta_t^{(m)}-\theta_{\omega_t}^*\|^2$.}. All these relative errors are averaged over iterations. 

\subsection{Additional Experiments in Fully Connected Network}
\vspace{-2mm}
To investigate the effect of network topology on the performance of our algorithms, we also conduct the above experiments on a fully connected network with 6 fully decentralized agents, using communication matrix with diagonal entries 0.4 and all the other entries 0.12. The MDP environment and all the hyperparameters are the same as the above experiments for ring network. Figures \ref{fig_Jgap_fulllink} \& \ref{fig_dJ_fulllink} plot the learning curves of the optimality gap $J^*-J(\omega_t)$ and $\|\nabla J(\omega_t)\|^2$ respectively for fully connected network. To make comparison, we plot $J^*-J(\omega_t)$ and $\|\nabla J(\omega_t)\|$ in Figures \ref{fig_Jgap} \& \ref{fig_dJ} respectively for the above experiments with ring network. It can be seen by comparing these figures that network topology does not much affect the performance of these algorithms, so the conclusions for ring network that we summarized right before this subsection also holds for fully connected network.
\begin{figure}[H]
\vspace{-2mm}
	\centering
	\begin{subfigure}[b]{0.270\textwidth}   
		\centering 
		\includegraphics[width=\textwidth]{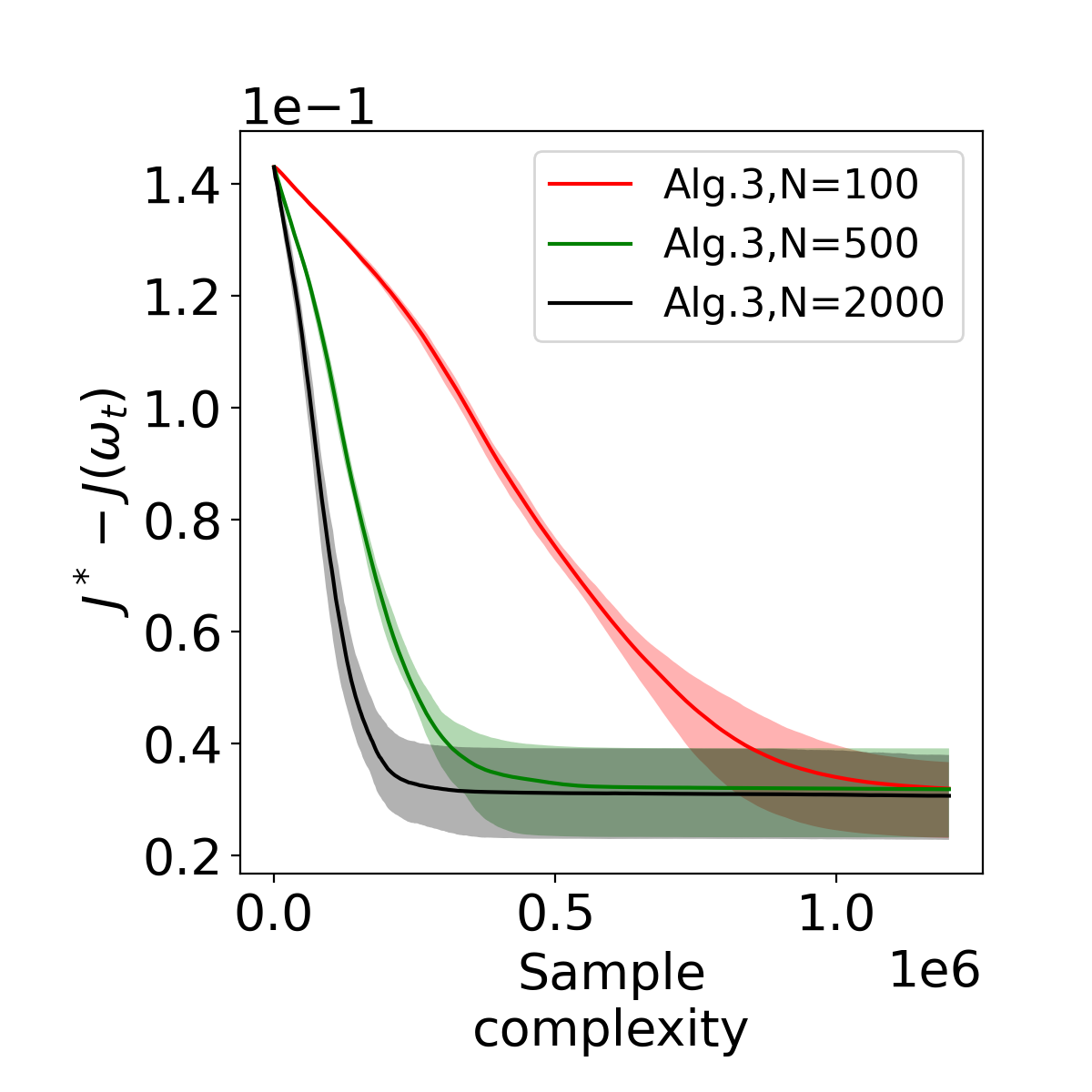}
	\end{subfigure}
	\hspace{-0.53\textwidth}
	\begin{subfigure}[b]{0.270\textwidth}   
		\centering 
		\includegraphics[width=\textwidth]{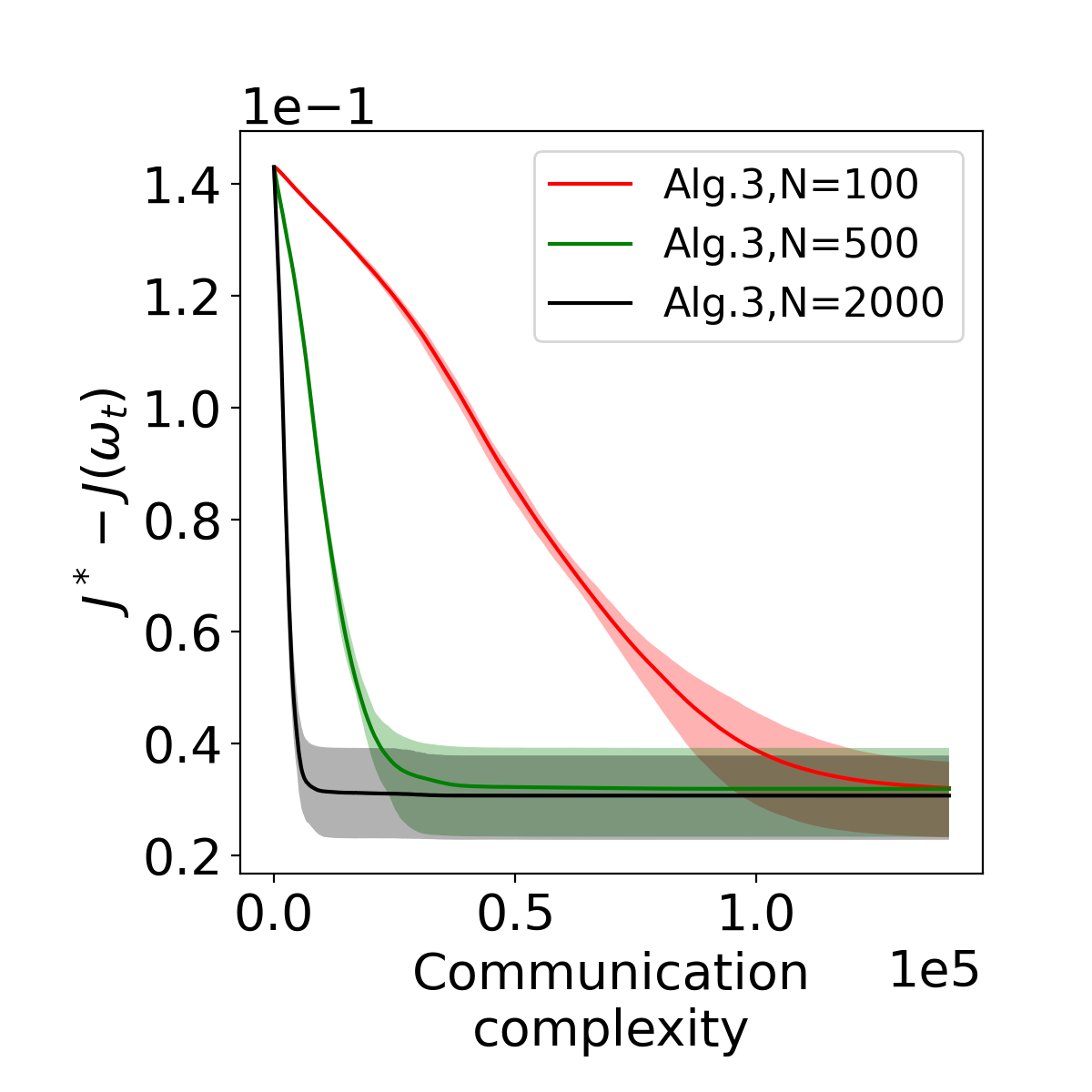}
	\end{subfigure}
	\hspace{-0.53\textwidth}
	\begin{subfigure}[b]{0.270\textwidth}   
		\centering 
		\includegraphics[width=\textwidth]{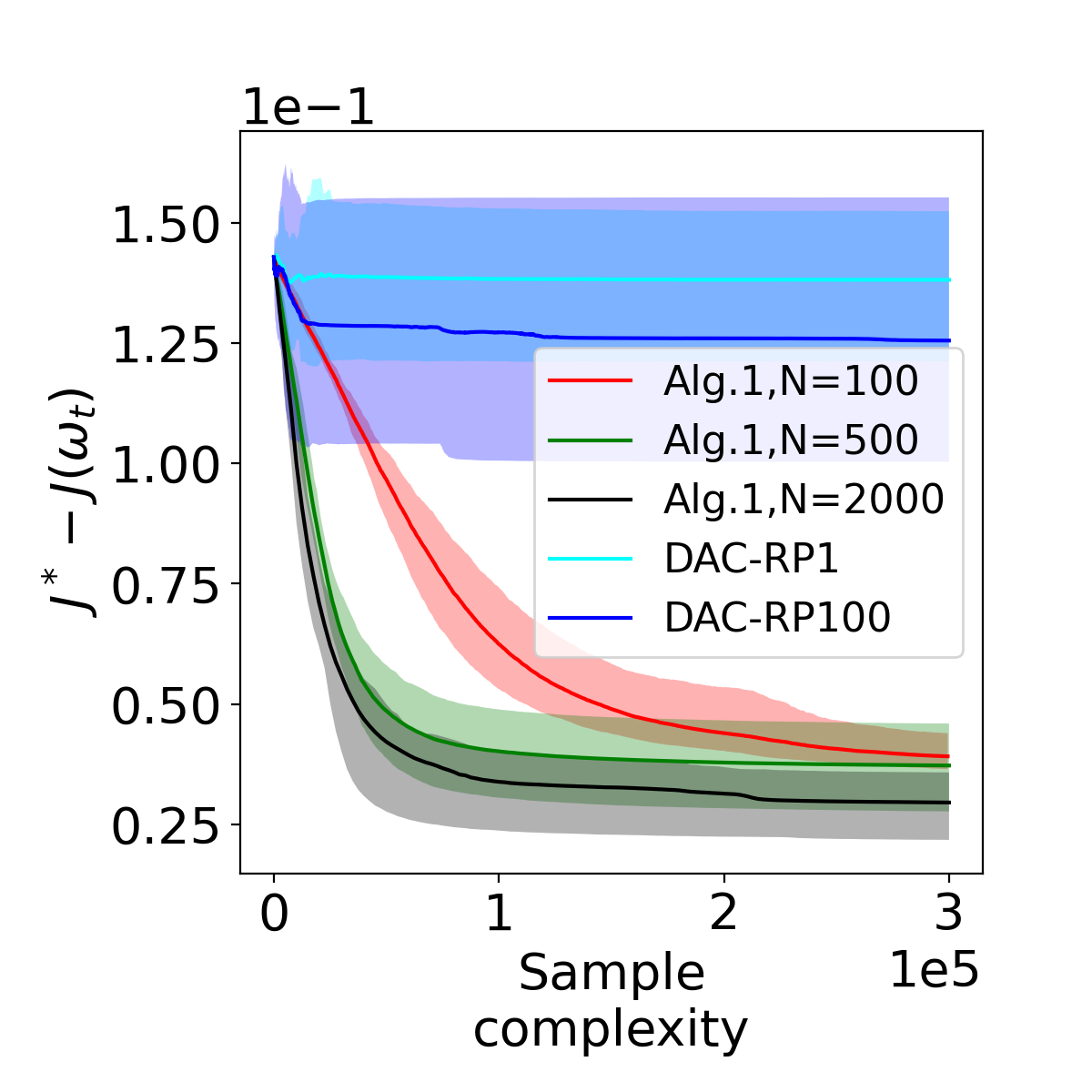}
	\end{subfigure}
	\hspace{-0.53\textwidth}
	\begin{subfigure}[b]{0.270\textwidth}   
		\centering 
		\includegraphics[width=\textwidth]{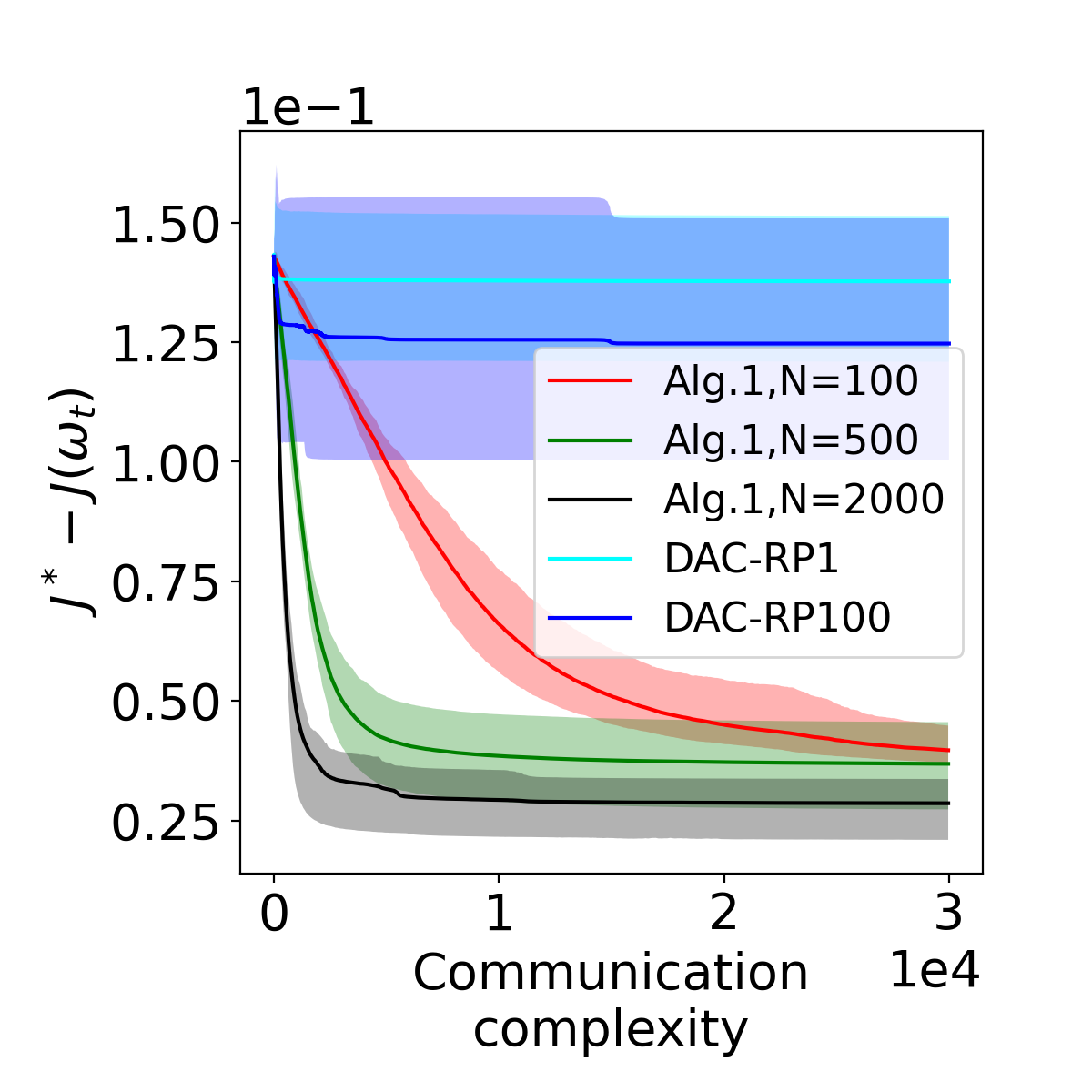}
	\end{subfigure}
	\vspace{-1mm}
	\caption{Comparison of optimality gap $J(\omega^*)-J(\omega_t)$ among decentralized AC-type algorithms {in fully connected network}.}
	\label{fig_Jgap_fulllink}
	\vspace{-2mm}
\end{figure}

\begin{figure}[H]
\vspace{-2mm}
	\centering
	\begin{subfigure}[b]{0.270\textwidth}   
		\centering 
		\includegraphics[width=\textwidth]{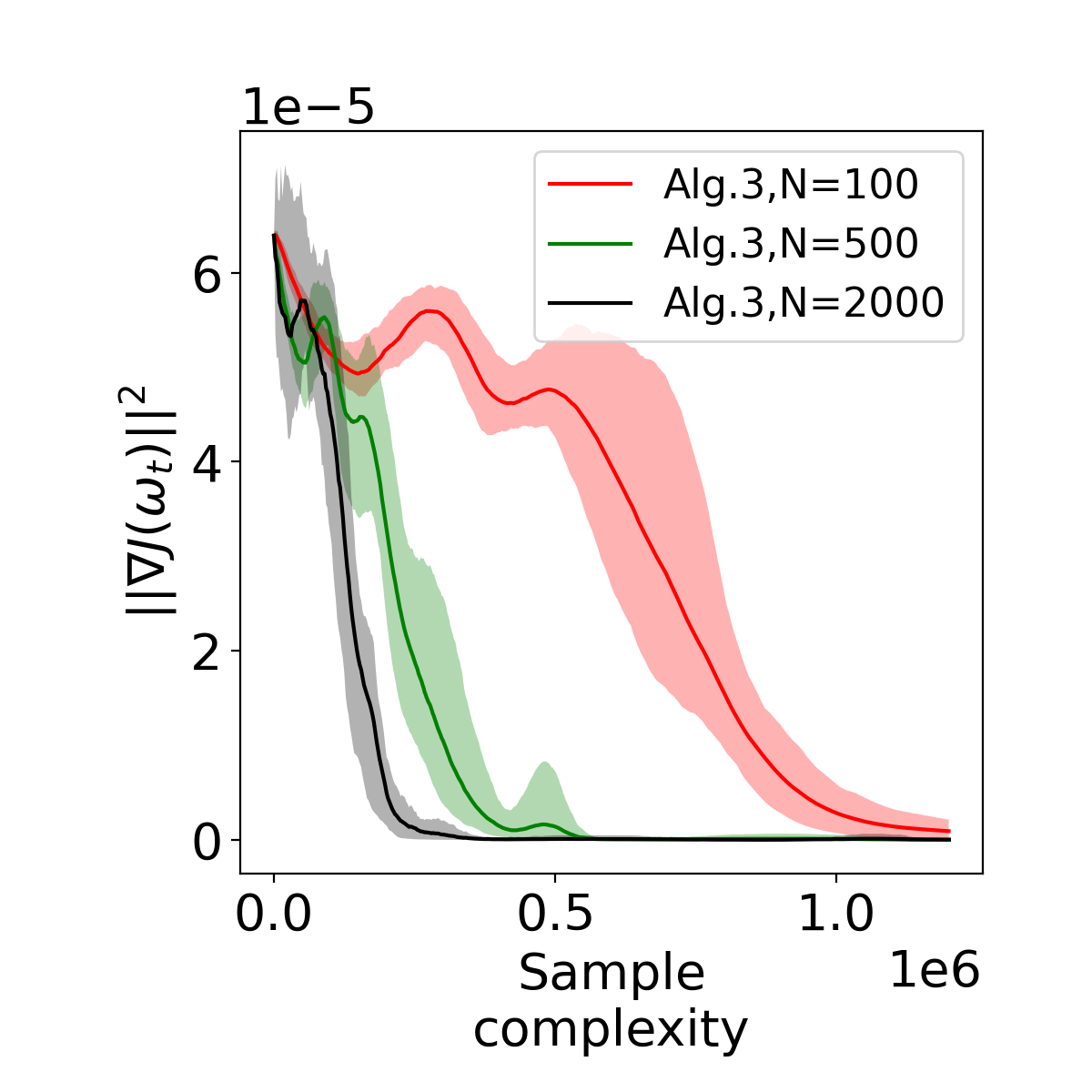}
	\end{subfigure}
	\hspace{-0.53\textwidth}
	\begin{subfigure}[b]{0.270\textwidth}   
		\centering 
		\includegraphics[width=\textwidth]{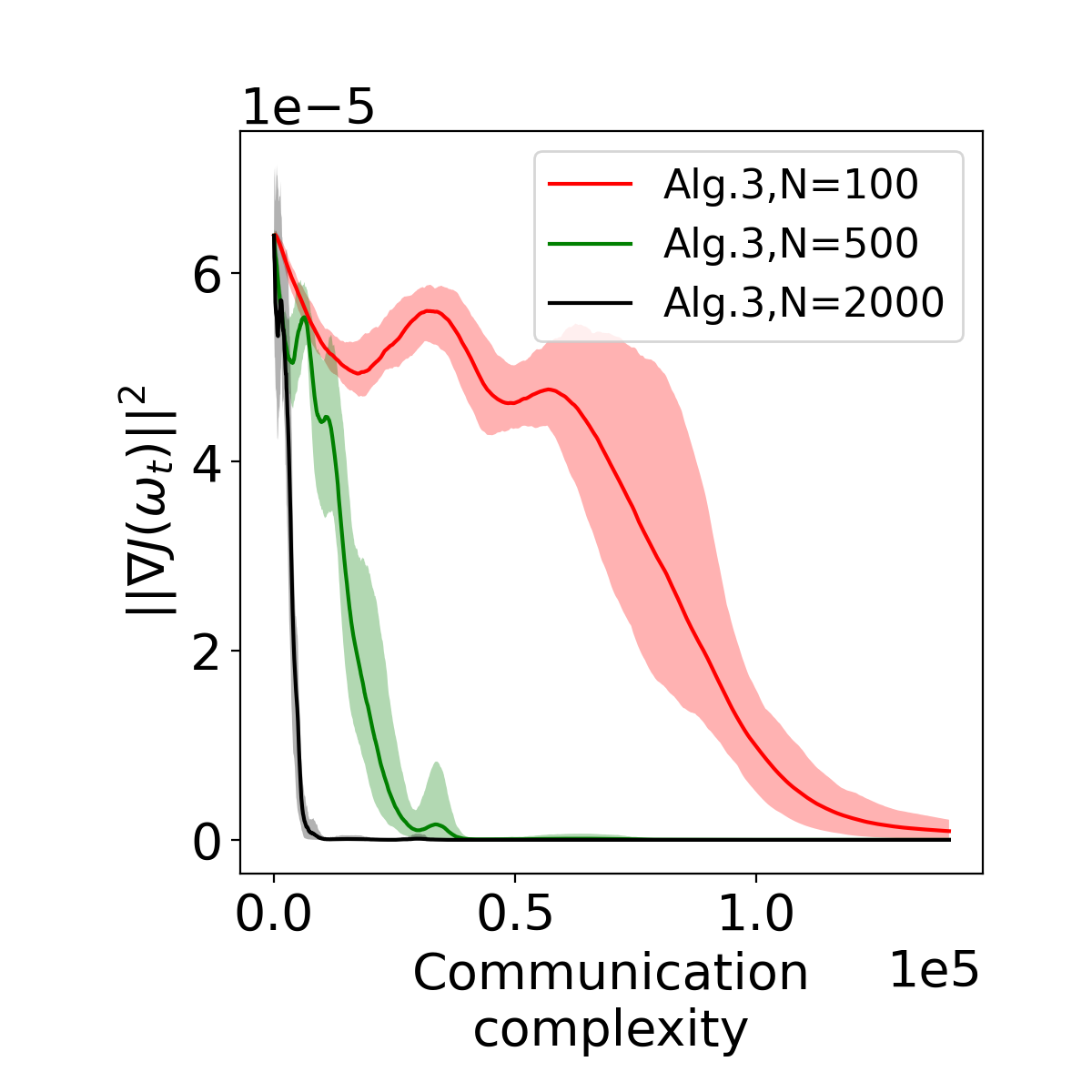}
	\end{subfigure}
	\hspace{-0.53\textwidth}
	\begin{subfigure}[b]{0.270\textwidth}   
		\centering 
		\includegraphics[width=\textwidth]{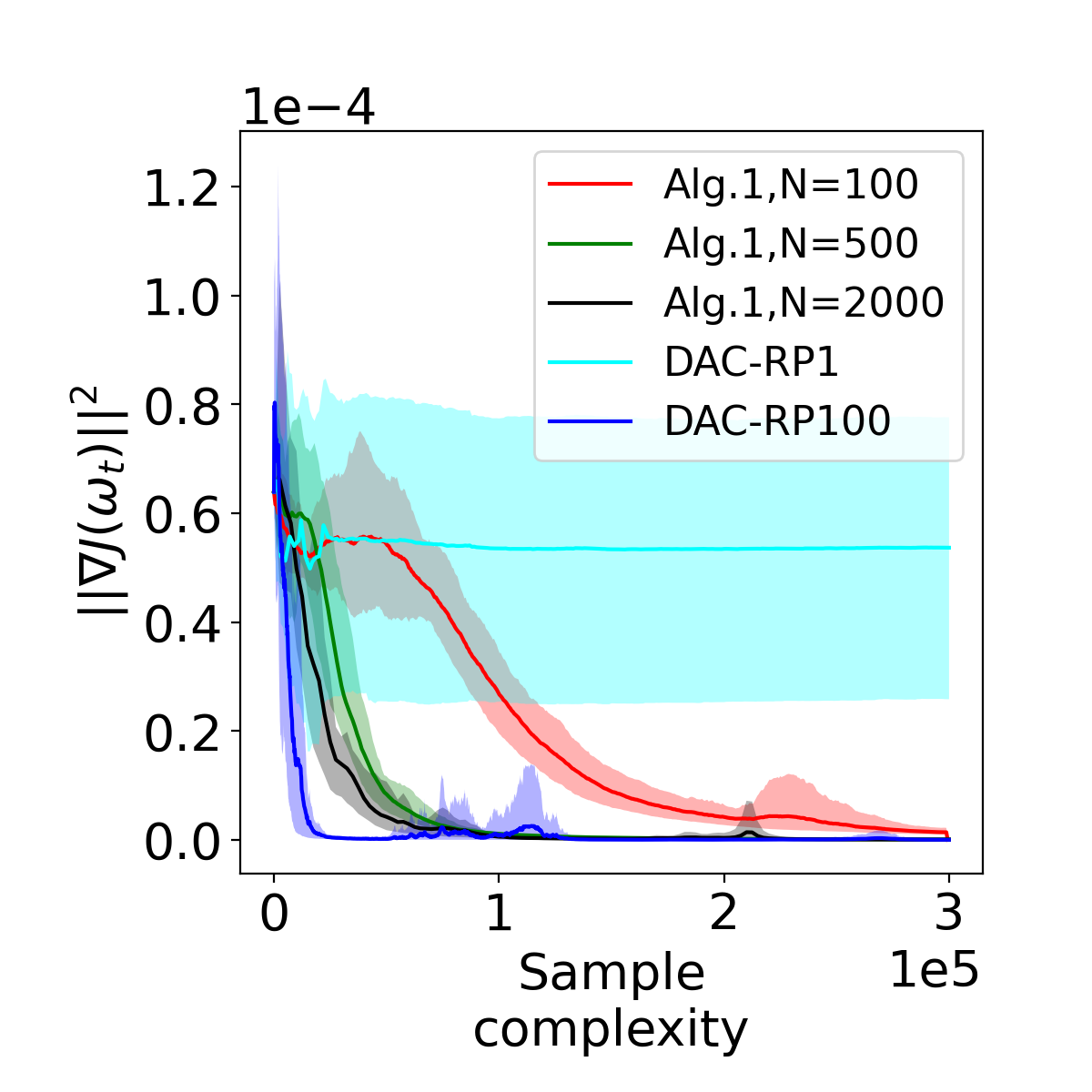}
	\end{subfigure}
	\hspace{-0.53\textwidth}
	\begin{subfigure}[b]{0.270\textwidth}   
		\centering 
		\includegraphics[width=\textwidth]{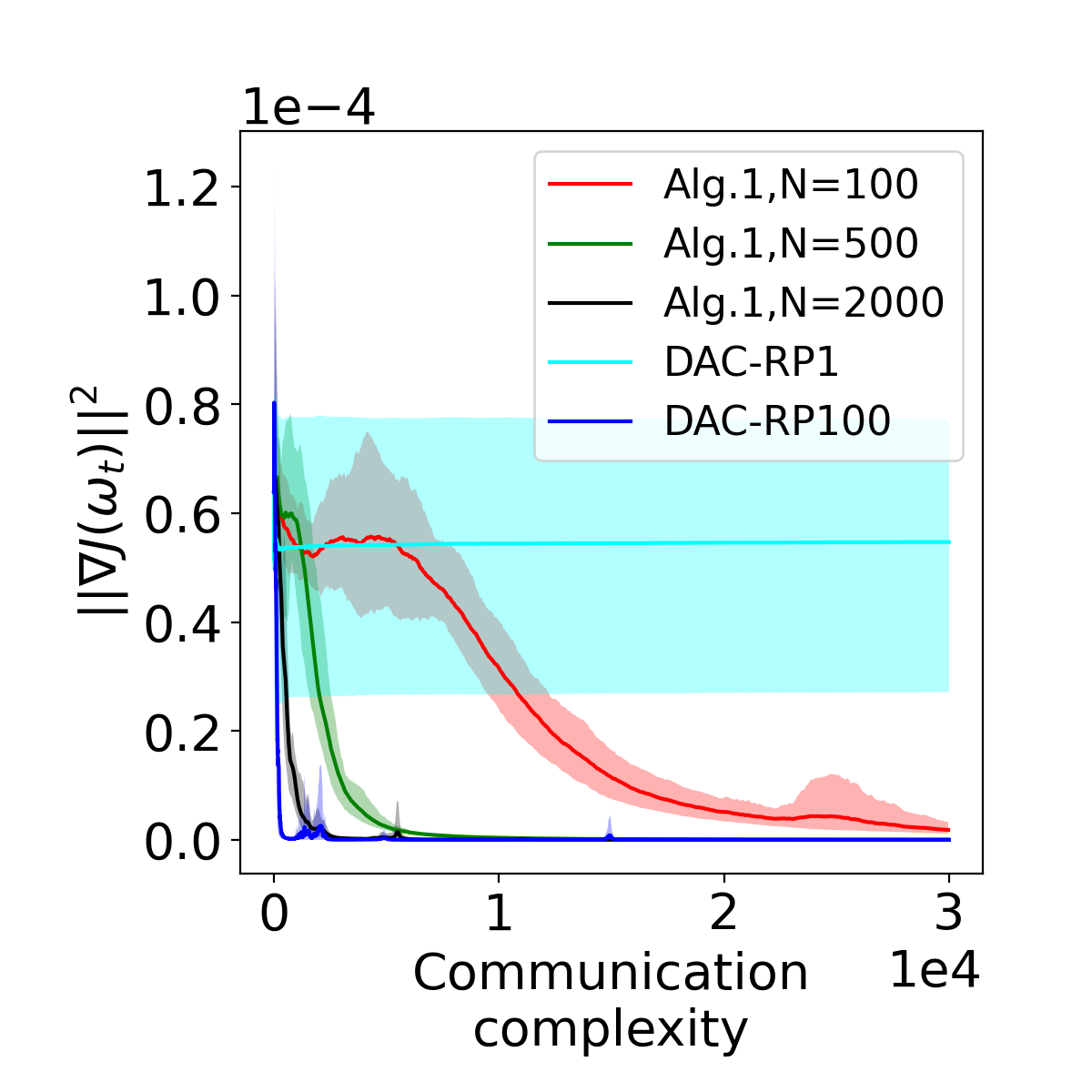}
	\end{subfigure}
	\vspace{-1mm}
	\caption{Comparison of  $\|\nabla J(\omega_t)\|^2$ among decentralized AC-type algorithms {in fully connected network}.}
	\label{fig_dJ_fulllink}
	\vspace{-2mm}
\end{figure}

\begin{figure}[H]
\vspace{-2mm}
	\centering
	\begin{subfigure}[b]{0.270\textwidth}   
		\centering 
		\includegraphics[width=\textwidth]{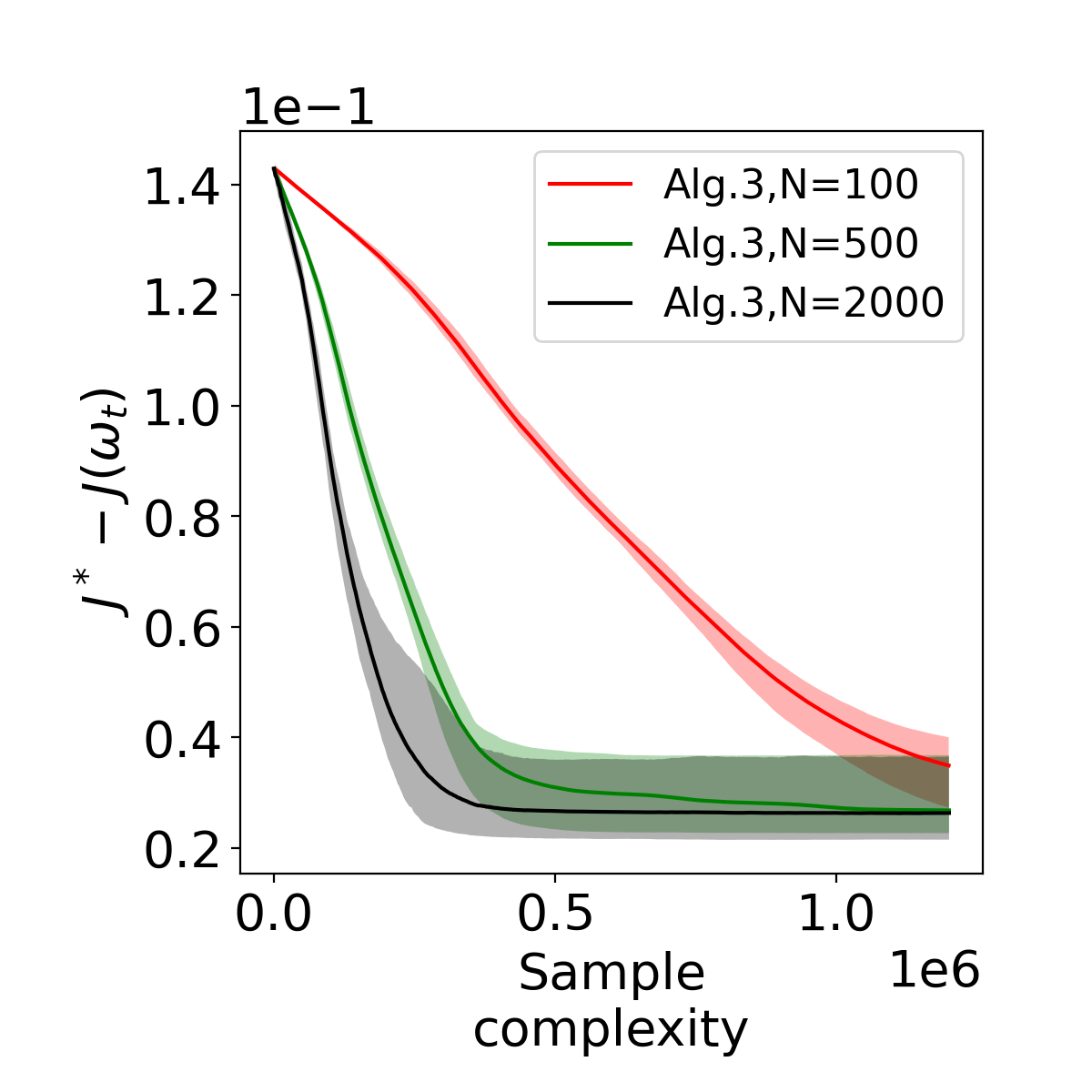}
	\end{subfigure}
	\hspace{-0.53\textwidth}
	\begin{subfigure}[b]{0.270\textwidth}   
		\centering 
		\includegraphics[width=\textwidth]{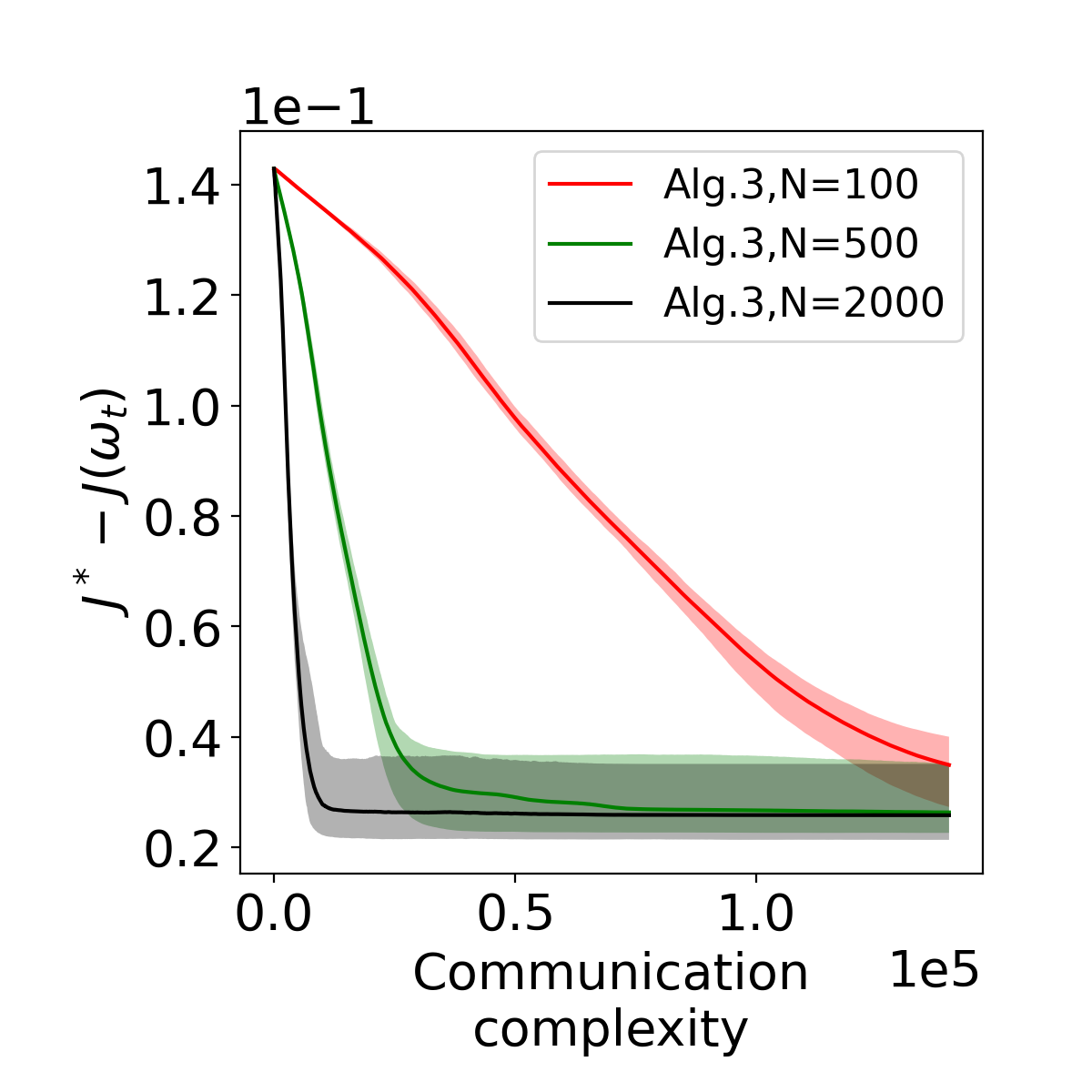}
	\end{subfigure}
	\hspace{-0.53\textwidth}
	\begin{subfigure}[b]{0.270\textwidth}   
		\centering 
		\includegraphics[width=\textwidth]{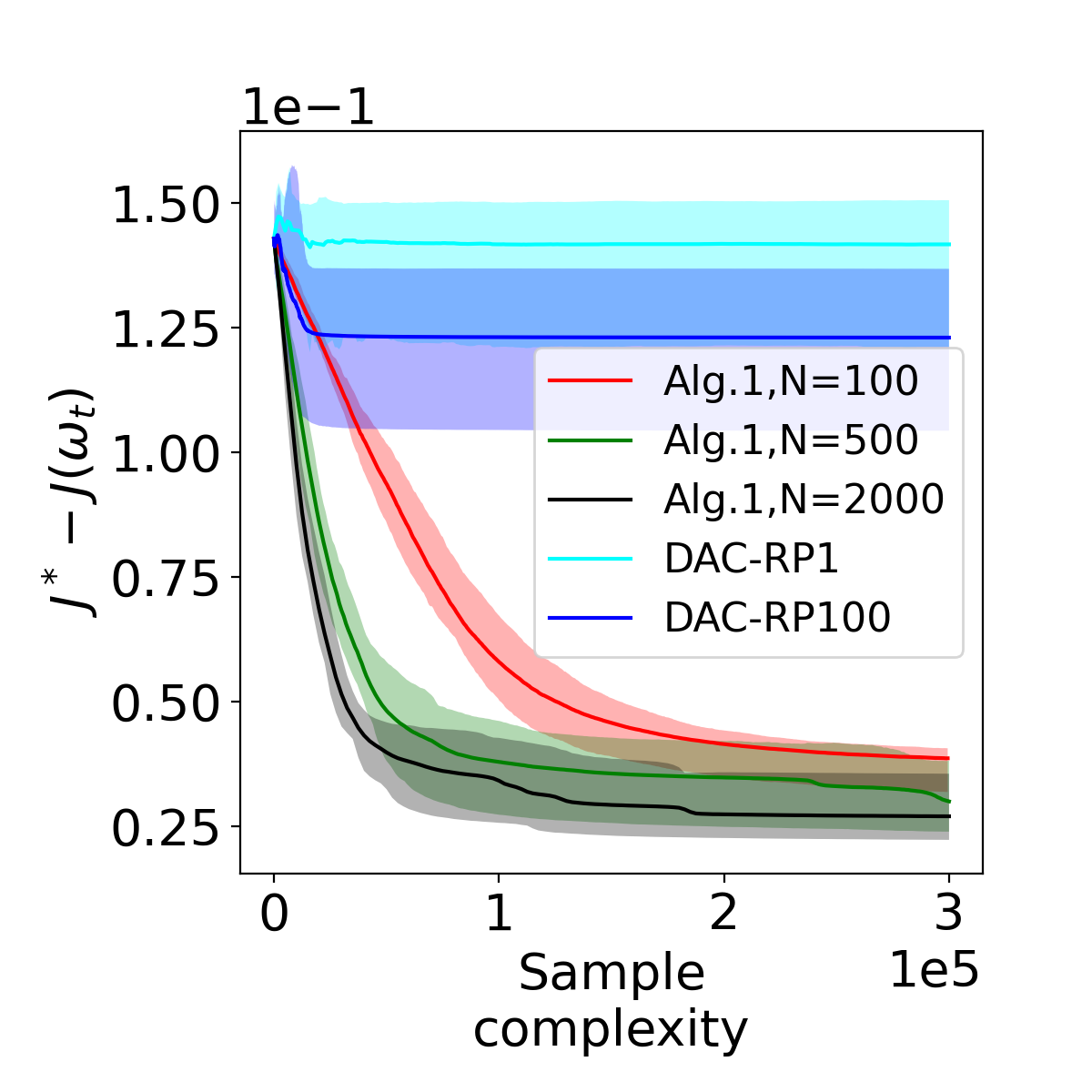}
	\end{subfigure}
	\hspace{-0.53\textwidth}
	\begin{subfigure}[b]{0.270\textwidth}   
		\centering 
		\includegraphics[width=\textwidth]{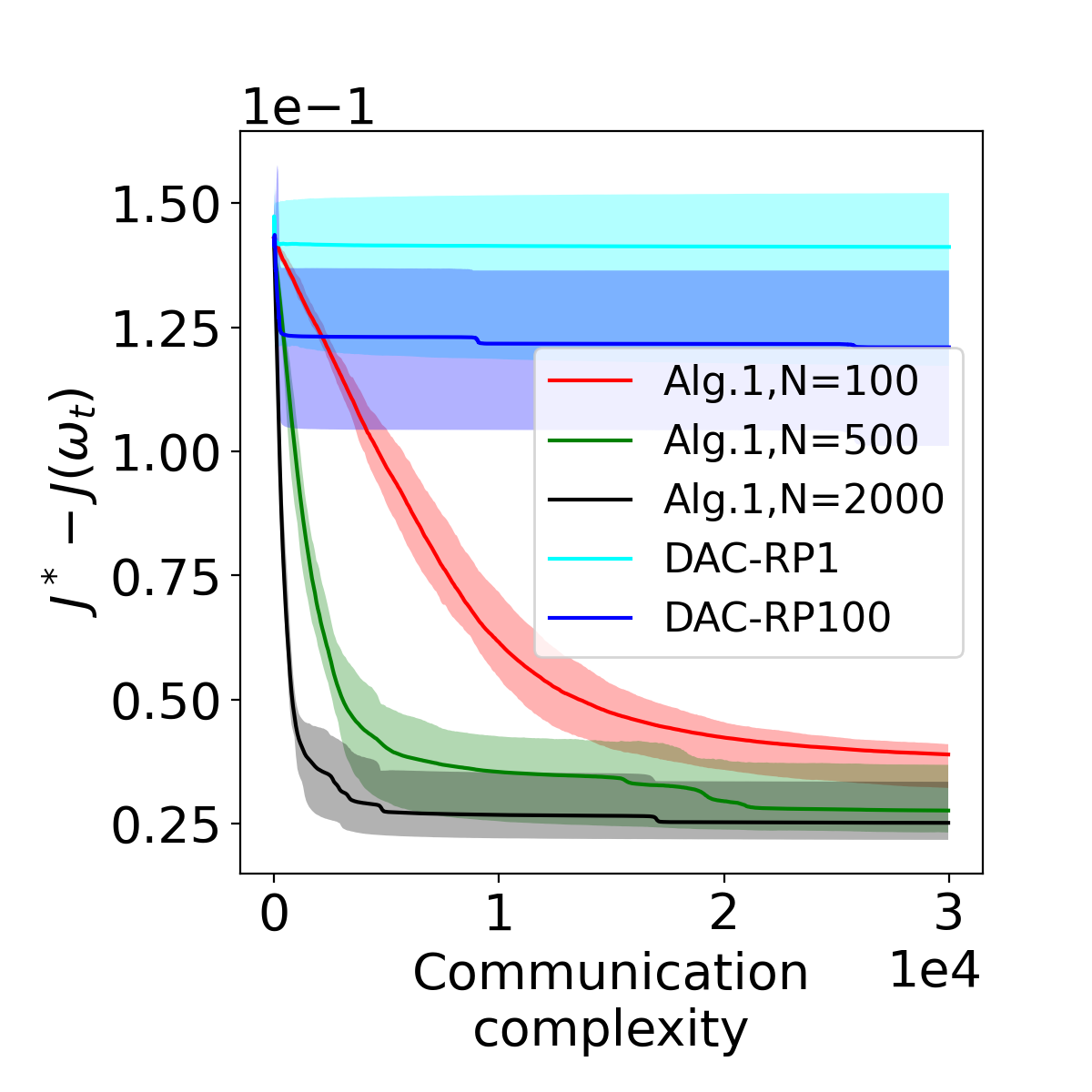}
	\end{subfigure}
	\vspace{-1mm}
	\caption{Comparison of optimality gap $J(\omega^*)-J(\omega_t)$ among decentralized AC-type algorithms {in ring network}.}
	\label{fig_Jgap}
	\vspace{-2mm}
\end{figure}



\subsection{Two-agent Cliff Navigation}
\begin{wrapfigure}{r}{.5\textwidth}
    \vspace{-2mm}
	\centering
	\includegraphics[width=0.24\textwidth,height=0.18\textwidth]{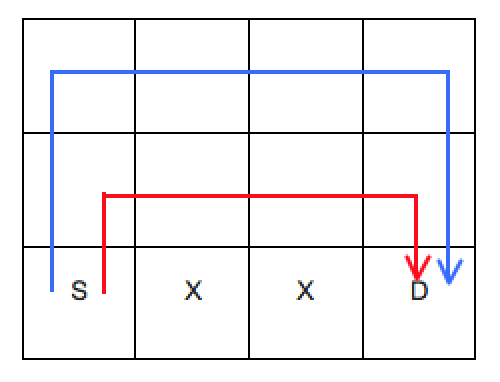}
	\caption{Two-agent cliff navigation. (``S'', ``X'', ``D'' denote starting point, cliff and destination respectively. The optimal path is shown in red.)}
	\label{fig_cliff}
\end{wrapfigure}
In this subsection, we test our algorithms in solving a two-agent Cliff Navigation problem \cite{qiu2021rmix} in a grid-world environment. This problem is adapted from its single-agent version (see Example 6.6 of \cite{sutton2018reinforcement}). As illustrated in \Cref{fig_cliff}, two agents start from the starting point ``S'' on a $3\times 4$ grid and aim to reach the destination ``D''. Here, global state is defined as the joint location of the two agents, and there are in total $(3\times 4)^2=144$ global states. In most states, an agent can choose to move up, down, left or right by one step and receives $-1$ reward. However, once an agent falls into the cliff ``X'', it will return to the starting point ``S'' and receives $-100$ reward. When an agent reaches ``D'', it will always stay at ``D'', and receives $0$ reward if the other agent also reaches/stays at ``D'', or receives $-0.5$ reward otherwise. If an agent is not at ``X'' or ``D'' and selects a direction that points outside the grid, then it stays in the previous location and receives $-1$ reward. The optimal path for both agents is the red path shown in Figure \ref{fig_cliff}, which has the minimum accumulative reward $J^*=-0.1855$ under the discount factor $\gamma=0.95$. 

For our Algorithm \ref{alg: 1}, we choose $T=500$, $T_c=50$, $T_c'=10$, $N_c=10$, $T'=T_z=5$, $\beta=0.5$, $\{\sigma_m\}_{m=1}^6=0.1$, and consider batch size choices $N=100, 500, 2000$. Our Algorithm \ref{alg: 3} uses the same hyperparameters as those of Algorithm \ref{alg: 1} except that we choose $T=2000$. We select $\alpha=1,5,20$ for Algorithm \ref{alg: 1} with $N=100,500,2000$ respectively, and $T_z=5$, $\alpha=0.002,0.01,0.04$, $\eta=0.002,0.01,0.04$, $K=50,100,200$, $N_k\equiv 2,5,10$ for \Cref{alg: 3} with $N=100,500,2000$, respectively. For DAC-RP1, we select $T=10000$, $\beta_v=10(t+1)^{-0.6}$ and $\beta_{\theta}=5(t+1)^{-0.6}$. For DAC-RP100, we use $T=2000$ and batchsizes 100 and 10 for actor and critic updates respectively, and selects constant stepsizes $\beta_v=0.5$, $\beta_{\theta}=1$. This setting is similar to \Cref{alg: 1} with $N=100$ to inspect performance difference between \Cref{alg: 1} and DAC-RP1. 

\begin{figure}[H]
\vspace{-2mm}
	\centering
	\begin{subfigure}[b]{0.270\textwidth}   
		\centering 
		\includegraphics[width=\textwidth]{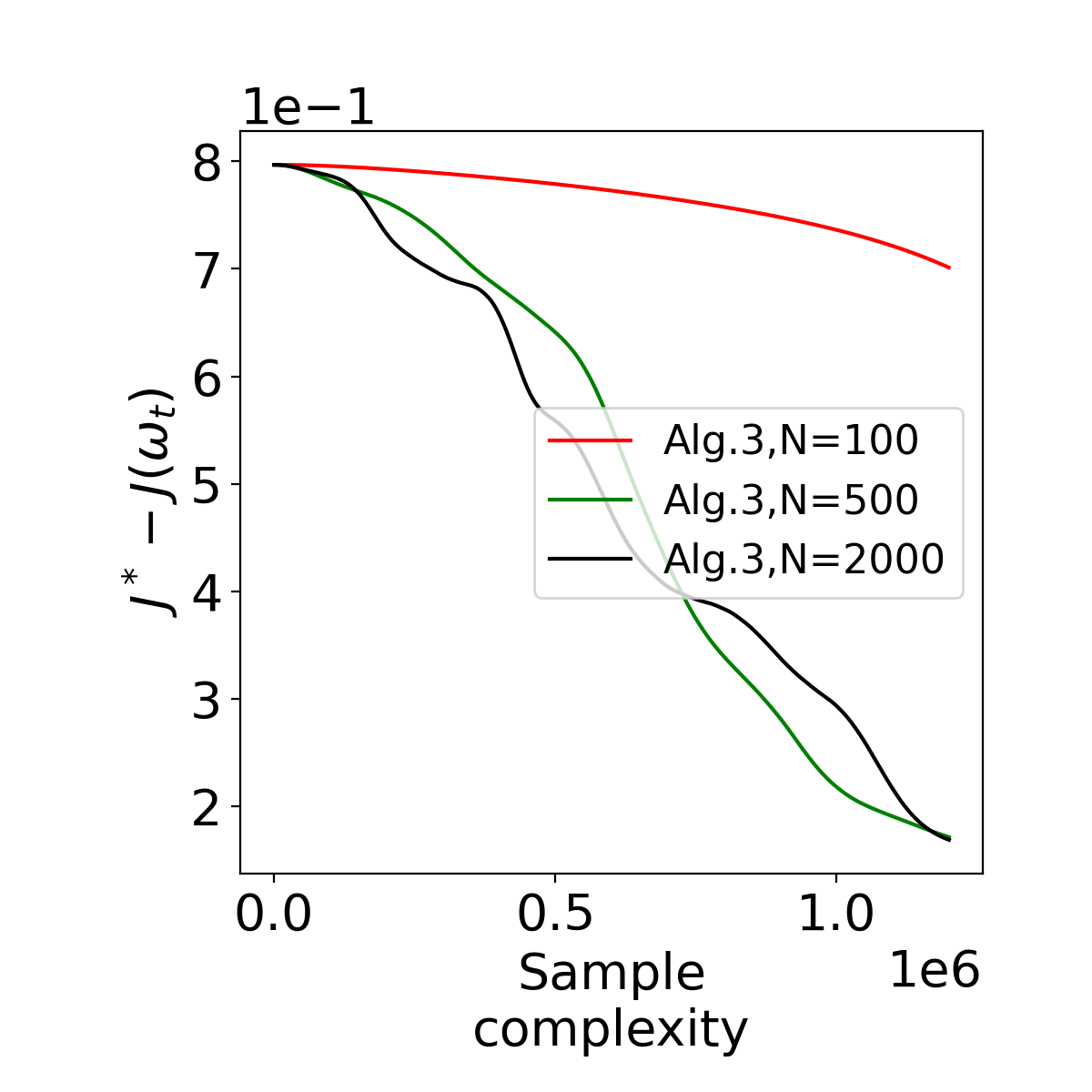}
	\end{subfigure}
	\hspace{-0.53\textwidth}
	\begin{subfigure}[b]{0.270\textwidth}   
		\centering 
		\includegraphics[width=\textwidth]{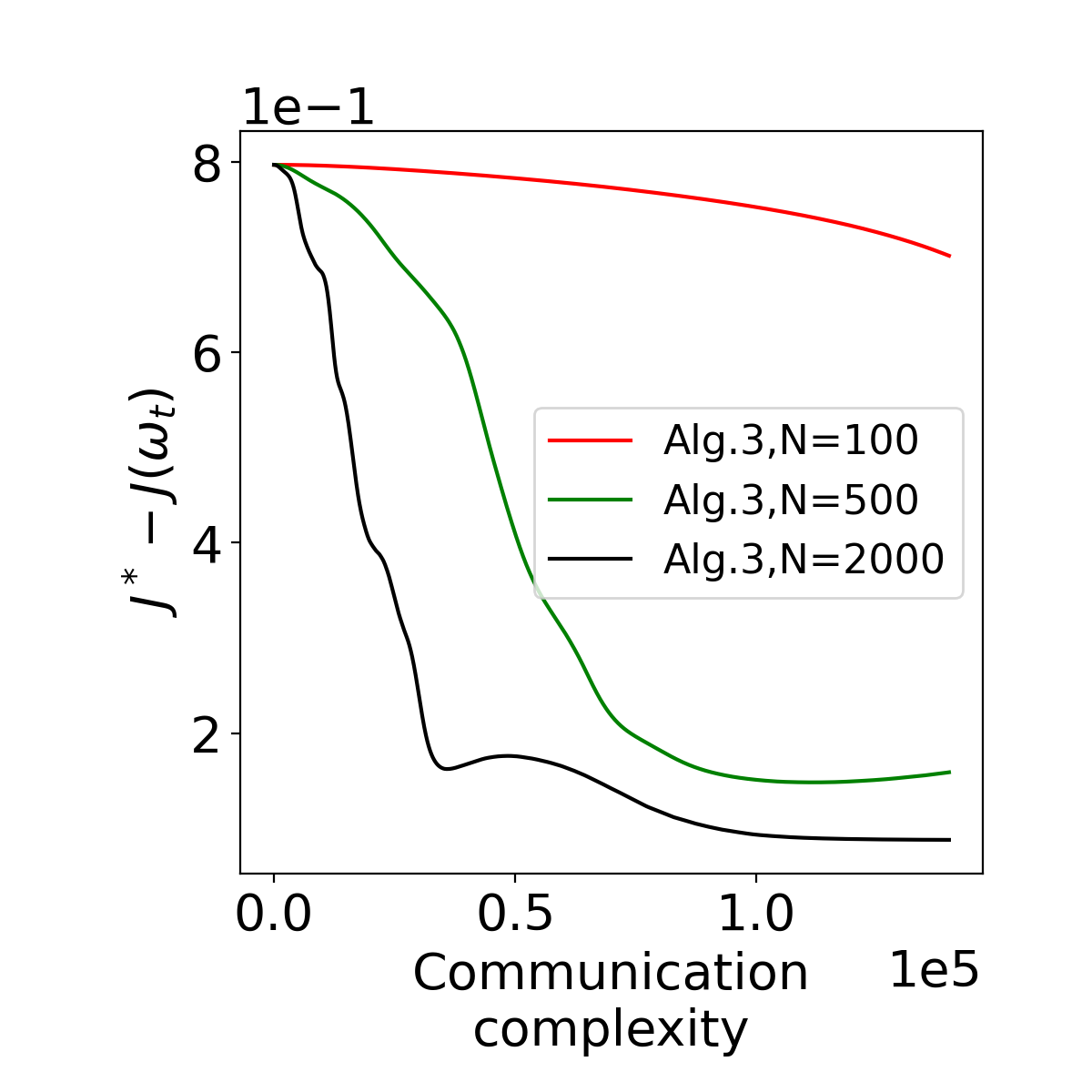}
	\end{subfigure}
	\hspace{-0.53\textwidth}
	\begin{subfigure}[b]{0.270\textwidth}   
		\centering 
		\includegraphics[width=\textwidth]{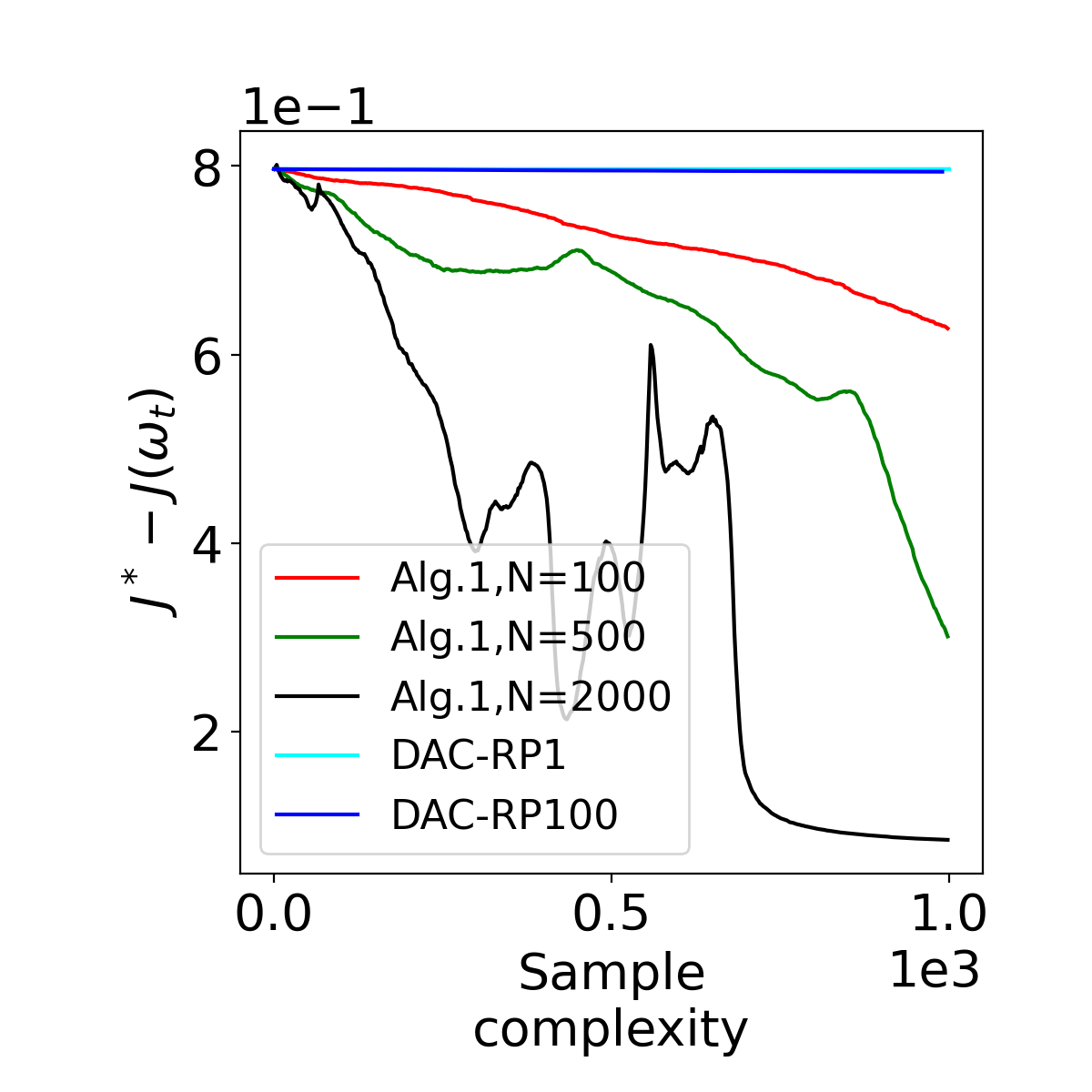}
	\end{subfigure}
	\hspace{-0.53\textwidth}
	\begin{subfigure}[b]{0.270\textwidth}   
		\centering 
		\includegraphics[width=\textwidth]{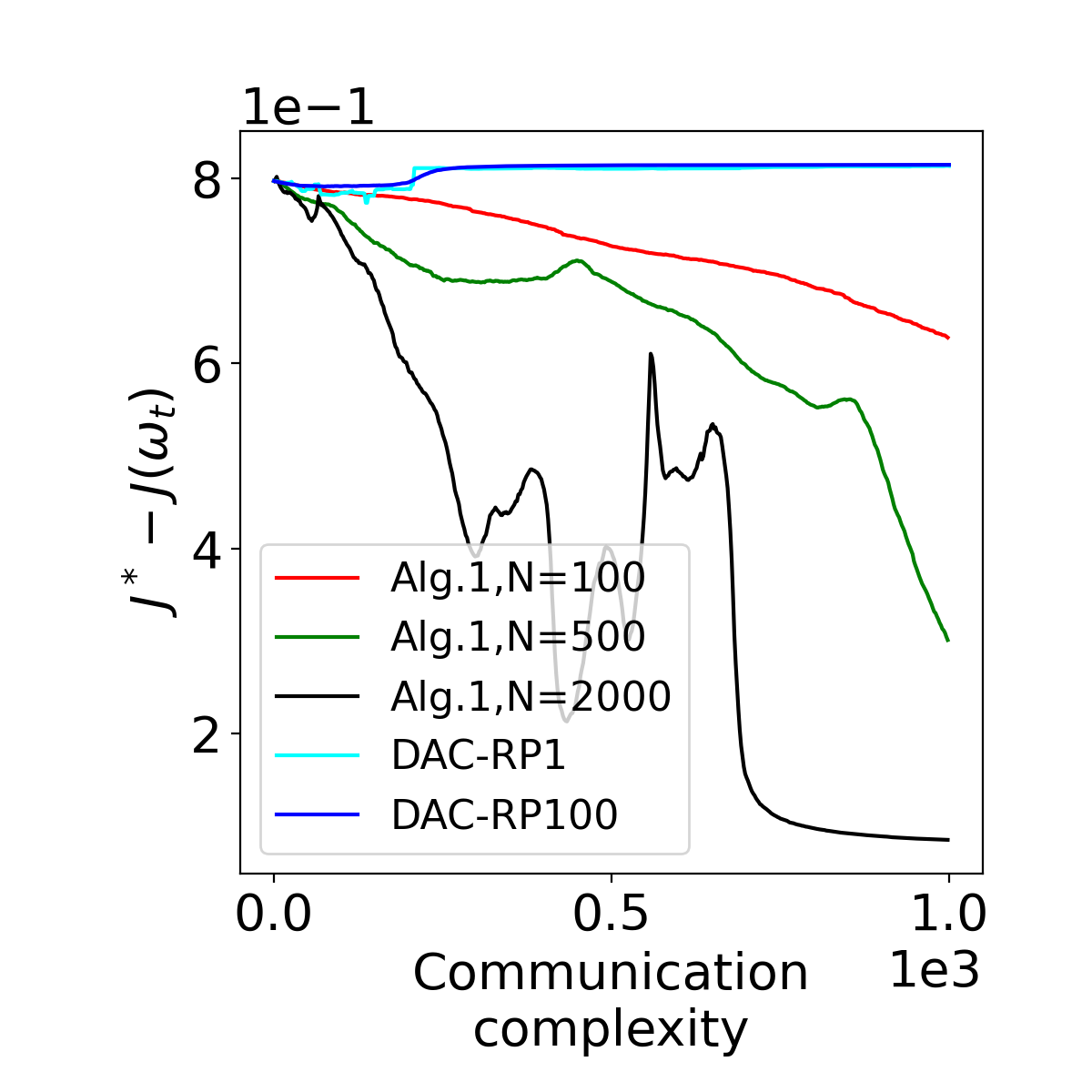}
	\end{subfigure}
	\vspace{-1mm}
	\caption{Comparison of optimality gap $J(\omega^*)-J(\omega_t)$ among decentralized AC-type algorithms on cliff navigation.}
	\label{fig_Jgap_cliff}
	\vspace{-2mm}
\end{figure}

\begin{figure}[H]
\vspace{-2mm}
	\centering
	\begin{subfigure}[b]{0.270\textwidth}   
		\centering 
		\includegraphics[width=\textwidth]{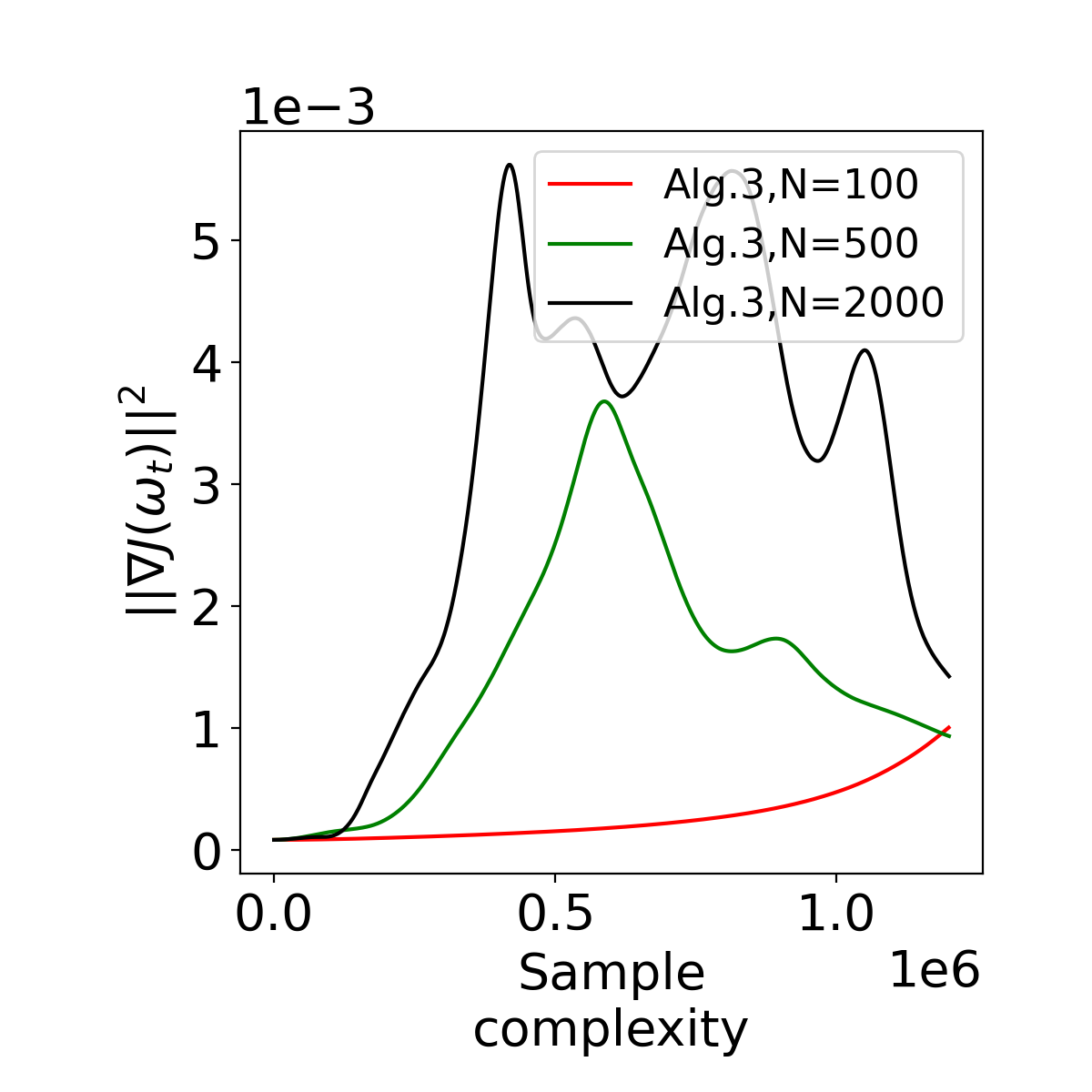}
	\end{subfigure}
	\hspace{-0.53\textwidth}
	\begin{subfigure}[b]{0.270\textwidth}   
		\centering 
		\includegraphics[width=\textwidth]{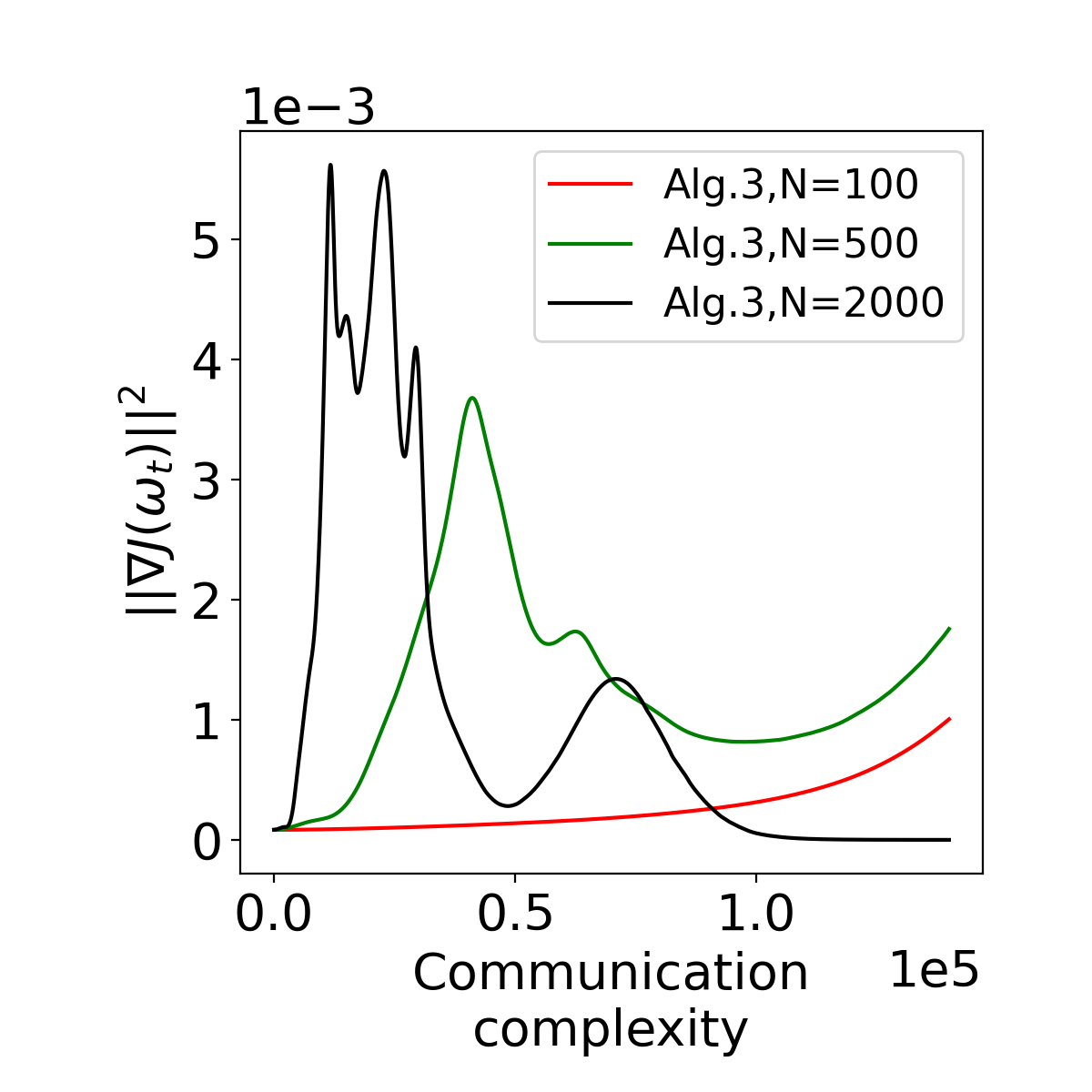}
	\end{subfigure}
	\hspace{-0.53\textwidth}
	\begin{subfigure}[b]{0.270\textwidth}   
		\centering 
		\includegraphics[width=\textwidth]{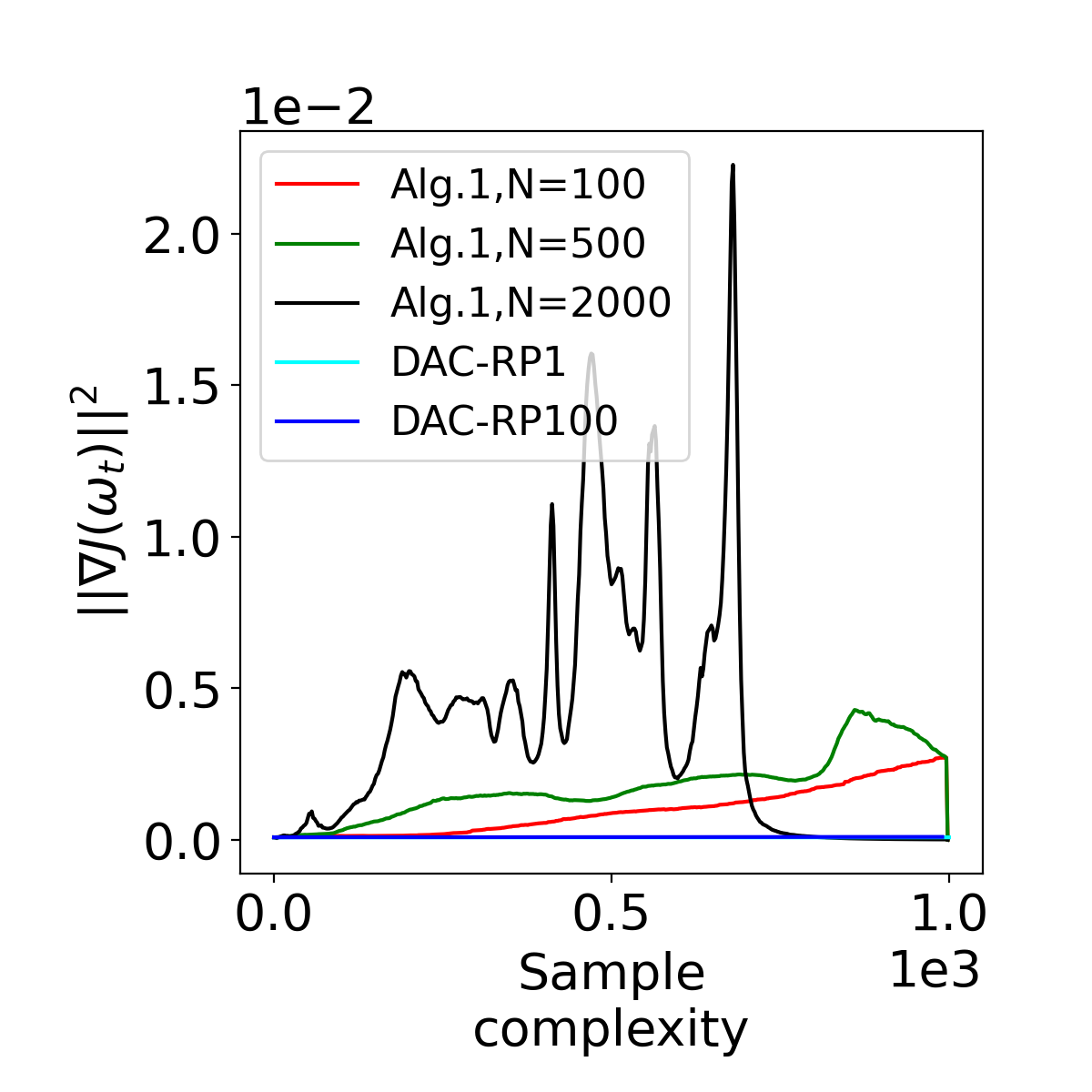}
	\end{subfigure}
	\hspace{-0.53\textwidth}
	\begin{subfigure}[b]{0.270\textwidth}   
		\centering 
		\includegraphics[width=\textwidth]{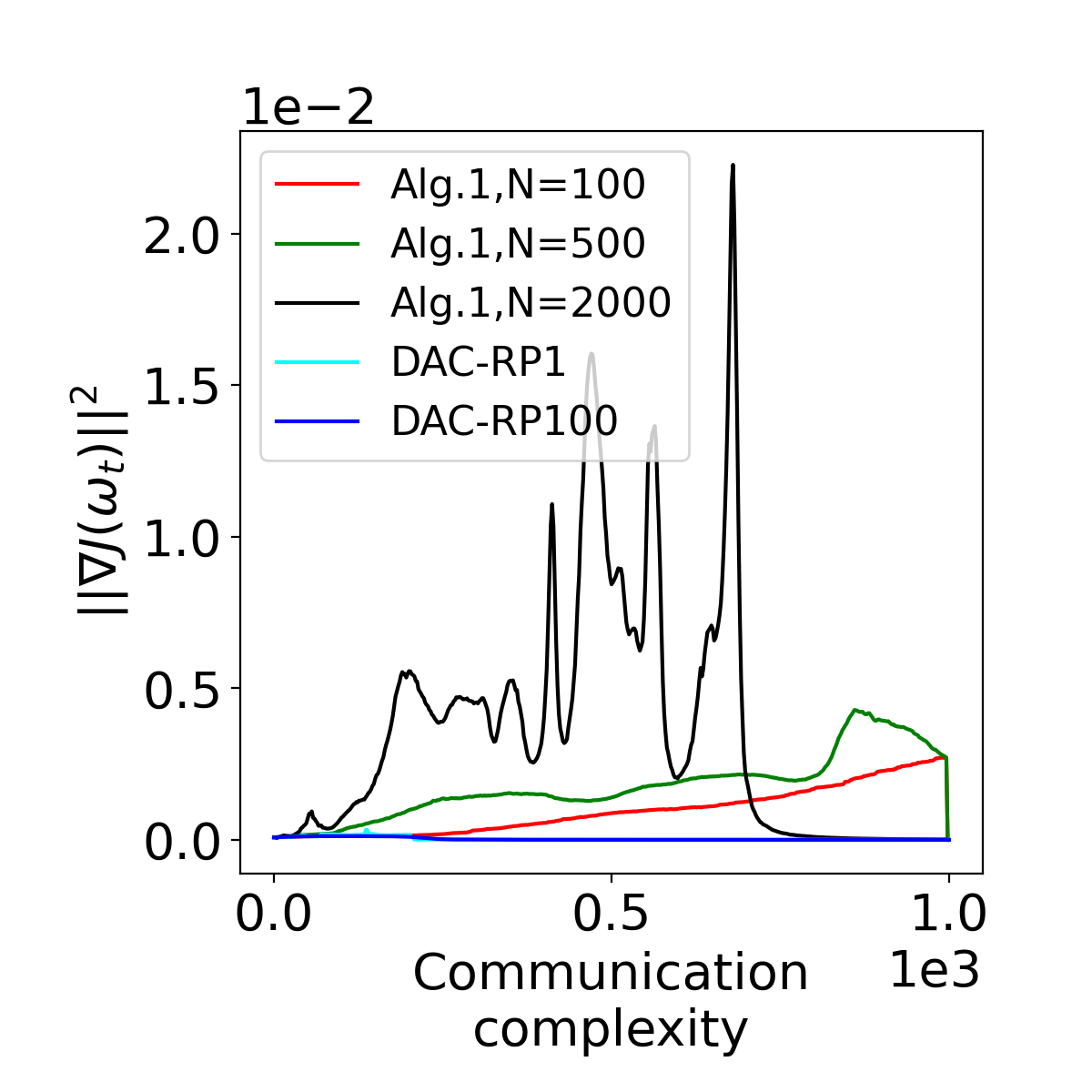}
	\end{subfigure}
	\vspace{-1mm}
	\caption{Comparison of optimality gap $J(\omega^*)-J(\omega_t)$ among decentralized AC-type algorithms on cliff navigation.}
	\label{fig_dJ_cliff}
	\vspace{-2mm}
\end{figure}

We plot $J^*-J(\omega_t)$ and $\|\nabla J(\omega_t)\|$ in Figures \ref{fig_Jgap_cliff} \& \ref{fig_dJ_cliff} respectively. It can be seen from these figures that both our \Cref{alg: 1} \& \Cref{alg: 3} significantly reduce the function value gap $J^*-J(\omega_t)$, and their convergence is faster with a larger batchsize. In contrast, the function value gaps of DAC-RP1 and DAC-RP100 do not decrease sufficiently and converge to a high value. In particular, since DAC-RP100 achieves a larger function value gap than our \Cref{alg: 1} with $N=100$ while their hyperparameter choices are similar, we attribute this performance gap to the inaccurate average reward estimation and TD error, as we analyzed in Appendix \ref{supp:expr_ring}.
}
\section{List of Constants} \label{supp:constants}
The following global constants are frequently used.

$M$: The number of agents.

$\gamma$: Discount rate.

$R_{\max}$: The reward bound such that $0\le R^{(m)}(s,a,s')\le R_{\max}$ for any $s, s'\in\mathcal{S}$ and $a\in\mathcal{A}$ (Assumption \ref{assum_R}). Hence, $0\le \overline{R}^{(m)}(s,a,s'), R_i^{(m)}, \overline{R}_i\le R_{\max}$. 

$\sigma_W\in[0,1)$: The second largest singular value of $W$. 

$\omega^*:=\max_{\omega} J(\omega)$ denotes the optimal policy parameter. \\

The following constants are defined in Lemma \ref{lemma:critic}.

$C_B:=1+\gamma$.

$C_b:=R_{\max}$.

$\lambda_{\phi}:=\lambda_{\min}\big(\mathbb{E}_{s\sim\mu_{\omega}}[\phi(s)\phi(s)^{\top}]\big)>0$ satisfies Assumption \ref{assum_phi}.

$\lambda_B := 2(1-\gamma)\lambda_{\phi}>0$. (Assumption \ref{assum_phi} implies that $\lambda_{\phi}>0$.) 

$R_{\theta}:=\frac{2C_b}{\lambda_{B}}$. \\

The policy-related norm bounds and Lipschitz parameters are defined as follows.

$C_{\psi}, L_{\psi}, L_{\pi}>0$ defined in Assumption \ref{assum_policy}: For all $s\in\mathcal{S}$, $a\in\mathcal{A}$ and $\omega,\widetilde{\omega}$, $\|\psi_{\omega}(a|s)\|\le C_{\psi}$, $\|\psi_{\widetilde{\omega}} (a|s)-\psi_{\omega}(a|s)\|\le L_{\psi}\|\widetilde{\omega}-\omega\|$ and $d_{\text{TV}}\big(\pi_{\widetilde{\omega}}(\cdot|s),\pi_{\omega}(\cdot|s)\big)\le L_{\pi}\|\widetilde{\omega}-\omega\|$.

$L_{\nu}:=L_{\pi}[1+\log_{\rho}(\kappa^{-1})+(1-\rho)^{-1}]$.

$L_{Q}:=\frac{2 R_{\max } L_{\nu}}{1-\gamma}$. 

$L_J:=R_{\max}(4L_{\nu}+L_{\psi})/(1-\gamma)$. 

$D_J:=\frac{C_{\psi}R_{\max}}{1-\gamma}$. 

$L_F:=2C_{\psi}(L_{\pi}C_{\psi}+L_{\nu}C_{\psi}+L_{\psi})$.

$L_h:=2\lambda_{F}^{-1}(D_J\lambda_{F}^{-1}L_F+L_J)$ where $\lambda_{F}:= \inf_{\omega\in\Omega} \lambda_{\min}[F(\omega)]>0$ ($\lambda_{\min}$ denotes the minimum eigenvalue) which satisfies Assumption \ref{assum_fisher}.\\

The following constants are defined to simplify the notations in the proof.

$c_1:= \frac{1920\left(C_{B}^{2} R_{\theta}^2+C_{b}^{2}\right)[1+(\kappa-1) \rho]}{(1-\rho) \lambda_{B}^2}$.

$c_2:=2\big(\frac{2MC_b}{1-\sigma_W}\big)^2$.

$c_3:=2\big(\big\|\theta_{-1}\big\|^{2}+R_{\theta}^2\big)$ where $\theta_{-1}$ is the initial parameter of decentralized TD (Algorithm \ref{alg: 2}).

$c_4:=4MC_{\psi}^2 R_{\max}^2$.

$c_5:=16c_2C_{\psi}^2$.

$c_6:=32Mc_3C_{\psi}^2$.

$c_7:=16C_{\psi}^2 R_{\max}^2\overline{\sigma}^2 + \frac{36C_{\psi}^2(R_{\max}+2R_{\theta})^2(\kappa+1-\rho)}{1-\rho}$.

$c_8:=32Mc_1C_{\psi}^2$.

$c_9:=\frac{18C_{\psi}^4D_J^2(\kappa+1-\rho)}{\lambda_F^2(1-\rho)}+c_7$. 

$c_{10}:= 2\|h_{-1}\|^2 + \frac{14D_J^2}{\lambda_F^2} + \frac{c_9 \lambda_F^2}{C_{\psi}^4}$ where $h_{-1}$ is the initial natural gradient of Algorithm \ref{alg: 3}.

$c_{11}:= \frac{48C_{\psi}^4M^2}{\lambda_F^2}$.

$c_{12}:= \frac{48c_4}{\lambda_F^2}$.

$c_{13}:= \frac{768c_2C_{\psi}^2}{\lambda_F^2}$.

$c_{14}:= \frac{1536Mc_3C_{\psi}^2}{\lambda_F^2}$.

$c_{15}:=\frac{1536Mc_1C_{\psi}^2}{\lambda_F^2}$.

$c_{16}:= \frac{768C_{\psi}^2}{\lambda_F^2}$.

$c_{17}:=\mathbb{E}_{s\sim \nu_{\omega^*}}\big[\text{KL}\big(\pi_{\omega^*}(\cdot|s) || \pi_0(\cdot|s)\big)\big]+\frac{4L_{\psi}C_{\psi}^2 R_{\max}}{\lambda_F^2}$.

$c_{18}:= C_{\psi}\sqrt{c_{10}} + c_{10}L_{\psi}\Big(1+ \frac{4C_{\psi}^4}{\lambda_{F}^2}\Big)$.

$c_{19}:= C_{\psi}\sqrt{c_{11}} + c_{11}L_{\psi}\Big(1+ \frac{4C_{\psi}^4}{\lambda_{F}^2}\Big)$.

$c_{20}:= C_{\psi}\sqrt{c_{12}} + c_{12}L_{\psi}\Big(1+ \frac{4C_{\psi}^4}{\lambda_{F}^2}\Big)$.

$c_{21}:= C_{\psi}\sqrt{c_{13}} + c_{13}L_{\psi}\Big(1+ \frac{4C_{\psi}^4}{\lambda_{F}^2}\Big)$.

$c_{22}:= C_{\psi}\sqrt{c_{14}} + c_{14}L_{\psi}\Big(1+ \frac{4C_{\psi}^4}{\lambda_{F}^2}\Big)$.

$c_{23}:= C_{\psi}\sqrt{c_{15}} + c_{15}L_{\psi}\Big(1+ \frac{4C_{\psi}^4}{\lambda_{F}^2}\Big)$.

$c_{24}:= c_{16}L_{\psi}\Big(1+ \frac{4C_{\psi}^4}{\lambda_{F}^2}\Big)$. 

\end{document}